\documentclass{article}
\usepackage[margin = 1in]{geometry}
\usepackage{amsmath}
 
 \usepackage{amsfonts,amscd,amssymb,bm,bbm,mathrsfs}

\usepackage{amsthm}
\usepackage{mathtools}
\usepackage{algorithm}
\usepackage[noend]{algorithmic}
\usepackage{comment}
 \usepackage{xcolor}
\usepackage{hyperref}

\usepackage{cleveref}
\usepackage{arydshln}
\usepackage{nicematrix}

\hypersetup{
    colorlinks,
    linkcolor={blue!50!black},
    citecolor={blue!50!black},
}
\colorlet{linkequation}{blue}

\def\proj{{\rm proj}}
\def\projshort{{\sf p}}

\def\P{{\mathbb P}}
\def\Q{{\mathbb Q}}
\def\T{{\mathbb T}}

\newtheorem{theorem}{Theorem}
\newtheorem*{theorem*}{Theorem}
\newtheorem{lemma}{Lemma}

\newtheorem*{remark*}{Remark}
\newtheorem*{lemma*}{Lemma}
\newtheorem{corollary}{Corollary}
\newtheorem{proposition}[theorem]{Proposition}
\newtheorem{definition}{Definition}

\newtheorem{assumption}{Assumption}
\newtheorem{example}{Example}

\newenvironment{mytheorem}[1][Theorem]{%
    \par\noindent\textbf{#1.} \itshape  
}{\par}

\def\argmin{\mathrm{argmin}}

\def\R{{\mathbb R}}
\def\E{{\mathbb E}}
\def\bzero{{\boldsymbol 0}}

\def\cN{{\mathcal N}}

\newcommand{\nrmp}[1]
{{|\!|\!|{#1}|\!|\!|}}

\def\eps{{\varepsilon}}
\def\cW{{\mathcal W}}
\newcommand{\what}{\widehat}
\def\diag{{\rm diag}}

\def\cP{{\mathcal P}}
\def\cF{{\mathcal F}}

\newcommand{\<}{\langle}
\renewcommand{\>}{\rangle}

\def\cU{{\mathcal U}}

\def\KL{\mathsf{KL}}
\def\TV{\mathsf{TV}}

\def\bA{{\boldsymbol A}}

\def\bE{{\boldsymbol E}}

\def\bW{{\boldsymbol W}} 
\def\bX{{\boldsymbol X}} 
\def\bY{{\boldsymbol Y}}

\def\bGamma{{\boldsymbol \Gamma}}


\def\bu{{\boldsymbol u}} 
\def\bv{{\boldsymbol v}} 
 
\def\bx{{\boldsymbol x}} 
\def\by{{\boldsymbol y}} 
\def\bz{{\boldsymbol z}} 

\def\btheta{{\boldsymbol \theta}}
\def\bmu{{\boldsymbol \mu}}

\def\cV{{\mathcal V}}
\def\cE{{\mathcal E}}

\def\cX{{\mathcal X}}
\def\cY{{\mathcal Y}}
\def\cS{{\mathcal S}}

\def\softmax{{\rm softmax}}

\def\emprisk{{\widehat {\sf R}}}

\def\tr{{\rm trace}}

\newcommand{\indicator}{\mathbbm{1}}


\newcommand{\bigO}{\mathcal{O}}











\newcommand{\term}{U}


\newcommand{\indep}{\perp\!\!\!\perp}



\newcommand{\sphere}{\ensuremath{\mathbb{S}}}




\newcommand{\defn}{\coloneqq}


\newcommand{\matsnorm}[2]{|\!|\!| #1 | \! | \!|_{{#2}}}

\newcommand{\vecnorm}[2]{\| #1\|_{#2}}

\newcommand{\fronorm}[1]{\ensuremath{\matsnorm{#1}{\footnotesize{\mbox{fro}}}}}
\newcommand{\opnorm}[1]{\ensuremath{\matsnorm{#1}{\tiny{\mbox{op}}}}}


\newcommand{\inprod}[2]{\ensuremath{\langle #1 , \, #2 \rangle}}





\newcommand{\Term}{{T}}

\newcommand{\polyshort}{{C}}

\newcommand{\myunder}[1]{\noindent{\underline{#1}}}

\newcommand{\perturb}{{h}}

\newcommand{\revdef}{\eqqcolon}

\newcommand{\boundscoreconst}{B_{\sf S}}
\newcommand{\boundscoreconstcsq}{\widebar B_{\sf S}}

\newcommand{\EstPar}{{\widehat\btheta}}
\newcommand{\TruePar}{{\btheta_\star}}
\newcommand{\ols}{{\sf ols}}

\newcommand{\bigOtil}{\widetilde{\mathcal{ O}}}

\newcommand{\simiid}{\sim_{iid}}





\newcommand{\linkfun}{\tau}
\newcommand{\linkfunlip}{\bound_\tau}

\newcommand{\Sam}{\bx}
\newcommand{\Sammat}{\bX}
\newcommand{\respvec}{\bY}
\newcommand{\noisevec}{\mathcal{E}}

\newcommand{\Samview}{\widebar\bz}
\newcommand{\Samviewmain}{\bz}

\newcommand{\Samviewone}{\bz^{(1)}}
\newcommand{\Samviewtwo}{\bz^{(2)}}

\newcommand{\Samviewtil}{\widetilde\bz}
\newcommand{\Samviewonetil}{\widetilde\bz^{(1)}}
\newcommand{\Samviewtwotil}{\widetilde\bz^{(2)}}
\newcommand{\short}{\cU}
\newcommand{\shortsim}{\cV}

\newcommand{\shorttil}{\widetilde\cU}
\newcommand{\shortsimtil}{\widetilde\cV}
\newcommand{\generr}{\textup{generalization error}}
\newcommand{\apperr}{\textup{approximation error}}

\newcommand{\Samspace}{\cX}
\newcommand{\respspace}{\cY}
\newcommand{\bmeasure}{\bmu}

\newcommand{\Viewspace}{\mathcal{Z}}

\newcommand{\Distsam}{\P_{\cX}}
\newcommand{\Distview}{\P_{\Samviewmain}}

\newcommand{\Disttran}{\P_{\mathcal G}}
\newcommand{\batchsize}{K}
\newcommand{\samsize}{n}
\newcommand{\numbatch}{{n_1}}

\newcommand{\simclr}{{\sf simclr}}
\newcommand{\chisq}{{\sf chisq}}

\newcommand{\tranfun}{g}
\newcommand{\tranfunone}{g^{(1)}}
\newcommand{\tranfuntwo}{g^{(2)}}
\newcommand{\trun}{{\sf trun}}

\newcommand{\transpace}{\mathcal G}
\newcommand{\fun}{ f}
\newcommand{\aug}{ {\sf aug}}

\newcommand{\funaug}{ f_\aug}

\newcommand{\afun}{ \widetilde f}

\newcommand{\truefun}{  f_\star}

\newcommand{\estfun}{ \widehat f}
\newcommand{\estfunaug}{ \widehat f_\aug}

\newcommand{\funspace}{ \cF}

\newcommand{\boundfun}{ B_{\fun}}
\newcommand{\boundsam}{ B_{\Sam}}
\newcommand{\boundPar}{ B_{\Par}}
\newcommand{\boundclsgen}{ B}

\newcommand{\dimfun}{ p}

\newcommand{\truesimscorecsq}{ {\sf S}^{}_{\star}}

\newcommand{\simscorep}[1]{ {\sf S}_{#1}}

\newcommand{\process}{X}

\newcommand{\Covernum}{\cN}
\newcommand{\metric}{\rho}

\newcommand{\hypo}{h}
\newcommand{\dstfun}{{\sf h}}

\newcommand{\dstfuntil}{\widetilde{\sf h}}
\newcommand{\dstfunbar}{\widebar{\sf h}}

\newcommand{\truehypo}{h_\star}

\newcommand{\hypomin}{ h_{\min}}
\newcommand{\dimlin}{d}
\newcommand{\dstsam}{m}

\newcommand{\resp}{\by}
\newcommand{\boundthypo}{\bound_{\truehypo}}

\newcommand{\drisk}[1]{{\sf R}(#1)}

\newcommand{\drisktran}[1]{{\sf R_{\transpace}}(#1)}
\newcommand{\drisktrancls}[1]{{\sf R^{\sf cls}_{\transpace}}(#1)}

\newcommand{\dsterr}{\epsilon_{\transpace}}
\newcommand{\dsterrcls}{\epsilon^{\sf cls}_{\transpace}}

\newcommand{\lin}{{\sf lin}}

\newcommand{\supp}{{\sf supp}}

\newcommand{\myspan}{{\sf span}}

\newcommand{\dsterrtil}{\widetilde\epsilon_{\transpace}}

\newcommand{\noise}{\varepsilon}
\newcommand{\linmap}{\bW}
\newcommand{\estlinmap}{\widehat\bW}

\newcommand{\linmapspace}{\cW}
\newcommand{\imgmapspace}{\cW}

\newcommand{\boundlinmap}{\bound_\bW}
\newcommand{\boundimgmap}{\bound_\bW}

\newcommand{\mineigrep}{\sigma_{\trueimgrep}}

\newcommand{\linparam}{\boldsymbol{\eta}}
\newcommand{\linparamtil}{\boldsymbol{\widetilde\eta}}

\newcommand{\estlinparam}{\widehat\linparam}

\newcommand{\lintran}{\bA}
\newcommand{\idmat}{{\rm I}}
\newcommand{\linlow}{{\sf U}}

\newcommand{\lintranstd}{{\sigma}}
\newcommand{\lindststd}{{\widebar\sigma}}

\newcommand{\lintrannoise}{\eta}

\newcommand{\VFS}{{\bf VFS}}
\newcommand{\CBS}{{\bf CBS}}

\newcommand{\ILS}{{\bf ILS}}

\newcommand{\clsins}{\boldsymbol{y}}
\newcommand{\clsset}{\cY}
\newcommand{\Distcls}{\P_\cY}
\newcommand{\Distclsjoint}{\P}

\newcommand{\img}{\bx}
\newcommand{\imgparam}{\bGamma}
\newcommand{\imgparamtil}{\widetilde\bGamma}

\newcommand{\imgparamsp}{{\sf\Gamma}}

\newcommand{\imgparamw}{\bGamma_{w}}
\newcommand{\imgparamwtil}{\widetilde\bGamma_{w}}

\newcommand{\imgparamb}{\bGamma_{b}}
\newcommand{\imgparambtil}{\widetilde\bGamma_{b}}

\newcommand{\estimgparam}{\widehat\bGamma}
\newcommand{\estimgparamw}{\widehat\bGamma_w}
\newcommand{\estimgparamb}{\widehat\bGamma_b}

\newcommand{\boundimgp}{\bound_\Gamma}

\newcommand{\imgstate}{S}

\newcommand{\projection}{\cP}
\newcommand{\onevec}{{\bf 1}}

\newcommand{\Distimgone}{\P_c}

\newcommand{\imgmap}{\bW}
\newcommand{\imgshort}{\Delta}

\newcommand{\estimgmap}{{\widehat\bW}}

\newcommand{\imgwei}{{w}}
\newcommand{\estimgwei}{{\widehat w}}
\newcommand{\tempbound}{\bound}

\newcommand{\trueimgrep}{{\bE_\star}}

\newcommand{\estimgrep}{{\widehat\bE}}

\newcommand{\imgc}[1]{\img^{c_{#1}}}

\newcommand{\clsnum}{M}
\newcommand{\clsnumgen}{{\sf K}}
\newcommand{\Pdiagcls}{{{\boldsymbol{P}}_\clsset}}

\newcommand{\fdivfun}{{\mathrm{f}}}
\newcommand{\kl}{{\sf kl}}
\newcommand{\csquare}{{\chi^2}}

\newcommand{\mutinfo}[1]{I_{#1}}

\newcommand{\stat}{T}
\newcommand{\vform}{R}
\newcommand{\simscoretil}{{\widetilde \simscore}}
\newcommand{\simscorebar}{{\widebar \simscore}}

\newcommand{\Renyi}[1]{{\mathsf D}_{\rm #1}}





\usepackage{enumitem}
\usepackage{mathabx}
\usepackage{cleveref}
\usepackage{subcaption}
\usepackage{booktabs}

\setlist{leftmargin=5.5mm}
\setlist[itemize]{noitemsep}

\def\image{{\rm im}}

\def\xi{{\bx_{\image}}}

\def\KL{{\mathsf D}_{\rm KL}}
\def\TV{{\mathsf D}_{\rm TV}}

\def\risk{{\sf R}}
\def\orisk{\overline{\sf R}}
\def\simscore{{\sf S}}
\def\truesimscore{{\sf S_\star}}

\def\const{{\rm const}}
\def\suffmeasure{{\rm Suff}}
\def\suffmeasurechi{{\rm Suff}_{\csquare}}
\def\cls{{\rm cls}}

\def\cS{{\mathcal S}}




\newcommand{\Par}{{\boldsymbol{\theta}}}

\newcommand{\bound}{B}

\def\Suff{{\rm Suff}}
\def\il{{\rm il}}

\def\cb{{\rm cb}}
\def\vf{{\rm vf}}

\iftrue
\makeatletter
\renewcommand{\paragraph}{%
  \@startsection{paragraph}{4}%
  {\z@}{0.8ex \@plus 1ex \@minus .2ex}{-1em}%
  {\normalfont\normalsize\bfseries}%
}
\makeatother
\fi

\title{A Statistical Theory of Contrastive Learning via Approximate Sufficient Statistics}

\date{}

\begin{document}

\author{Licong Lin\thanks{Department of Statistics, UC Berkeley. Email: \texttt{liconglin@berkeley.edu}.}  \and Song Mei\footnotemark[1] \thanks{Department of Statistics and Department of EECS, UC Berkeley. Email: \texttt{songmei@berkeley.edu}.} 
}

\maketitle

\begin{abstract}%
Contrastive learning---a modern approach to extract useful representations from unlabeled data by training models to distinguish similar samples from dissimilar ones---has driven significant progress in foundation models.  
In this work, we develop a new theoretical framework for analyzing data augmentation-based contrastive learning, with a focus on SimCLR as a representative example. Our approach is based on the concept of \emph{approximate sufficient statistics}, which we extend beyond its original definition in~\cite{oko2025statistical} for contrastive language-image pretraining (CLIP) using KL-divergence. We generalize it to equivalent forms and general $\fdivfun$-divergences, and show that minimizing SimCLR and other contrastive losses yields encoders that are approximately sufficient. Furthermore, we demonstrate that these near-sufficient encoders can be effectively adapted to downstream regression and classification tasks, with performance depending on their sufficiency and the error induced by data augmentation in contrastive learning. Concrete examples in linear regression and topic classification are provided to illustrate the broad applicability of our results.


\end{abstract}






\section{Introduction}
Leveraging massive unlabeled data to learn useful representations has played a central role in recent advances in foundation models. A prominent approach of this kind is contrastive learning, which has driven significant progress in visual representation learning~\cite{chen2020simple, he2020momentum}, large-scale speech processing~\cite{baevski2020wav2vec}, and multimodal AI~\cite{radford2021learning, li2023blip}. 

In short, contrastive learning finds useful representations of the data by maximizing similarity between paired samples while minimizing it for non-paired samples. Consider SimCLR~\cite{chen2020simple} for visual representation learning as an illustrative example.  Given a dataset of images $\Sam\in\Samspace$, SimCLR generates two augmented views  $(\Samviewone,\Samviewtwo)\in\Samspace\times\Samspace$ for each image $\Sam$  using random transformations (i.e., data augmentations) such as random cropping, random color distortions, and random Gaussian blur, etc. It then trains an encoder $\fun$  that aligns the paired views and separates the non-paired views through minimizing the  loss in Eq.~\eqref{eq:simclr_risk}. The learned representation $\fun(\Sam)$ (or $\fun(\Samviewone)$) can then be adapted to downstream tasks with few labeled samples and minimal fine-tuning.

Despite its remarkable empirical performance, the theoretical aspects of contrastive learning remain an active area of study~\cite{saunshi2019theoretical,oko2025statistical}. In this work, we present a theoretical analysis of data augmentation-based contrastive learning, with a specific focus on the SimCLR framework~\cite{chen2020simple} as a representative example. 
Notably, recent work by~\cite{oko2025statistical} has introduced new theoretical insights into contrastive language-image pretraining (CLIP). They first introduced the concept of approximate sufficient statistics, showing that the image and text encoders obtained from the empirical risk minimizer of CLIP are approximately sufficient. Additionally, under the joint graphical hierarchical model (JGHM) assumption for image and text data, they demonstrated that such encoders can be efficiently adapted to various downstream multimodal tasks.

Our work complements and extends the work by~\cite{oko2025statistical} in two key ways. 
\vskip-1em
\begin{itemize}

\item [(1)] We extend the concept of approximate sufficient statistics, which was originally defined for CLIP in a specific form based on KL-divergence,  to three equivalent forms and general $\fdivfun$-divergences. Based on the equivalent forms of the definition, we establish that minimizing the contrastive loss (e.g., the InfoNCE loss~\cite{oord2018representation}) is essentially finding approximate sufficient statistics that are adaptable to downstream tasks.
\item[(2)] We focus on data augmentation-based contrastive learning following the SimCLR framework. In contrast to CLIP, the random transformations in SimCLR introduce additional challenges for theoretical analysis.  We show that the downstream performance of the learned encoder depends on its sufficiency and the  error induced by the random transformations.
 Furthermore, motivated by the generalized definition of approximate sufficient statistics, we 
theoretically demonstrate that encoders trained using  alternative contrastive losses can achieve  similar downstream performance to those trained using standard SimCLR. 
\end{itemize}
The remainder of this work is organized as follows. Section~\ref{sec:related_work} reviews related literature. In Section~\ref{sec:approx_suff_stat}, we introduce the concept of approximate sufficient statistics. Sections~\ref{sec:simclr_erm_analysis}--\ref{sec:simclr_downstream} present the setup of data augmentation-based contrastive learning and analyze the downstream performance of the SimCLR-trained encoder. In Section~\ref{sec:alter_contastive_loss},  we extend our analysis to general $\fdivfun$-contrastive losses. Examples in linear regression and topic classification are  discussed in Section~\ref{sec:example}. We also conduct synthetic experiments to compare contrastive learning losses in Section~\ref{sec:experiments}.





\section{Related work}\label{sec:related_work}
\paragraph{Self-supervised learning and contrastive learning.}
Self-supervised learning (SSL) dates back to the early work of~\cite{de1993learning}, which leverages cross-modality information as a self-supervised substitute for labels to improve classification performance. In the past decade, SSL has been explored in image classification through various data augmentations, including rotation~\cite{gidaris2018unsupervised}, colorization~\cite{zhang2016colorful}, and Jigsaw puzzles~\cite{noroozi2016unsupervised}. More recently, contrastive learning based on paired and non-paired samples has emerged as a prominent approach in SSL~\cite{he2020momentum,chen2020simple,grill2020bootstrap,jia2021scaling,radford2021learning}. Notably, SimCLR~\cite{chen2020simple} learns image representations by minimizing the InfoNCE loss~\cite{oord2018representation} on randomly augmented views of images, while CLIP~\cite{radford2021learning} does so on paired and non-paired image-text samples.

\paragraph{Choices of the loss function.} Various loss functions have been used in contrastive learning, including NCE~\cite{gutmann2010noise}, InfoNCE~\cite{oord2018representation},~Multi-class N-pair loss~\cite{sohn2016improved},~SigLIP~\cite{zhai2023sigmoid},~f-MICL~\cite{lu2024f}. These losses utilize cross-entropy and its variants to distinguish paired from non-paired samples.  Most relevant to our work is the InfoNCE loss~\cite{oord2018representation}, which is derived based on the InfoMax principle~\cite{linsker1988self,hjelm2018learning}.

\paragraph{Theoretical understanding of contrastive learning.}
Thus far, there is a rich body of literature on the theoretical understanding of self-supervised learning~\cite{saunshi2019theoretical,  poole2019variational, tian2020contrastive,wang2020understanding, von2021self, nozawa2021understanding,  zimmermann2021contrastive, ash2021investigating, tosh2021contrastive1, tosh2021contrastive2, haochen2021provable, huang2021towards, wen2021toward,lee2021predicting, wang2022chaos, daunhawer2023identifiability, shi2023understanding,  shi2023trade,nakada2023understanding,   shwartz2024compress, van2024tight,lu2024f,oko2025statistical}.
Notably, early works~\cite{saunshi2019theoretical, wang2020understanding, ash2021investigating} derived generalization error bounds for downstream classification tasks, using linear classifiers trained on representations learned by minimizing the InfoNCE loss.
 \cite{wang2020understanding} explained contrastive learning through alignment (pulling paired samples together) and uniformity (separating non-paired samples). \cite{zimmermann2021contrastive} showed that InfoNCE minimization can implicitly learn the inverse of the data-generating function. \cite{tosh2021contrastive1} demonstrated that contrastive learning recovers document representations that reveal topic posterior information in a document classification problem.
More recently,~\cite{van2024tight} derived new PAC-Bayes bounds on the generalization error of SimCLR using bounded difference concentration and applied them to downstream linear classification. Compared with their results, our generalization error bound in Theorem~\ref{thm:main_simclr_generalization} is independent of  the batch size $\batchsize$ and  thus allows for large or full-batch learning.
The most related work to ours is~\cite{oko2025statistical}, which introduced the concept of approximate sufficiency to assess the quality of representations. They also 
demonstrated that the learned representation from CLIP~\cite{radford2021learning} can be effectively adapted to 
several multimodal downstream tasks in a joint hierarchical graphical model.


Our work differs from existing theories of contrastive learning in  several aspects: 
(1) Similar to~\cite{oko2025statistical}, we derive more refined ``excess risk bounds" instead of the ``absolute risk bounds"   established under structural conditions for downstream tasks in many prior works.
(2) We derive novel unified risk bounds  for downstream tasks that depend solely on the sufficiency of the encoder and the error induced by data augmentation. (3)  We extend the concept of approximate sufficient statistics and theoretically analyze a broader class of  contrastive losses.

\section{Approximate sufficient statistics}\label{sec:approx_suff_stat}

Before diving into the analysis of contrastive learning, we first introduce the concept of approximate sufficient statistics, which provides a novel viewpoint for characterizing the quality of encoders  $\fun$ used in contrastive learning.
Let $\fdivfun: \R_{+}\mapsto\R$ be a convex function such that $\fdivfun(1)=0$. For random variables $(X, Y)$ on $\Samspace\times \respspace$ with joint density $\P(x, y)$   with respect to some  measure $\bmeasure$ \footnote{For example,  $\bmeasure$ can be the Lebesgue measure on Euclidean spaces, or the counting measure on discrete spaces.}, we define the $\fdivfun$-mutual information ($\fdivfun$-MI) as 
\begin{align*}
\mutinfo{\fdivfun}(X, Y) = \int \fdivfun\Big( \frac{\P(x, y)}{\P(x) \P(y)} \Big) \P(x) \P(y) d\bmeasure. 
\end{align*}
Note that the $\fdivfun$-MI is essentially the $\fdivfun$-divergence between the joint distribution and the product of marginal distributions. It is non-negative and symmetric in $X$ and $Y$. Moreover, provided that $\fdivfun$ is strictly convex,  $\mutinfo{\fdivfun}(X, Y ) = 0$ if and only if $X$ and $Y$ are independent. Let $(X, Y)$ be random variables that have the joint density $\P({X, Y})$ ($Y$ could be thought as the parameter $\theta$ in Bayesian statistics). For any statistic $\stat:\Samspace\mapsto\stat(\Samspace)$, to characterize the information loss of using $\stat(X)$ instead of $X$ for predicting $Y$, we introduce the following definition of the sufficiency of $\stat(X)$.

\begin{definition}[Approximate sufficiency]\label{def:general_app_suff}
Let $\stat: \Samspace \to \stat(\Samspace)$ be a mapping (i.e., a statistic). We define three forms of sufficiency of $\stat$, which will be shown to be equivalent: 
\begin{itemize}
    \item \textbf{Information Loss Sufficiency (ILS):} The information loss sufficiency of $\stat$ is defined as
    \begin{align*}
    \Suff_{\il, \fdivfun}(\stat) = \mutinfo{\fdivfun}(X, Y) - \mutinfo{\fdivfun}(\stat(X), Y).
    \end{align*}
    
    \item \textbf{Variational Form Sufficiency (VFS):} The variational form sufficiency of $\stat$ is given by
     \begin{align*}
    \Suff_{\vf, \fdivfun}(\stat) 
    = \inf_{\simscore:
\stat(\Samspace)\times\respspace\mapsto\R} \vform_{\fdivfun}(\simscore\circ\stat) - \inf_{\simscore:
\Samspace\times\respspace\mapsto\R} \vform_{\fdivfun}(\simscore),
    \end{align*}
    where $\simscore\circ \stat(x,y)\defn\simscore(\stat(x),y)$, and the $\fdivfun$-contrastive loss
     \begin{align}
    \vform_{\fdivfun}(\simscore) \defn \E_{\P(x, y)}[ - \simscore(x, y)] + \inf_{\simscore_x:\Samspace\mapsto\R} \E_{\P(x)\P(y)}[\fdivfun^*(\simscore(x, y) - \simscore_x(x)) + \simscore_x(x)],\label{eq:f_contrastive_loss}
    \end{align}
    where $\fdivfun^*$ is the Fenchel-dual of $\fdivfun$.
    
    \item \textbf{Conditional Bregman Sufficiency (CBS):} The conditional Bregman sufficiency of $\stat$ is defined as
    \[
    \Suff_{\cb, \fdivfun}(\stat) = \E_{\P(x)\times  \P(y)}\Big[ B_{\fdivfun}\Big( \frac{\P(y | x)}{\P(y)}, \frac{\P(y | \stat(x))}{\P(y)} \Big) \Big],
    \]
    where $B_{\fdivfun}(a, b) \defn \fdivfun(a) - \fdivfun(b) - (a-b)\fdivfun'(b)$ is the Bregman divergence of $\fdivfun$.
\end{itemize}
Indeed, these definitions will be shown to be equivalent (Lemma~\ref{lm:equiv_def}), i.e.,
\begin{align*}
\Suff_{\il, \fdivfun}(\stat) = \Suff_{\vf, \fdivfun}(\stat)= \Suff_{\cb, \fdivfun}(\stat)\revdef\Suff_{\fdivfun}(\stat).
\end{align*}
We say $\stat(X)$ is an $\eps$-approximate sufficient statistic if $\Suff_{\fdivfun}(\stat)\leq\eps.$
\end{definition}
The Information Loss Sufficiency (ILS) is closely linked to the InfoMax principle~\cite{linsker1988self,hjelm2018learning}, which finds a statistic $\stat$ that maximizes mutual information $\mutinfo{}(\stat(X),Y)$ under certain constraints. The equivalence between  ILS and CBS  suggests that the loss in mutual information can be represented as a divergence between the conditional probabilities $\P(Y|X)$ and $\P(Y|\stat(X))$. This provides a concrete measure for interpreting the information loss. 

In Variational Form Sufficiency (VFS), by definition, the excess risk $\vform_\fdivfun(\simscore \circ \stat) - \inf_{\simscoretil} \vform_\fdivfun(\simscoretil)$ serves as an upper bound on the sufficiency $\Suff_\fdivfun(\stat)$, and they are nearly equal when $\simscore$ is obtained by minimizing $\vform_\fdivfun(\simscore \circ \stat)$ over a sufficiently rich space $\cS$. Consequently, VFS provides a loss minimization framework for finding $\stat$ with low sufficiency by minimizing the $\fdivfun$-contrastive loss $\vform_\fdivfun(\simscore)$ over $\simscore$ in some space $\cS$ and extracting $\stat$ from $\simscore$. Moreover, an extension of approximate sufficiency to  similarity scores $\simscore$ is introduced in Appendix~\ref{sec:suff_similarity_score}.


The concept of approximate sufficient statistics was first proposed in~\cite{oko2025statistical}, but only in the CBS form for KL divergence (i.e., $\fdivfun(x)=x\log x$). In this work, we extend the definition to general $\fdivfun$-divergences and establish the equivalence among three forms of sufficiency. Notably, for $\fdivfun$ that is strictly convex, we have $\Suff_{ \fdivfun}(\stat)=0$ if and only if $Y\indep X|\stat(X)$ from the CBS form, aligning with the classic definition of sufficient statistics (see e.g.,~\cite{keener2010theoretical}). We will mainly consider two special cases of $\fdivfun$: $\fdivfun(x) = x \log x$ (KL-divergence) and $\fdivfun(x) = (x-1)^2/2$ ($\chi^2$-divergence), with the corresponding sufficiency denoted by $\Suff_\kl$ and $\Suff_\csquare$, respectively.
For more examples and properties regarding approximate sufficient statistics, we refer the readers to Appendix~\ref{sec:property_apprx_suff_stat}.

In the context of data augmentation-based contrastive learning, we may choose $X$ and $Y$ as two augmented views of the sample, and $\stat$ as the encoder $\fun$. 
The sufficiency $\Suff_{\fdivfun}(\fun)$ then quantifies the loss of recovering augmented views from the encoder representation.
We will show that the downstream performance of $\fun$ can be controlled by its sufficiency (in the CBS form) and the error induced by data augmentation. Specifically, for any downstream task, a small risk can be achieved using $\fun$ if it is near-sufficient and the random transformations in contrastive learning do not significantly change the downstream outcomes.
As a preview of the results, we have
\begin{mytheorem}[Theorem (Informal)]The risk on a downstream task using encoder $\fun$ (denoted by $ \mathcal{R}(\fun)$) satisfies
\begin{align*}
    \mathcal{R}(\fun)\leq c\cdot\Big(\sqrt{\Suff_{\fdivfun}(\fun)}+\dsterr\Big)
    \end{align*} for some constant $c>0$, 
    where $\Suff_{\fdivfun}(\fun)$ is the $\fdivfun$-sufficiency  of $\fun$ and $\dsterr$ denotes the error on the downstream task induced by data augmentation.
\end{mytheorem}

Contrastive learning with general $\fdivfun$-divergence was also studied in~\cite{lu2024f,xu2024dependence}, but the loss functions considered in these works differ from the variational form in (\ref{eq:f_contrastive_loss}). In particular, while \cite{lu2024f} considered a variational form similar to (\ref{eq:f_contrastive_loss}), they set $\simscore_x = 0$ instead of taking the infimum over $\simscore_x$. 


\section{Statistical properties of contrastive learning}\label{sec:general_contrastive_learning}
In this section, we demonstrate that data augmentation-based contrastive learning can find near-sufficient encoders that are effectively adaptable to downstream tasks. We focus 
  on the SimCLR framework in Section~\ref{sec:simclr_erm_analysis}--\ref{sec:simclr_downstream}, and extend the results to general $\fdivfun$-contrastive losses in Section~\ref{sec:alter_contastive_loss}.


\subsection{Setup and the ERM estimator}\label{sec:simclr_erm_analysis}
Let $\Sam \in \Samspace$ be a random sample drawn from a distribution $\Distsam$ on $\Samspace$. Consider a set of transformations $\transpace$ in which each transformation $\tranfun : \Samspace \to \Samspace$ maps $\Samspace$ to itself.\footnote{More generally, we only need each transformation $\tranfun : \Samspace \to \Viewspace$ maps $\Samspace$ to a space $\Viewspace$, which entails a natural injective map back to $\Samspace$.} Let $\Disttran$ denote a distribution over the transformations in $\transpace$. Given a sample $\Sam$ and two transformations $\tranfunone, \tranfuntwo \simiid \Disttran$, we generate two augmented views of $\Sam$, denoted as $\Samviewone = \tranfunone(\Sam)$ and $\Samviewtwo = \tranfuntwo(\Sam)$. The marginal distribution of $\Samviewone$ (or equivalently $\Samviewtwo$) is denoted by $\Distview$. {Often, we will omit the superscripts and let $\Samviewmain = \tranfun(\Sam)$ denote a single augmented view generated by a transformation $\tranfun \sim \Disttran$.}

Throughout the remainder of this work, unless otherwise specified, we set $(X,Y) \overset{d}{=} (\Samviewone,\Samviewtwo)$ in Definition~\ref{def:general_app_suff}, i.e., we define the sufficiency $\Suff_\fdivfun(\stat) =
 \mutinfo{\fdivfun}(\Samviewone, \Samviewtwo) - \mutinfo{\fdivfun}(\stat(\Samviewone), \Samviewtwo).
 $ 
For simplicity, we assume the joint distribution of $(\Sam,\Samviewone,\Samviewtwo)$ is either discrete or has a continuous density w.r.t. some base measure on $\Samspace^{\otimes3}$. We abuse the notation $\P(\cdot)$ to refer to either discrete distributions or the density of continuous distributions, with the intended meaning clear from the context. Also, we occasionally omit the subscript $\kl$ when referring to KL-sufficiency. 

SimCLR~\cite{chen2020simple} learns a representation of the sample $\Sam$ (i.e., $\fun(\Sam)$ or $\fun(\tranfun(\Sam))$) through performing contrastive learning on the augmented views $(\Samviewone, \Samviewtwo)$. Specifically, given a batch of $\batchsize$ i.i.d. samples $\{\Sam_i\}_{i=1}^\batchsize$  from $\Distsam$, we generate $\batchsize$ pairs of augmented views $\{(\Samviewone_i,\Samviewtwo_i)\}_{i=1}^\batchsize$ using  $2\batchsize$ i.i.d. transformations $\{(\tranfunone_i,\tranfuntwo_i)\}_{i=1}^\batchsize$  from $\Disttran$. 
Let $\fun:\cX\mapsto\R^{\dimfun}$ be an encoder function, potentially parametrized by neural networks.
The SimCLR risk function is defined as the expected InfoNCE loss~\cite{oord2018representation}:
\begin{align}
    &~\orisk_{\simclr,\batchsize}(\simscore) \defn  \frac{1}{2}\E \Big[ - \log \frac{\exp(\simscore(\Samviewone_1, \Samviewtwo_1)  ) }{\sum_{j \in [\batchsize]} \exp(\simscore(\Samviewone_1, \Samviewtwo_j))} \Big] 
+ 
\frac{1}{2}\E \Big[ - \log \frac{\exp(\simscore(\Samviewone_1, \Samviewtwo_1)  ) }{\sum_{j \in [\batchsize]} \exp(\simscore(\Samviewone_j, \Samviewtwo_1))} \Big],~\text{and} \label{eq:simclr_risk}\\
&~\risk_{\simclr,\batchsize}(\fun) \defn \orisk_{\simclr,\batchsize}(\simscorep{\fun}),\text{ where } \simscorep{\fun}\defn\linkfun(\inprod{\fun(\Samviewone)}{\fun(\Samviewtwo)}),\text{  }\linkfun:\R\mapsto\R\text{ is some simple link function.}\notag
\end{align}



Given a set of encoders denoted by $\funspace$ and $\samsize=\numbatch\batchsize$ i.i.d. pairs of augmented views $\{(\Samviewone_i,\Samviewtwo_i)\}_{i=1}^\samsize$, SimCLR learns an encoder function $\estfun\in\funspace$ through empirical risk minimization (ERM), namely,
\begin{align}
   \estfun
    \defn
    \underset{\fun \in \funspace }{\argmin} \Big\{  \emprisk_{\simclr,\batchsize}(\simscorep{\fun}) \defn&~   \frac{1}{2\samsize}\sum_{i = 1}^\numbatch \Big[\sum_{j=1}^\batchsize \Big[ - \log \frac{\exp(\simscorep{\fun}(\Samviewone_{(i-1)\batchsize+j}, \Samviewtwo_{(i-1)\batchsize+j}) ) }{\sum_{l \in [K]} \exp(\simscorep{\fun}(\Samviewone_{(i-1)\batchsize+j}, \Samviewtwo_{(i-1)\batchsize+l}))} \Big]\notag \\
    &~ 
    +  \Big[ - \log\frac{\exp(\simscorep{\fun}(\Samviewone_{(i-1)\batchsize+j}, \Samviewtwo_{(i-1)\batchsize+j}) ) }{\sum_{l \in [K]} \exp(\simscorep{\fun}(\Samviewone_{(i-1)\batchsize+l}, \Samviewtwo_{(i-1)\batchsize+j}))} \Big]\Big]\Big\}. \label{eq:simclr-empirical-risk-minimization}
\end{align}
With the encoder $\estfun(\cdot)$ at hand, 
 $\estfun(\Sam)$ (or $\estfun(\tranfun(\Sam))$)  serves as a representation for each $\Sam\in\Samspace$, which can be used for downstream tasks.\\


We now show that the sufficiency of the 
 ERM estimator $\estfun$ can be properly controlled. 
  We will demonstrate in Section~\ref{sec:simclr_downstream} that the downstream performance of  $\estfun$ is closely tied to its sufficiency.
First, we note that  a global minimizer of the SimCLR risk is $\truesimscore(\Samviewone,\Samviewtwo)\defn\log \Big[ \frac{\P( \Samviewone, \Samviewtwo) }{ \P(\Samviewone) \cdot \P(\Samviewtwo)}\Big]$ (see Lemma~\ref{lm:vfs_properties} for the proof).
To analyze the  properties of the ERM estimator, 
 we  introduce the following boundedness assumption on the score function $\simscore$  and regularity  assumption on $\linkfun.$
\begin{assumption}[Bounded score]\label{ass:bounded_score}
 There exists a constant $\boundscoreconst>0$ such that for all pairs $(\Samviewone,\Samviewtwo)$, we have $\exp(\simscorep{\fun}(\Samviewone,\Samviewtwo))\in[1/\boundscoreconst,\boundscoreconst]$ for all $\fun\in\funspace$ and $\frac{\P(\Samviewone,\Samviewtwo)}{\P(\Samviewone)\P(\Samviewtwo)}\in[1/\boundscoreconst,\boundscoreconst]$. 
\end{assumption}
\begin{assumption}[Simple link function]\label{ass:link_fun}
The link function $\linkfun:\R\mapsto\R$ is invertible and there exists some constant $\linkfunlip>0$ such that $|\linkfun(0)|\leq \linkfunlip$ and $\linkfun,\linkfun^{-1}$ are  $\linkfunlip$-Lipschitz. 
\end{assumption}

Note that the first part of Assumption~\ref{ass:bounded_score} is satisfied with $\boundscoreconst=\exp(\boundfun^2)$ when $\vecnorm{\fun(\Sam)}{2}\leq \boundfun$ for all $\fun\in\funspace,\Sam\in\Samspace$ and $\linkfun$ is the identity function.
Based on these assumptions, we have

\begin{theorem}[Sufficiency bound for the ERM estimator]\label{thm:main_simclr_generalization}
Suppose Assumption~\ref{ass:bounded_score}~and~\ref{ass:link_fun} hold for some $\boundscoreconst\geq1,\linkfunlip>0$. 
    Let  $\estfun$ be the empirical risk minimizer  defined in Eq.~\eqref{eq:simclr-empirical-risk-minimization} and let $\truesimscore$ be as defined in Section~\ref{sec:simclr_erm_analysis}. Let $\supp(\Samviewone)$ be the support of $\Samviewone$ and $\Covernum(u,\vecnorm{\cdot}{2,\infty},\funspace)$ be the $u$-covering number of $\funspace$ under the $(2,\infty)$-norm $\vecnorm{\fun}{2,\infty}\defn\sup_{x\in\supp(\Samviewone)}\vecnorm{\fun(x)}{2}$. 
    Then, with probability at least $1-\delta$, we have
 \begin{align}
 \label{eq:excess_risk_bound_simclr}
\suffmeasure_\kl(\estfun)
\leq 
\Big(1 +\frac{\polyshort}{\batchsize}\Big)
\cdot [\generr+\apperr],
\end{align}
where
\begin{subequations}
    \begin{align}
        \generr&\defn 
           \frac{\polyshort}{\sqrt\samsize}\Big[\sqrt{{\log(1/\delta)}}
    +\linkfunlip^2\int_{0}^{2(\log\boundscoreconst+\linkfunlip)}\sqrt{\log\Covernum(u,\vecnorm{\cdot}{2,\infty},\funspace)}du\Big]
    ,
    \\
    \apperr&\defn \inf_{\fun\in\funspace}\orisk_{\simclr,\batchsize}(\simscorep{\fun})-\orisk_{\simclr,\batchsize}(\truesimscore)
    \end{align}
    for some constant $\polyshort>0$ depending polynomially on $\boundscoreconst$.
\end{subequations}
\end{theorem}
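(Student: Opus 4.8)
The plan is to reduce the sufficiency $\suffmeasure_\kl(\estfun)$ to a \emph{population} excess contrastive risk, and then control that excess risk with the usual empirical-risk-minimization (ERM) decomposition plus an empirical-process bound. I would first prove a deterministic \emph{variational reduction}: for every $\fun\in\funspace$,
\[
\suffmeasure_\kl(\fun)\ \le\ \Bigl(1+\tfrac{\polyshort}{\batchsize}\Bigr)\,\Bigl[\orisk_{\simclr,\batchsize}(\simscorep{\fun})-\orisk_{\simclr,\batchsize}(\truesimscore)\Bigr].
\]
This rests on two facts. (i) By Lemma~\ref{lm:vfs_properties}, $\truesimscore$ minimizes $\orisk_{\simclr,\batchsize}$, and, since $\simscorep{\fun}(\Samviewone,\Samviewtwo)=\linkfun(\langle\fun(\Samviewone),\fun(\Samviewtwo)\rangle)$ is of the form $\simscore\circ\fun$ (take $\simscore(a,y)=\linkfun(\langle a,\fun(y)\rangle)$), the VFS characterization of Definition~\ref{def:general_app_suff}/Lemma~\ref{lm:equiv_def} gives $\suffmeasure_\kl(\fun)\le \vform_\kl(\simscorep{\fun})-\inf_\simscore \vform_\kl(\simscore)$, with $\vform_\kl$ the KL contrastive loss of \eqref{eq:f_contrastive_loss} (for $\fdivfun(x)=x\log x$ the inner infimum over $\simscore_x$ is solved explicitly, yielding $\vform_\kl(\simscore)=\E_{\P(x,y)}[-\simscore(x,y)]+\E_{\P(x)}\log\E_{\P(y)}e^{\simscore(x,y)}$). (ii) As $\batchsize\to\infty$, $\orisk_{\simclr,\batchsize}(\simscore)$ equals $\vform_\kl(\simscore)$ up to the deterministic additive constant $\log(\batchsize-1)$, the gap coming only from using $\batchsize-1$ negative views in place of a full expectation over $\P(\Samviewtwo)$ and from keeping the positive view in the denominator; under Assumption~\ref{ass:bounded_score} these log-partition quantities live in a bounded range, and a second-order comparison upgrades the $\batchsize=\infty$ identity to the displayed inequality at the cost of the $1+\polyshort/\batchsize$ factor.

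Next, the standard ERM decomposition. Define $\Delta(\fun)\defn\bigl[\orisk_{\simclr,\batchsize}(\simscorep{\fun})-\orisk_{\simclr,\batchsize}(\truesimscore)\bigr]-\bigl[\emprisk_{\simclr,\batchsize}(\simscorep{\fun})-\emprisk_{\simclr,\batchsize}(\truesimscore)\bigr]$, where $\emprisk_{\simclr,\batchsize}(\truesimscore)$ is obtained by plugging $\truesimscore$ into \eqref{eq:simclr-empirical-risk-minimization}. Using $\emprisk_{\simclr,\batchsize}(\simscorep{\estfun})\le\emprisk_{\simclr,\batchsize}(\simscorep{\fun})$ for every $\fun\in\funspace$ and then taking the infimum over $\fun$, one gets $\orisk_{\simclr,\batchsize}(\simscorep{\estfun})-\orisk_{\simclr,\batchsize}(\truesimscore)\le 2\sup_{\fun\in\funspace}|\Delta(\fun)|+\apperr$. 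Combining with the variational reduction, it remains to show $\sup_{\fun\in\funspace}|\Delta(\fun)|\le\tfrac12\generr$ with probability $\ge 1-\delta$.

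This last uniform bound is the heart of the argument. Write $\emprisk_{\simclr,\batchsize}(\simscorep{\fun})-\emprisk_{\simclr,\batchsize}(\truesimscore)=\tfrac1\numbatch\sum_{i=1}^\numbatch g_i(\fun)$, where $g_i(\fun)$ is the difference of the $i$-th per-batch (size-$\batchsize$) log-softmax losses at $\simscorep{\fun}$ and at $\truesimscore$; the $g_i$ are i.i.d.\ across batches and $\E g_i(\fun)$ equals the population excess risk by batch exchangeability. Because we centered at $\truesimscore$, each $g_i(\fun)$ is uniformly bounded by $\polyshort$ with \emph{no} $\log\batchsize$ term (the $\log(\batchsize-1)$ offsets cancel), and $\fun\mapsto g_i(\fun)$ is $O(\linkfunlip^2)$-Lipschitz w.r.t.\ $\vecnorm{\cdot}{2,\infty}$ (compose the $1$-Lipschitzness of $\log\sum\exp$ in $\ell_\infty$, the $\linkfunlip$-Lipschitzness of $\linkfun$ from Assumption~\ref{ass:link_fun}, and the Lipschitzness of $\langle\fun(\cdot),\fun(\cdot)\rangle$ in $\fun$, with encoder norms bounded via Assumptions~\ref{ass:bounded_score}--\ref{ass:link_fun}). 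The crucial observation --- which is exactly what makes the final rate depend on $\samsize=\numbatch\batchsize$ rather than on $\numbatch$, hence batch-size free --- is that although the $\batchsize$ summands within a batch share the batch's views, each view enters only one of the $\batchsize$ terms of each log-softmax, so its influence is $O(\polyshort/\batchsize)$; an Efron--Stein / bounded-differences calculation then yields, uniformly in $\fun,\fun'\in\funspace$, $\Var(g_i(\fun))\le\polyshort/\batchsize$ and $\|g_i(\fun)-g_i(\fun')\|_{L_2}\le(\polyshort\linkfunlip^2/\sqrt\batchsize)\,\vecnorm{\fun-\fun'}{2,\infty}$. Feeding these into a Talagrand/Bernstein-type bound for $\sup_{\fun\in\funspace}\bigl|\tfrac1\numbatch\sum_i (g_i(\fun)-\E g_i(\fun))\bigr|$, the expected supremum is controlled by Dudley's entropy integral against the $L_2(g_i)$ metric, which by the modulus above equals $\tfrac{\polyshort\linkfunlip^2}{\sqrt{\numbatch\batchsize}}\int_0^{2(\log\boundscoreconst+\linkfunlip)}\sqrt{\log\Covernum(u,\vecnorm{\cdot}{2,\infty},\funspace)}\,du=\tfrac{\polyshort\linkfunlip^2}{\sqrt\samsize}\int_0^{2(\log\boundscoreconst+\linkfunlip)}\sqrt{\log\Covernum(u,\vecnorm{\cdot}{2,\infty},\funspace)}\,du$ (the truncation point being an upper bound on the relevant $\vecnorm{\cdot}{2,\infty}$-diameter), while the variance term contributes $\polyshort\sqrt{\log(1/\delta)}/\sqrt{\numbatch\batchsize}=\polyshort\sqrt{\log(1/\delta)}/\sqrt\samsize$; together these give $\sup_\fun|\Delta(\fun)|\le\tfrac12\generr$. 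I expect the main obstacle to be precisely this uniform within-batch estimate $\|g_i(\fun)-g_i(\fun')\|_{L_2}\lesssim\batchsize^{-1/2}\vecnorm{\fun-\fun'}{2,\infty}$ (and the companion variance bound): it is what removes the batch-size dependence and requires carefully tracking the propagation of each view through all $\batchsize$ terms of the softmax uniformly over $\funspace$; the remaining chaining and concentration are routine.
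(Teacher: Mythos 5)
Your overall route coincides with the paper's: first reduce $\suffmeasure_\kl(\estfun)$ to the finite-$\batchsize$ SimCLR excess risk at the price of the factor $(1+\polyshort/\batchsize)$ (the paper gets this from Proposition~\ref{prop:approximate_global_min_contrastive_loss}, imported from \cite{oko2025statistical}; your ``second-order comparison'' sketch is exactly what that proposition encapsulates, so you would either have to prove it or cite it), then the standard ERM decomposition, then a uniform deviation bound obtained by Dudley chaining in the metric $\vecnorm{\fun-\fun'}{2,\infty}/\sqrt{\samsize}$. Your within-batch influence computation (each view enters one numerator and all $\batchsize$ denominators with weight $O(1/\batchsize)$, giving per-sample effect $O(1/\batchsize)$ on the per-batch average and hence sub-Gaussian increments with parameter $\propto\vecnorm{\fun-\fun'}{2,\infty}/\sqrt{\batchsize}$ per block) is the same calculation the paper carries out, only organized batchwise rather than directly over all $\samsize$ independent pairs; your centering at $\truesimscore$ versus the paper's centering of each $\emprisk_{\simclr,\batchsize}(\simscorep{\fun})$ at its own expectation is immaterial.

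The one genuine soft spot is your high-probability step. A Talagrand/Bernstein-type bound applied to the $\numbatch$ i.i.d.\ block variables $g_i$ with wimpy variance $\polyshort/\batchsize$ and block range $\polyshort$ gives a deviation of order $\polyshort\sqrt{\log(1/\delta)/\samsize}+\polyshort\log(1/\delta)/\numbatch$; you only account for the first term. The second term is absent from the theorem and does not vanish when $\numbatch$ is of constant order --- e.g.\ full-batch training with $\numbatch=1$ and $\batchsize=\samsize$, which is precisely the regime the batch-size-free statement is meant to cover --- so as written your concentration step does not yield \eqref{eq:excess_risk_bound_simclr} uniformly over $(\numbatch,\batchsize)$ with $\samsize=\numbatch\batchsize$. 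The fix, and the paper's actual route, is to apply the bounded-differences inequality to $\sup_{\fun\in\funspace}|\emprisk_{\simclr,\batchsize}(\simscorep{\fun})-\E[\emprisk_{\simclr,\batchsize}(\simscorep{\fun})]|$ viewed as a function of the $\samsize$ independent pairs: your own influence computation shows that replacing one pair changes the empirical risk by at most $c\boundscoreconst^2/\samsize$ uniformly over $\funspace$, so McDiarmid gives the $\sqrt{\log(1/\delta)/\samsize}$ tail with no extra $1/\numbatch$ term, and Dudley's entropy integral (with the sub-Gaussian increments you derived) controls the expected supremum exactly as you describe.
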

See the proof in Appendix~\ref{sec:pf_thm:main_simclr_generalization}.
In the decomposition on the R.H.S. of~\eqref{eq:excess_risk_bound_simclr},
the approximation error term represents the error incurred when approximating the optimal score $\truesimscore$ within the function class $\funspace$. It is a property of the function class $\funspace$, and a richer class tends to have a smaller approximation error. 
The generalization error bound  is derived using concentration properties of functions with bounded differences. Notably, it depends only on the total sample size $\samsize=\numbatch\batchsize$ rather than the batch size $\batchsize$ or the number of batches $\numbatch$. This allows our results to account for large or full-batch training, as used in SimCLR~\cite{chen2020simple} and CLIP~\cite{radford2021learning}. When $\samsize\to\infty$, the generalization error vanishes while the approximation error remains constant.

\paragraph{Why does the SimCLR loss work?}
Intuitively, 
 $\orisk_{\simclr,\batchsize}(\simscore)$ can be viewed as an approximation of the KL-contrastive loss $\vform_{\kl}(\simscore)$ in Eq.~\eqref{eq:f_contrastive_loss} using a finite batch size $\batchsize.$ Namely,
\begin{align}
  \vform_{\kl}(\simscore) = -\E[\simscore(\Samviewone,\Samviewtwo)]
+\E_{\Samviewone_1}\big[\log\E_{\Samviewtwo_2}[\exp(\simscore(\Samviewone_1,\Samviewtwo_2))]\big]
=\lim_{\batchsize\to\infty}\orisk_{\simclr,\batchsize}(\simscore)-\log\batchsize.\label{eq:intuitive_bound_suff}
\end{align}   
See the proof in Appendix~\ref{sec:pf_prop:kl_induce_infonce}.
As  a result, by the definition of VFS in Definition~\ref{def:general_app_suff}
\begin{align*}
\Suff_{\kl}(\fun)\leq \vform_\kl(\simscorep{\fun})-\inf_{\simscore} \vform_\kl(\simscore)\approx
\underbrace{\orisk_{\simclr,\batchsize}(\simscore_\fun)-\inf_\simscore\orisk_{\simclr,\batchsize}(\simscore)}_{\textup{Excess risk}},
\end{align*}
and therefore minimizing the SimCLR loss $\emprisk_{\simclr,\batchsize}(\simscore_\fun)$  effectively controls the sufficiency 
$\Suff_{\kl}(\fun)$.

\subsection{Using the encoder for downstream tasks}\label{sec:simclr_downstream}
Given an encoder function $\fun:\Samspace \to \R^{\dimfun}$, we are interested in applying it to  downstream tasks.  Specifically, the goal is to leverage the learned representation $\fun(\Sam)$ (or $\fun(\tranfun(\Sam))$) to facilitate learning in downstream tasks, such as regression or classification. By mapping the raw sample $\Sam$ to the feature space $\R^{\dimfun}$, the representation $\fun(\Sam)$ (or $\fun(\tranfun(\Sam))$) is expected to capture the most salient information of $\Sam$, simplifying  the downstream task while maintaining high performance.
In this section, we demonstrate that the 
downstream performance of the encoder depends on its sufficiency $\Suff_\kl(\fun)$ and  the robustness of the downstream task  to the random transformation $\tranfun\sim\Disttran.$

\paragraph{Adaptation to downstream regression tasks.} We first study regression tasks.
Consider the task of learning an unknown target function $\truehypo:\Samspace\mapsto\R$. Given an encoder $\fun$, our objective is to find a function $\dstfun:\R^{\dimfun}\mapsto\R$ such  that $\dstfun(\fun(\Sam))\approx \truehypo(\Sam)$ (or $\dstfun(\fun(\tranfun(\Sam)))\approx \truehypo(\Sam)$).
The estimation error of $\dstfun$ is measured by the risk
\begin{align*}\drisktran{\dstfun\circ\fun}\defn\E_{\Sam\sim\Distsam,\tranfun\sim\Disttran}[(\dstfun(\fun(\tranfun(\Sam)))-\truehypo(\Sam))^2]
,~~~\text{or ~~}\drisk{\dstfun\circ\fun}\defn\E_{\Sam\sim\Distsam}[(\dstfun(\fun(\Sam))-\truehypo(\Sam))^2].
\end{align*}
For example, in regression tasks where the goal is to predict the outcome $\resp$  based on the covariates $\Sam$, one can choose $\truehypo(\Sam)=\E[\resp|\Sam]$. The two risks $\drisktran{\cdot},\drisk{\cdot}$ correspond to the cases where  a random transformation $\tranfun$ is (or is not) applied before passing the input to the encoder $\fun$, respectively.
Theorem~\ref{thm:general_downstream_bound}  illustrates how the downstream performance of the encoder $\fun$ depends on its sufficiency.

\begin{subequations}
\begin{theorem}[Performance on downstream regression]\label{thm:general_downstream_bound}
Suppose $\truehypo$ satisfies $\big|\E[\truehypo(\Sam)|\tranfun(\Sam)]\big|\leq\boundthypo$ almost surely. 
    Given an encoder $\fun:\Samspace\mapsto\R^\dimfun$, there exists a measurable function $\dstfun:\R^\dimfun\mapsto\R$ such that
    \begin{align}
        \drisktran{\dstfun\circ\fun}\leq c(\boundthypo^2\sqrt{\suffmeasure_{\kl}(\fun)}+ \dsterr),\label{eq:thm_general_downstream_bound}
    \end{align}
    where $c>0$ is some absolute constant and   $\dsterr\defn\E_{\Sam\sim\Distsam,\tranfun\sim\Disttran}[(\truehypo(\tranfun(\Sam))-\truehypo(\Sam))^2]$. Moreover, if the augmented view has the same marginal distribution as the original sample, i.e., $\Samviewone\overset{d}{=}\Sam$, then
     \begin{align}
        \drisk{\dstfun\circ\fun}\leq c(\boundthypo^2\sqrt{\suffmeasure_{\kl}(\fun)}+ \dsterr)\label{eq:thm_general_downstream_bound1}
    \end{align} for some absolute constant $c>0.$
\end{theorem}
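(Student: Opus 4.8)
The plan is to exhibit one decoder $\dstfun$ and bound $\drisktran{\dstfun\circ\fun}$ by a bias--variance decomposition in which the \emph{only} term that sees the encoder is controlled by its approximate sufficiency (in the CBS form), while all remaining pieces collapse to $\dsterr$. Fix a standard augmented pair $(\Samviewone,\Samviewtwo)$ with common source $\Sam$, put $\psi(v)\defn\E[\truehypo(\Sam)\mid\Samviewtwo=v]$ --- so $\|\psi\|_\infty\le\boundthypo$, since $\Samviewtwo$ is distributed as $\tranfun(\Sam)$ and the hypothesis bounds $|\E[\truehypo(\Sam)\mid\tranfun(\Sam)]|$ --- and take $\dstfun(z)\defn\E[\psi(\Samviewtwo)\mid\fun(\Samviewone)=z]$. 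By the symmetry $(\Sam,\Samviewone)\overset{d}{=}(\Sam,\Samviewtwo)$ one also has $\E[\truehypo(\Sam)\mid\Samviewone]=\psi(\Samviewone)$, which I use below.

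The analytic core is the elementary estimate that a statistic $g(V)$ reproduces conditional expectations of bounded functionals of a coupled $W$ up to the square root of a conditional KL: for bounded $\phi$,
\[
\E_V\Big[\big(\E[\phi(W)\mid g(V)]-\E[\phi(W)\mid V]\big)^2\Big]\ \le\ 2\sqrt{2}\,\|\phi\|_\infty^2\,\Big(\E_V\big[\KL\big(\P(W\mid V)\,\|\,\P(W\mid g(V))\big)\big]\Big)^{1/2}.
\]
I would prove this by writing the difference as $\int\phi\,\big(\P(\cdot\mid V)-\P(\cdot\mid g(V))\big)$, bounding it by $2\|\phi\|_\infty\,\TV\big(\P(W\mid V),\P(W\mid g(V))\big)$, using $\TV^2\le\sqrt{\KL/2}$ (Pinsker on one factor, $\TV\le1$ on the other --- this is the source of the square root in the theorem), and then Jensen for $t\mapsto\sqrt t$. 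A short computation from the CBS form of $\suffmeasure_\kl$ (with $(X,Y)=(\Samviewone,\Samviewtwo)$) gives $\E_{\Samviewone}\big[\KL(\P(\Samviewtwo\mid\Samviewone)\,\|\,\P(\Samviewtwo\mid\fun(\Samviewone)))\big]=\suffmeasure_\kl(\fun)$, so specializing $(V,W,g,\phi)=(\Samviewone,\Samviewtwo,\fun,\psi)$ yields $\E\big[(\E[\psi(\Samviewtwo)\mid\fun(\Samviewone)]-\E[\psi(\Samviewtwo)\mid\Samviewone])^2\big]\le 2\sqrt2\,\boundthypo^2\sqrt{\suffmeasure_\kl(\fun)}$.

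Next I would chain $L^2$-projections. Writing $\truehypo(\Sam)=\psi(\Samviewtwo)+(\truehypo(\Sam)-\psi(\Samviewtwo))$ gives $\E[(\dstfun(\fun(\Samviewone))-\truehypo(\Sam))^2]\le 2\E[(\dstfun(\fun(\Samviewone))-\psi(\Samviewtwo))^2]+2\,\mathrm{MMSE}(\truehypo(\Sam)\mid\Samviewtwo)$, and the last term is $\le\E[(\truehypo(\Samviewtwo)-\truehypo(\Sam))^2]=\dsterr$ (using the specific predictor $\truehypo(\Samviewtwo)$). Since $\sigma(\fun(\Samviewone))\subseteq\sigma(\Samviewone)$ and $\dstfun(\fun(\Samviewone))=\E[\psi(\Samviewtwo)\mid\fun(\Samviewone)]$, the Pythagorean identity splits
\[
\E[(\dstfun(\fun(\Samviewone))-\psi(\Samviewtwo))^2]=\mathrm{MMSE}(\psi(\Samviewtwo)\mid\Samviewone)+\E\big[(\E[\psi(\Samviewtwo)\mid\fun(\Samviewone)]-\E[\psi(\Samviewtwo)\mid\Samviewone])^2\big].
\]
The second summand is bounded above; the first is $\le\E[(\psi(\Samviewone)-\psi(\Samviewtwo))^2]=2\,\E[\Var(\psi(\Samviewone)\mid\Sam)]\le 2\,\E[(\E[\truehypo(\Sam)\mid\Samviewone]-\truehypo(\Sam))^2]\le2\dsterr$, using that $\Samviewone,\Samviewtwo$ are conditionally i.i.d. given $\Sam$ and that $\Var$ is minimized by the conditional mean. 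Collecting terms gives $\drisktran{\dstfun\circ\fun}\le c(\boundthypo^2\sqrt{\suffmeasure_\kl(\fun)}+\dsterr)$.

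For the ``moreover'' statement under $\Samviewone\overset{d}{=}\Sam$, the key device is to synthesize a partner view of $\Sam$: with $Q(\cdot\mid v)\defn\P(\Samviewtwo\in\cdot\mid\Samviewone=v)$, draw $\Samviewmain\sim Q(\cdot\mid\Sam)$ conditionally independent of the rest given $\Sam$. Because $\Sam\overset{d}{=}\Samviewone$, the pair $(\Sam,\Samviewmain)$ then has \emph{exactly} the law of a standard augmented pair; in particular $\E_\Sam[\KL(\P(\Samviewmain\mid\Sam)\,\|\,\P(\Samviewmain\mid\fun(\Sam)))]=\suffmeasure_\kl(\fun)$, so re-running the decomposition with $(\Sam,\Samviewmain)$ in the roles of $(\Samviewone,\Samviewtwo)$ reproduces the bound. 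The one subtlety is that the natural anchor $\psi'(v)=\E[\truehypo(\Sam)\mid\Samviewmain=v]$ is the conditional mean of $\truehypo$ at a \emph{view} of the synthetic source, not at the source, so it is not literally $\boundthypo$-bounded; but it differs in $L^2$ from the source-level anchor (which is $\boundthypo$-bounded by hypothesis) by at most $\sqrt\dsterr$, and since conditional expectation is an $L^2$-contraction this perturbs the sufficiency term only by $O(\dsterr)$, so one can keep the same $\dstfun$. I expect this partner-synthesis to be the main obstacle: the naive alternative --- treating $\Sam$ itself as the encoder input and controlling $I(\Sam;\text{view}\mid\fun(\Sam))$ --- fails because that quantity need not be $\le\suffmeasure_\kl(\fun)$ (a small two-state example makes it strictly larger), and only the coupling that makes $(\Sam,\Samviewmain)$ match the law of an augmented pair ties the bound back to $\suffmeasure_\kl(\fun)$ exactly.
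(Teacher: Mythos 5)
Your proof is correct, and for the main bound \eqref{eq:thm_general_downstream_bound} it is essentially the paper's argument in different bookkeeping: the paper also anchors on the bounded conditional mean ($\hypomin(\Samviewmain)=\E[\truehypo(\Sam)\mid\Samviewmain]$, which is your $\psi$), compares $\P(\Samviewtwo\mid\Samviewone)$ with $\P(\Samviewtwo\mid\fun(\Samviewone))$ via the variational form of $\TV$ plus Pinsker to invoke the CBS form of $\suffmeasure_{\kl}(\fun)$, and absorbs all remaining terms into $\dsterr$; where you use the $L^2$-Pythagorean split around $\E[\psi(\Samviewtwo)\mid\Samviewone]$, the paper conditions on a synthesized copy $\Samviewonetil\sim\P(\Samviewone\mid\fun(\Samviewone))$ and applies Jensen — these are interchangeable, and your decoder $\E[\psi(\Samviewtwo)\mid\fun(\Samviewone)]$ versus the paper's $\E[\hypomin(\Samviewone)\mid\fun(\Samviewone)]$ is immaterial since only existence is claimed (the paper also notes the sharper constant $\dsterrtil\le\dsterr$, which your MMSE steps deliver implicitly). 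The genuine divergence is in the ``moreover'' claim: the paper dispatches it in two lines, using $\Samviewone\overset{d}{=}\Sam$ to rewrite $\drisk{\dstfun\circ\fun}=\E_{\Samviewone}[(\dstfun(\fun(\Samviewone))-\truehypo(\Samviewone))^2]$ and then a single triangle inequality through $\truehypo(\Sam)$ costing $2\dsterr$, after which \eqref{eq:thm_general_downstream_bound} applies. Your partner-view synthesis is valid but heavier than needed; moreover, within your own construction the detour through $\psi'(v)=\E[\truehypo(\Sam)\mid\Samviewmain=v]$ and the $L^2$-contraction perturbation is avoidable: keeping the bounded anchor $\psi$, the identity $(\Sam,\Samviewmain)\overset{d}{=}(\Samviewone,\Samviewtwo)$ transfers $\E[(\dstfun(\fun(\Sam))-\psi(\Samviewmain))^2]$ verbatim from the first part, and the remaining term $\E[(\psi(\Samviewmain)-\truehypo(\Sam))^2]=\E[(\psi(\Samviewtwo)-\truehypo(\Samviewone))^2]$ is at most $4\dsterr$ by one triangle inequality through the source, so no unbounded anchor ever appears. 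What your extra coupling buys is a conceptual point the paper does not need to make — that the sufficiency is tied to the law of an augmented pair, so $\Sam$ must be matched to that law before the KL term can be identified with $\suffmeasure_{\kl}(\fun)$ — but the final bound is the same.
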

\end{subequations}
The proof of Theorem~\ref{thm:general_downstream_bound} is contained in Appendix~\ref{sec:pf_thm:general_downstream_bound}.
The term $\dsterr$ 
characterizes the impact of a random transformation $\tranfun$ on the value of the target function $\truehypo$. 
In SimCLR, since the encoder $\fun$ is trained only on the augmented views $(\Samviewone,\Samviewtwo)$, the random transformation $\tranfun$ needs to preserve sufficient information on $\truehypo$ (e.g., $\dsterr$ is small)   for  $\fun$ to be effective. This is often the case in practice: for example, random cropping ($\tranfun$) typically does not alter the class label ($\truehypo$) of an image; similarly, rotations and scaling ($\tranfun$) should not affect the true age ($\truehypo$) of a person in facial images.
In addition,  Eq.~\eqref{eq:thm_general_downstream_bound} still holds when $\dsterr$ is replaced by the minimum error 
$\dsterrtil\defn\inf_\hypo \E_{\Sam\sim\Distsam,\tranfun\sim\Disttran}[(\hypo(\tranfun(\Sam))-\truehypo(\Sam))^2]\leq\dsterr$.
We refer to the proof for more details.

\paragraph{Adaptation to downstream classification tasks.} We next turn to classification tasks. Suppose in the downstream we are given samples $(\Sam,\resp)$ from some joint distribution $\Distclsjoint$ on $\Samspace\times[\clsnumgen]$, where $\Sam\sim\Distsam$ is the input and $\resp\in[\clsnumgen]$ is the corresponding label. 
Note that for any $\Sam$, the label $\resp$ follows the conditional probability $\Distclsjoint(\resp|\Sam)$. Given an encoder $\fun$, for any function $\dstfun:\R^{\dimfun}\mapsto\Delta([\clsnumgen])$, we measure
its classification error by
\begin{align*}
\drisktrancls{\dstfun\circ\fun}
\defn\E_{(\Sam,\resp)\sim\Distclsjoint,\tranfun}[\KL(\Distclsjoint(\resp|\Sam)||\dstfun(\fun(\tranfun(\Sam))))].
\end{align*}
\begin{theorem}[Performance on downstream classification]\label{thm:general_downstream_bound_cls}
Suppose $\inf_{y\in[\clsnumgen]}\P(y|\tranfun(\Sam))\geq\exp(-\boundclsgen)$ for some $\boundclsgen>0$ on the support of $\tranfun(\Sam)$.
    Given an encoder $\fun:\Samspace\mapsto\R^\dimfun$, there exists a measurable function $\dstfun:\R^\dimfun\mapsto\Delta([\clsnumgen])$ such that
    \begin{align}
        \drisktrancls{\dstfun\circ\fun}\leq c\Big(\boundclsgen\sqrt{\suffmeasure_{\kl}(\fun)}+ \dsterrcls\Big),\label{eq:thm_general_downstream_bound_cls}
    \end{align}
    where    $\dsterrcls\defn\E_{\Sam\sim\Distsam,\tranfun\sim\Disttran}[\Renyi{2}(\P(\resp|\Sam)||\P(\resp|\Samviewmain))+\Renyi{2}(\P(\resp|\Samviewmain)||\P(\resp|\Sam))]$ and $c>0$ is some absolute constant.  Here, $\Renyi{2}$ denotes the $2$-Rényi divergence. 
\end{theorem}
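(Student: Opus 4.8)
The plan is to mimic the structure behind the proof of Theorem~\ref{thm:general_downstream_bound}, replacing the squared-$L^2$ estimates by KL/TV estimates: I would exhibit an explicit downstream predictor $\dstfun$ and bound its KL risk by one term governed by the augmentation error $\dsterrcls$ and one governed by $\suffmeasure_\kl(\fun)$. Write $\Samviewone=\tranfun(\Sam)$ for the view passed to the encoder downstream and $\Samviewtwo$ for the paired view appearing in the definition of $\suffmeasure_\kl(\fun)$; recall $\Samviewone\indep\Samviewtwo\mid\Sam$ and $\resp\indep(\Samviewone,\Samviewtwo)\mid\Sam$. With $h(v)\defn\E[\P(\resp=\cdot\mid\Samviewtwo)\mid\Samviewone=v]$ I would take
\[
\dstfun(z)\;\defn\;\E\big[\,\P(\resp=\cdot\mid\Samviewtwo)\;\big|\;\fun(\Samviewone)=z\,\big]\;=\;\E\big[\,h(\Samviewone)\;\big|\;\fun(\Samviewone)=z\,\big],
\]
a mixture of label posteriors; by the assumption $\inf_y\P(y\mid\tranfun(\Sam))\ge e^{-\boundclsgen}$, every coordinate of $\dstfun(z)$, $h(v)$ and $\P(\resp\mid\Samviewone)$ lies in $[e^{-\boundclsgen},1]$, which is what makes the divergence manipulations below go through.

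Conceptually this choice is natural: when $\dsterrcls=0$ one has $\P(\resp\mid\Sam)=\P(\resp\mid\Samviewtwo)$, which forces the Markov chain $\resp-\Samviewtwo-\Samviewone-\fun(\Samviewone)$, and a conditional data-processing argument then yields $I(\resp;\Samviewone)-I(\resp;\fun(\Samviewone))\le I(\Samviewtwo;\Samviewone)-I(\Samviewtwo;\fun(\Samviewone))=\suffmeasure_\kl(\fun)$, which together with $I(\resp;\Sam)=I(\resp;\Samviewone)$ gives the bound in this idealized case. To be robust to $\dsterrcls>0$ I would run a quantitative version at the level of distributions. Note $\drisktrancls{\dstfun\circ\fun}=\E_{\Sam,\tranfun}[\KL(\P(\resp\mid\Sam)\,\|\,\dstfun(\fun(\Samviewone)))]$. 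Applying a quasi-triangle inequality for KL with the reference bounded below --- namely $\KL(P\|R)=\KL(P\|Q)+\E_P[\log(Q/R)]$ together with $\E_P[\log(Q/R)]\le\E_{P-R}[\log(Q/R)]\le\boundclsgen\|P-R\|_1$ (using $\E_R[\log(Q/R)]=-\KL(R\|Q)\le0$ and $|\log(Q_y/R_y)|\le\boundclsgen$) --- with $Q=\P(\resp\mid\Samviewone)$ and $R=\dstfun(\fun(\Samviewone))$, gives $\KL(\P(\resp\mid\Sam)\,\|\,\dstfun(\fun(\Samviewone)))\le\KL(\P(\resp\mid\Sam)\,\|\,\P(\resp\mid\Samviewone))+\boundclsgen\|\P(\resp\mid\Sam)-\dstfun(\fun(\Samviewone))\|_1$.

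It then remains to control the two pieces after taking expectations. For the first, $\KL\le\Renyi{2}$ gives $\E[\KL(\P(\resp\mid\Sam)\|\P(\resp\mid\Samviewone))]\le\E[\Renyi{2}(\P(\resp\mid\Sam)\|\P(\resp\mid\Samviewmain))]\le\dsterrcls$. For the $L^1$ piece I would split through $\P(\resp\mid\Samviewone)$ and $h(\Samviewone)$: the jump $\|\P(\resp\mid\Sam)-\P(\resp\mid\Samviewone)\|_1$ is Pinsker/Cauchy--Schwarz controlled by the Rényi terms in $\dsterrcls$; the jump $\|\P(\resp\mid\Samviewone)-h(\Samviewone)\|_1$ is controlled the same way after writing $\P(\resp\mid\Samviewone)=\E[\P(\resp\mid\Sam)\mid\Samviewone]$ and $h(\Samviewone)=\E_{\Sam\mid\Samviewone}\E_{\Samviewtwo\mid\Sam}[\P(\resp\mid\Samviewtwo)]$ and invoking the reversed Rényi term; and the crucial jump, using $\P(\Samviewtwo\mid\fun(\Samviewone))=\E[\P(\Samviewtwo\mid\Samviewone)\mid\fun(\Samviewone)]$, is $\|h(\Samviewone)-\dstfun(\fun(\Samviewone))\|_1=\|\int\P(\resp\mid\Samviewtwo=v')(\P(\Samviewtwo=v'\mid\Samviewone)-\P(\Samviewtwo=v'\mid\fun(\Samviewone)))\,dv'\|_1\le 2\|\P(\Samviewtwo\mid\Samviewone)-\P(\Samviewtwo\mid\fun(\Samviewone))\|_{\TV}\le\sqrt{2\KL(\P(\Samviewtwo\mid\Samviewone)\|\P(\Samviewtwo\mid\fun(\Samviewone)))}$, since $\P(\resp\mid\Samviewtwo=\cdot)$ is a probability vector. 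Taking expectation over $\Samviewone$ and using Cauchy--Schwarz to push the expectation inside the square root, this last jump contributes $\boundclsgen\sqrt{2\,\suffmeasure_\kl(\fun)}$ by the CBS identity $\suffmeasure_\kl(\fun)=\E_{\Samviewone}[\KL(\P(\Samviewtwo\mid\Samviewone)\|\P(\Samviewtwo\mid\fun(\Samviewone)))]$, while the other jumps contribute only $\dsterrcls$-type quantities; collecting terms yields the claim.

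The main obstacle is exactly this conversion step: because KL has no triangle inequality one is forced to detour through $L^1$/TV estimates, and converting back requires the $e^{-\boundclsgen}$ lower bound --- which is why $\boundclsgen$ appears as a prefactor and why $\suffmeasure_\kl(\fun)$ enters only through its square root (via Pinsker), unlike the clean $\dsterrcls=0$ information-theoretic argument above. A secondary subtlety is bookkeeping: $\suffmeasure_\kl(\fun)$ is defined via the paired view $\Samviewtwo$ rather than the label $\resp$, so one must carefully track how conditioning on $\fun(\Samviewone)$ re-samples the underlying source --- in particular the identity $\P(\Samviewtwo\mid\fun(\Samviewone))=\E[\P(\Samviewtwo\mid\Samviewone)\mid\fun(\Samviewone)]$ --- to make the sufficiency estimate actually bound the information $\fun(\Samviewone)$ loses about $\resp$ relative to $\Samviewone$.
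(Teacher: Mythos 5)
Your construction of $\dstfun$, the identity $\KL(P\|R)=\KL(P\|Q)+\E_P[\log(Q/R)]$ with the bound $\E_P[\log(Q/R)]\le\boundclsgen\|P-R\|_1$, and the treatment of the last jump $\|h(\Samviewone)-\dstfun(\fun(\Samviewone))\|_1\le 2\,\TV\big(\P(\Samviewtwo|\Samviewone)\,\big\|\,\P_{\Samviewtwo|\Samviewone}(\cdot|\fun(\Samviewone))\big)$ are all sound, and that last step (bounded integrand, Pinsker, Cauchy--Schwarz, CBS form of $\suffmeasure_\kl$) is in the same spirit as the paper's change-of-measure in the conditional law of $\Samviewtwo$. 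The gap is in the two augmentation-error jumps. The quantities $\|\P(\resp|\Sam)-\P(\resp|\Samviewone)\|_1$ and $\|\P(\resp|\Samviewone)-h(\Samviewone)\|_1$ are only controlled via Pinsker by \emph{square roots} of KL/R\'enyi divergences; after multiplying by the prefactor $\boundclsgen$ and integrating, they contribute terms of order $\boundclsgen\sqrt{\dsterrcls}$, not ``$\dsterrcls$-type quantities'' as you assert. Such a term cannot be absorbed into $c\big(\boundclsgen\sqrt{\suffmeasure_\kl(\fun)}+\dsterrcls\big)$: take $\suffmeasure_\kl(\fun)=0$ and $\dsterrcls=\eps^2$ small, and the theorem promises risk $O(\eps^2)$ while your argument only yields $O(\boundclsgen\eps)$. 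Since $\TV$ cannot be bounded linearly by KL (Pinsker is order-tight for small divergences), this loss is intrinsic to routing the augmentation error through $L^1$; your proposal therefore proves a strictly weaker statement than Eq.~\eqref{eq:thm_general_downstream_bound_cls}.

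The paper avoids this by never converting the augmentation error to total variation. It takes $\dstfun$ to be the exact conditional $\P(\resp|\fun(\Samviewone))$ and applies the quasi-triangle inequality for R\'enyi divergences (Lemma~\ref{lm:triangle_like_ineq}) twice: once with $\alpha\to1$, $k=4/3$ to peel off $4\,\E[\KL(\P(\resp|\Sam)\|\P(\resp|\Samviewtwo))]\le 4\dsterrcls$ and leave a $\Renyi{4/3}$ term, and once with $\alpha=4/3$, $k=1.5$ at the end to bound $\E[\Renyi{4/3}(\P(\resp|\Samviewtwo)\|\P(\resp|\Samviewone))]\le\dsterrcls$. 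In this way the augmentation error always enters linearly through $\Renyi{2}$ divergences, and only the sufficiency term is hit by a square root, via the bounded-integrand ($\Renyi{4/3}\le\Renyi{2}\le\boundclsgen$) TV/Pinsker change of measure that parallels your third jump. To repair your proof you would need to replace the $L^1$ handling of the first two jumps by such a divergence-level quasi-triangle inequality (at which point you essentially recover the paper's argument), or else weaken the conclusion to include the extra $\boundclsgen\sqrt{\dsterrcls}$ term.
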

The proof of Theorem~\ref{thm:general_downstream_bound_cls} is contained in Appendix~\ref{sec:pf_thm:general_downstream_bound_cls}. Similar to the regression case in Theorem~\ref{thm:general_downstream_bound}, the downstream classification error is bounded by the sum of a sufficiency term and an error term that characterizes the change in label probabilities induced by the transformation $\tranfun$.

\subsection{General $\fdivfun$-contrastive learning}\label{sec:alter_contastive_loss}
We generalize our theoretical framework to using general $\fdivfun$-sufficiency as defined in Definition~\ref{def:general_app_suff}, which could be controlled by minimizing the $\fdivfun$-contrastive learning risk. We discuss (1) how to find encoders $\fun$ with low  $\fdivfun$-sufficiency $\Suff_\fdivfun(\fun)$ via data augmentation-based contrastive learning and (2)  the implications of low $\fdivfun$-sufficiency on downstream performance. Note that $\fdivfun(x)=x\log x$ yields the standard SimCLR setup.
\subsubsection{Finding encoders with low $\fdivfun$-sufficiency}\label{sec:enc_low_fsuff}
Recall the variational form sufficiency (VFS)  in Definition~\ref{def:general_app_suff}. We see that for any $\fdivfun$ and encoder $\fun$
\begin{align*}
    \Suff_{\fdivfun}(\fun)\leq
    \inf_{\simscore:\fun(\Samspace)\times\Samspace\mapsto\R} \vform_{\fdivfun}(\simscore\circ\fun) -\inf_{\simscore:\Samspace\times\Samspace\mapsto\R}\vform_\fdivfun(\simscore)
    \leq
     \underbrace{\vform_{\fdivfun}(\simscorep{\fun}) -\inf_{\simscore:\Samspace\times\Samspace\mapsto\R}\vform_\fdivfun(\simscore)}_{\textup{Excess risk}}.
\end{align*}Thus, for any $\eps > 0$, if there exists an encoder $\estfun \in \funspace$ such that the excess risk of $\simscorep{\estfun}$ is less than $\eps$, then the sufficiency $\Suff_{\fdivfun}(\estfun) \leq \eps$. Consequently, given i.i.d. pairs of augmented views, we can obtain an encoder $\estfun$ with low $\fdivfun$-sufficiency by choosing $\estfun$ as the empirical risk minimizer (ERM) of a finite-sample estimate $\what\vform_\fdivfun(\simscorep{\fun})$ of $\vform_\fdivfun(\simscorep{\fun})$, provided that $\what\vform_\fdivfun(\simscorep{\fun}) \approx \vform_\fdivfun(\simscorep{\fun})$, the function class $\funspace$ is sufficiently rich, and its $\vecnorm{\cdot}{2,\infty}$-covering number is well-controlled.

We focus on $\csquare$-sufficiency (i.e., $\fdivfun(x) = (x - 1)^2 / 2$) in the following. For general $\fdivfun$, the  $\simscore_x(x)$ that attains the infimum in Eq.~\eqref{eq:f_contrastive_loss} may not have a closed-form solution, and estimating $\what\vform_\fdivfun(\simscorep{\fun})$ requires solving estimating equations, adding complexity to the analysis.  Thus, we leave a detailed investigation of the general $\fdivfun$ case for future work.

When $\fdivfun(x) = (x - 1)^2 / 2$,  basic algebra shows that the $\csquare$-contrastive loss~\eqref{eq:f_contrastive_loss} takes the form 
 \begin{align}
    \vform_{\csquare}(\simscore) = \E_{\P(x, y)}[ - \simscore(x, y)] + \E_{\P(x)\P(y)}[(\simscore(x, y) - \E_{\P(y)}[\simscore(x, y)])^2/2 
    + \simscore(x, y)
   ].\label{eq:csquare_contrastive_loss}
    \end{align} Given $\samsize=\numbatch\batchsize$ i.i.d. 
    pairs of augmented views $\{(\Samviewone_i, \Samviewtwo_i)\}_{i=1}^\samsize$, an unbiased finite-sample estimate of $ \vform_{\csquare}(\simscore)$ gives
\begin{align}
\emprisk_{\chisq,\batchsize}(\simscorep{\fun})
&\defn \frac{1}{\samsize} \sum_{i = 1}^\numbatch \sum_{j=1}^\batchsize \Big[
  \frac{1}{4(\batchsize-1)(\batchsize-2)} 
 \sum_{\substack{k, l \in [\batchsize] \\
 j \neq k,~k \neq l,~l \neq j}}
  \bigl( \simscorep{\fun}(\Samviewone_{ij}, \Samviewtwo_{ik}) 
  - \simscorep{\fun}(\Samviewone_{ij}, \Samviewtwo_{il}) \bigr)^2 \notag \\
& 
+ \frac{1}{\batchsize - 1} \sum_{k \neq j}
  \simscorep{\fun}(\Samviewone_{ij}, \Samviewtwo_{ik}) 
- \simscorep{\fun}(\Samviewone_{ij}, \Samviewtwo_{ij})
\Big], ~\simscorep{\fun}\defn\linkfun(\inprod{\fun(\Samviewone)}{\fun(\Samviewtwo)}),
\label{eq:chisq-empirical-risk-minimization}
\end{align}
where we adopt the shorthand $\Samviewmain^{(i)}_{ab}=\Samviewmain^{(i)}_{(a-1)\batchsize+b}$ for $i\in[2]$. Let $\estfun=\argmin_{\fun\in\funspace}\emprisk_{\chisq,\batchsize}(\simscorep{\fun})$ be the ERM estimator. Similar to Theorem~\ref{thm:main_simclr_generalization}, 
we have
\begin{theorem}[$\csquare$-sufficiency bound for the ERM estimator]\label{thm:main_chisq_generalization}
Suppose $\simscorep{\fun}(\Samviewone,\Samviewtwo)\in[-\boundscoreconstcsq,\boundscoreconstcsq]$ for all $\fun\in\funspace$ and pairs $(\Samviewone,\Samviewtwo)$, and that Assumption~\ref{ass:link_fun} holds for some $\linkfunlip>0$. 
Let $\truesimscorecsq(\Samviewone,\Samviewtwo)\defn\frac{\P(\Samviewone,\Samviewtwo)}{\P(\Samviewone)\P(\Samviewtwo)}$.
 For any $\batchsize\geq3$,
with probability at least $1-\delta$, we have
 \begin{align}
 \label{eq:excess_risk_bound_chisq}
\suffmeasurechi(\estfun)
\leq  \generr+\apperr,
\end{align}
where
\begin{subequations}
    \begin{align*}
        \generr&\defn 
           \frac{c\boundscoreconstcsq^2}{\sqrt\samsize}\Big[\sqrt{{\log(1/\delta)}}
    +\linkfunlip^2\int_{0}^{2(\boundscoreconstcsq+\linkfunlip)}\sqrt{\log\Covernum(u,\vecnorm{\cdot}{2,\infty},\funspace)}du\Big]
    ,
    \\
    \apperr&\defn \inf_{\fun\in\funspace}\vform_{\csquare}(\simscorep{\fun})-\vform_{\csquare}(\truesimscorecsq)
    \end{align*}
    for some absolute constant $c>0$.
\end{subequations}
\end{theorem}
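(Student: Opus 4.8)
The plan is to follow the structure of the proof of Theorem~\ref{thm:main_simclr_generalization}, while exploiting one feature that makes the $\chi^2$ case cleaner: unlike the InfoNCE risk $\orisk_{\simclr,\batchsize}$, which approximates $\vform_\kl$ only up to an $O(1/\batchsize)$ correction (cf.\ Eq.~\eqref{eq:intuitive_bound_suff}), the empirical objective $\emprisk_{\chisq,\batchsize}(\simscorep{\fun})$ in Eq.~\eqref{eq:chisq-empirical-risk-minimization} is engineered to be an \emph{exactly unbiased} estimator of the population $\chi^2$-contrastive loss $\vform_{\csquare}(\simscorep{\fun})$; this is why no multiplicative $(1+C/\batchsize)$ factor appears in~\eqref{eq:excess_risk_bound_chisq}. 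I would split the argument into (i) a reduction from $\suffmeasurechi(\estfun)$ to the excess $\chi^2$-contrastive risk, (ii) verification of unbiasedness together with the standard empirical-risk-minimization decomposition, and (iii) a uniform-deviation bound combining a bounded-differences inequality with symmetrization and chaining.

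For step (i), the variational-form bound displayed at the start of Section~\ref{sec:enc_low_fsuff}, applied with $\fdivfun(x)=(x-1)^2/2$, gives $\suffmeasurechi(\fun)\le\vform_{\csquare}(\simscorep{\fun})-\inf_{\simscore}\vform_{\csquare}(\simscore)$ for every encoder $\fun$; by Lemma~\ref{lm:vfs_properties} this infimum equals $-\mutinfo{\csquare}(\Samviewone,\Samviewtwo)=\vform_{\csquare}(\truesimscorecsq)$ with $\truesimscorecsq=\P(\Samviewone,\Samviewtwo)/(\P(\Samviewone)\P(\Samviewtwo))$ (here one uses that $\vform_{\csquare}$ is unchanged when any function of the first argument alone is added to $\simscore$, which reconciles $\truesimscorecsq$ with the generic minimizer $\fdivfun'(\truesimscorecsq)=\truesimscorecsq-1$). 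So it suffices to bound $\vform_{\csquare}(\simscorep{\estfun})-\vform_{\csquare}(\truesimscorecsq)$. For step (ii), fix $\fun$ and take expectations term by term in~\eqref{eq:chisq-empirical-risk-minimization}: since the $2\batchsize$ transformations forming a batch are i.i.d., for distinct indices $j,k,l$ the triple $(\Samviewone_{ij},\Samviewtwo_{ik},\Samviewtwo_{il})$ is one draw from $\P(x)$ and two independent draws from $\P(y)$, all mutually independent, so $\E[\tfrac14(\simscorep{\fun}(\Samviewone_{ij},\Samviewtwo_{ik})-\simscorep{\fun}(\Samviewone_{ij},\Samviewtwo_{il}))^2]=\tfrac12\E_{\P(x)\P(y)}[(\simscorep{\fun}(x,y)-\E_{\P(y')}[\simscorep{\fun}(x,y')])^2]$, while $\E[\simscorep{\fun}(\Samviewone_{ij},\Samviewtwo_{ik})]=\E_{\P(x)\P(y)}[\simscorep{\fun}(x,y)]$ for $k\neq j$ and $\E[\simscorep{\fun}(\Samviewone_{ij},\Samviewtwo_{ij})]=\E_{\P(x,y)}[\simscorep{\fun}(x,y)]$; comparing with~\eqref{eq:csquare_contrastive_loss} gives $\E[\emprisk_{\chisq,\batchsize}(\simscorep{\fun})]=\vform_{\csquare}(\simscorep{\fun})$, and the hypothesis $\batchsize\ge3$ is exactly what makes the sum over distinct $(j,k,l)$ nonempty. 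Writing $\Delta(\fun):=\emprisk_{\chisq,\batchsize}(\simscorep{\fun})-\vform_{\csquare}(\simscorep{\fun})$ and picking $\fun^\dagger\in\funspace$ that almost attains $\inf_{\fun\in\funspace}\vform_{\csquare}(\simscorep{\fun})$, the optimality of $\estfun$ yields the usual bound $\vform_{\csquare}(\simscorep{\estfun})-\vform_{\csquare}(\truesimscorecsq)\le 2\sup_{\fun\in\funspace}|\Delta(\fun)|+\apperr$.

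Step (iii) is the analytic core: I must show $\sup_{\fun\in\funspace}|\Delta(\fun)|\le\tfrac12\generr$ with probability at least $1-\delta$. First, regarding $\emprisk_{\chisq,\batchsize}(\simscorep{\fun})$ as a function of the $\samsize$ i.i.d.\ pairs $\{(\Samviewone_i,\Samviewtwo_i)\}$ and using $|\simscorep{\fun}|\le\boundscoreconstcsq$, replacing a single pair changes $\emprisk_{\chisq,\batchsize}(\simscorep{\fun})$ --- and hence $\sup_{\fun}|\Delta(\fun)|$ --- by $O(\boundscoreconstcsq^2/\samsize)$ uniformly in $\fun$ (that pair enters only its own batch, contributing $O(\boundscoreconstcsq^2)$ to its own anchor term and $O(\boundscoreconstcsq^2/\batchsize)$ to each of the other $\batchsize-1$ anchor terms), so McDiarmid's inequality gives $\sup_{\fun}|\Delta(\fun)|\le\E[\sup_{\fun}|\Delta(\fun)|]+O(\boundscoreconstcsq^2\sqrt{\log(1/\delta)/\samsize})$, which supplies the $\sqrt{\log(1/\delta)}$ piece of $\generr$. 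To control $\E[\sup_{\fun}|\Delta(\fun)|]$ at the batch-size-free rate $1/\sqrt\samsize$ I would use the Hoeffding (ANOVA) decomposition of the within-batch $U$-statistics in~\eqref{eq:chisq-empirical-risk-minimization}: the first-order projection is an empirical average over all $\samsize$ pairs of a bounded single-pair functional of $\simscorep{\fun}$ and its conditional expectations, while the completely degenerate remainder has variance $O(\boundscoreconstcsq^4/(\samsize\batchsize))$ and is thus lower order; for the leading part, symmetrization followed by Dudley's entropy integral --- combined with a contraction step transferring Lipschitz continuity of the quadratic $\chi^2$-integrand in $\simscorep{\fun}$ (Lipschitz constant $O(\boundscoreconstcsq)$) to Lipschitz continuity in $\fun$ under $\vecnorm{\cdot}{2,\infty}$, via $|\simscorep{\fun}(x,y)-\simscorep{\fun'}(x,y)|\le 2\linkfunlip\,(\sup_{\fun}\vecnorm{\fun}{2,\infty})\,\vecnorm{\fun-\fun'}{2,\infty}$ together with the a priori bound $\sup_{\fun\in\funspace}\vecnorm{\fun}{2,\infty}^2\le\linkfunlip(\boundscoreconstcsq+\linkfunlip)$ (from Assumption~\ref{ass:link_fun}, $|\simscorep{\fun}|\le\boundscoreconstcsq$, and the equality of the supports of $\Samviewone$ and $\Samviewtwo$) --- yields the $\linkfunlip^2\int_0^{2(\boundscoreconstcsq+\linkfunlip)}\sqrt{\log\Covernum(u,\vecnorm{\cdot}{2,\infty},\funspace)}\,du$ term, with the degenerate remainder absorbed into the same expression up to an absolute constant. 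Assembling (i)--(iii) gives~\eqref{eq:excess_risk_bound_chisq}.

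The step I expect to be the main obstacle is exactly this decoupling inside step (iii): because $\emprisk_{\chisq,\batchsize}$ is a \emph{sum of within-batch $U$-statistics} rather than an i.i.d.\ average, a naive batch-level symmetrization would only yield the rate $1/\sqrt\numbatch$; obtaining the claimed $1/\sqrt\samsize$ forces one to isolate the first-order Hoeffding projection before symmetrizing and chaining, and then to argue that the degenerate higher-order components are genuinely lower order uniformly over $\funspace$, all while keeping the dependence on the link function $\linkfun$, on the a priori radius of $\funspace$ in $\vecnorm{\cdot}{2,\infty}$, and on the covering numbers under control. This bookkeeping is conceptually routine given the corresponding argument for Theorem~\ref{thm:main_simclr_generalization}, but it is where the care is needed.
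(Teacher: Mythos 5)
Your steps (i) and (ii) --- the VFS reduction $\suffmeasurechi(\estfun)\le \vform_{\csquare}(\simscorep{\estfun})-\vform_{\csquare}(\truesimscorecsq)$, the exact unbiasedness of $\emprisk_{\chisq,\batchsize}$ (where $\batchsize\ge 3$ enters), and the ERM decomposition into $2\sup_{\fun\in\funspace}|\Delta(\fun)|+\apperr$ --- coincide with the paper's argument, as does the first half of step (iii): each sample pair perturbs $\emprisk_{\chisq,\batchsize}(\simscorep{\fun})$ by $O(\boundscoreconstcsq^2/\samsize)$ uniformly in $\fun$, so McDiarmid gives the $\boundscoreconstcsq^2\sqrt{\log(1/\delta)/\samsize}$ piece. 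Where you genuinely diverge is the bound on $\E[\sup_{\fun}|\Delta(\fun)|]$. The paper performs no Hoeffding/ANOVA projection of the within-batch $U$-statistics at all: it applies the bounded-differences inequality a second time, to the increments $\process_{\fun}-\process_{\afun}$ of the full empirical risk, showing via the Lipschitz bounds $|\simscorep{\fun}-\simscorep{\afun}|\le 2\boundfun\linkfunlip\vecnorm{\fun-\afun}{2,\infty}$ (and its quadratic analogue) that replacing one pair changes the increment by $O((\boundscoreconstcsq+1)\boundfun\linkfunlip\vecnorm{\fun-\afun}{2,\infty}/\samsize)$; hence the process is sub-Gaussian with respect to the metric $\bound\vecnorm{\fun-\afun}{2,\infty}/\sqrt{\samsize}$ and Dudley's entropy integral yields the batch-size-free $1/\sqrt{\samsize}$ rate in one stroke, treating all orders of the $U$-statistic simultaneously. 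Your route --- isolate the first-order projection, symmetrize, contract, chain, and handle the degenerate remainder separately --- correctly diagnoses why naive batch-level symmetrization would only give $1/\sqrt{\numbatch}$, and it can in principle be completed, but it is substantially heavier than what is needed, and the one point where it is not yet a proof is precisely the remainder: a pointwise variance bound of order $\boundscoreconstcsq^4/(\samsize\batchsize)$ does not control $\sup_{\fun\in\funspace}$ of the degenerate component, so you would need a decoupling/chaining argument for degenerate $U$-processes with its own entropy control --- or, more simply, the paper's increment-level bounded-differences device, which makes the entire decomposition unnecessary.
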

The proof of Theorem~\ref{thm:main_chisq_generalization} is provided in Appendix~\ref{sec:pf_thm:main_chisq_generalization}. 
Note that we do not assume the boundedness of $\truesimscorecsq$ as in Theorem~\ref{thm:main_simclr_generalization}.

\subsubsection{Implications of low $\fdivfun$-Sufficiency}\label{sec:imp_low_suff}
Similar to the KL case in Section~\ref{sec:simclr_downstream}, the downstream performance of $\fun$ can be controlled by its $\fdivfun$-sufficiency for a broad class of  $\fdivfun$ considered in Definition~\ref{def:general_app_suff}.
Recall the CBS form in Definition~\ref{def:general_app_suff}.
\begin{proposition}[$\fdivfun$-sufficiency bound on downstream performance]\label{prop:f_div_downstream}
The results in Theorem~\ref{thm:general_downstream_bound}~and~\ref{thm:general_downstream_bound_cls} hold with $\suffmeasure_\kl(\fun)$ replaced by $c^2_2\cdot\Suff_\fdivfun(\fun)$ for some  value $c_2>0$ if 
\begin{align}
  \E_{\Samviewone}[\TV(\P(\cdot | \Samviewone)||\P_{\Samviewtwo|\Samviewone}(\cdot | \fun(\Samviewone)))] 
   \leq 
   c_2\cdot \sqrt{\Suff_{\fdivfun}(\fun)}
   \label{eq:general_f_downstream_cond}.
\end{align}
\end{proposition}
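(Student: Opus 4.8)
The plan is to verify that in the proofs of Theorems~\ref{thm:general_downstream_bound} and~\ref{thm:general_downstream_bound_cls} the KL-sufficiency $\Suff_\kl(\fun)$ enters \emph{only} through a control on the average total-variation distance $\Delta(\fun)\defn\E_{\Samviewone,\Samviewtwo}\big[\TV(\P(\cdot|\Samviewone)\,||\,\P_{\Samviewtwo|\Samviewone}(\cdot|\fun(\Samviewone)))\big]$ that appears on the left-hand side of~\eqref{eq:general_f_downstream_cond} (here $\P(\cdot|\Samviewone)$ is the conditional law of $\Samviewtwo$ given $\Samviewone$, and the integrand does not depend on $\Samviewtwo$). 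Specializing the CBS form in Definition~\ref{def:general_app_suff} to $\fdivfun(x)=x\log x$, applying the change of measure $\E_{\P(x)\P(y)}[\tfrac{\P(y|x)}{\P(y)}g(x,y)]=\E_{\P(x,y)}[g(x,y)]$, and observing that the two leftover terms cancel because $\P(y|x)$ and $\P(y|\fun(x))$ both integrate to one over $y$, one obtains the identity $\Suff_\kl(\fun)=\E_{\Samviewone}\big[\KL(\P(\cdot|\Samviewone)\,||\,\P(\cdot|\fun(\Samviewone)))\big]$; Pinsker's inequality followed by Jensen's inequality then gives $\Delta(\fun)\le\sqrt{\tfrac12\Suff_\kl(\fun)}$. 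Hypothesis~\eqref{eq:general_f_downstream_cond} asserts that this same quantity satisfies $\Delta(\fun)\le c_2\sqrt{\Suff_\fdivfun(\fun)}$. Consequently, wherever either proof bounds a risk contribution by a constant multiple of $\Delta(\fun)$ and then invokes Pinsker, we may instead invoke~\eqref{eq:general_f_downstream_cond}: every occurrence of $\sqrt{\Suff_\kl(\fun)}$ in the two conclusions becomes $\sqrt{c_2^2\Suff_\fdivfun(\fun)}$, i.e.\ $\Suff_\kl(\fun)$ gets replaced by $c_2^2\Suff_\fdivfun(\fun)$, while the augmentation-error terms $\dsterr$ and $\dsterrcls$ are left untouched.

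To carry this out for the regression bound, I would revisit the proof of Theorem~\ref{thm:general_downstream_bound}: it selects the adapting map $\dstfun$ as a conditional expectation built from the feature $\fun$, isolates the augmentation-error term $\dsterr$ by comparing $\truehypo(\Sam)$ with $\truehypo(\tranfun(\Sam))$, and reduces the remaining ``compression'' contribution to the discrepancy between the conditional expectation of a $\boundthypo$-bounded quantity computed under $\P(\Samviewtwo|\Samviewone)$ versus under $\P(\Samviewtwo|\fun(\Samviewone))$. This discrepancy is bounded pointwise by a constant times $\boundthypo^2$ times $\TV(\P(\cdot|\Samviewone)\,||\,\P(\cdot|\fun(\Samviewone)))$ (using $\TV\le1$ to pass from $\TV^2$ to $\TV$ where the squared risk requires it), so taking expectations yields a bound of the form $c(\boundthypo^2\Delta(\fun)+\dsterr)$; inserting~\eqref{eq:general_f_downstream_cond} in place of Pinsker gives~\eqref{eq:thm_general_downstream_bound}, and the special case $\Samviewone\overset{d}{=}\Sam$ gives~\eqref{eq:thm_general_downstream_bound1}, both with $\Suff_\kl(\fun)$ replaced by $c_2^2\Suff_\fdivfun(\fun)$.

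For the classification bound I would revisit the proof of Theorem~\ref{thm:general_downstream_bound_cls}: it takes $\dstfun$ as the conditional label law given the feature, splits the excess KL risk into a piece absorbed into the symmetrized Rényi-$2$ term $\dsterrcls$ (coming from the change from $\P(\resp|\Sam)$ to $\P(\resp|\Samviewmain)$) and a ``compression'' piece, and uses the lower bound $\inf_y\P(y|\tranfun(\Sam))\ge e^{-\boundclsgen}$ in a reverse-Pinsker-type estimate to bound the compression piece by a $\boundclsgen$-dependent factor times $\Delta(\fun)$. Since this is once more the only place where $\Suff_\kl(\fun)$ appears, substituting~\eqref{eq:general_f_downstream_cond} gives~\eqref{eq:thm_general_downstream_bound_cls} with $\Suff_\kl(\fun)$ replaced by $c_2^2\Suff_\fdivfun(\fun)$.

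The mathematical content is exhausted by the first paragraph; the remainder is bookkeeping, and that is where the main obstacle lies. One must confirm that $\Suff_\kl(\fun)$ genuinely appears nowhere else in either proof---in particular that the passage from the feature-space conditional $\P(\Samviewtwo|\fun(\Samviewone))$ to the downstream-label conditional $\P(\resp|\fun(\tranfun(\Sam)))$ in Theorem~\ref{thm:general_downstream_bound_cls} uses only marginalization and the conditional-independence structure of $(\Sam,\Samviewone,\Samviewtwo,\resp)$, not any KL-specific identity such as the chain rule, so that it remains valid when $\Delta(\fun)$ is controlled via~\eqref{eq:general_f_downstream_cond} rather than via Pinsker. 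Two minor points close the argument: $\dstfun$ is defined by the same formula as in the KL case, so its measurability requires no new reasoning, and the constant $c_2$ in the conclusion is precisely the one supplied by the hypothesis, with no further optimization needed.
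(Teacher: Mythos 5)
Your proposal is correct and takes essentially the same route as the paper: the paper's proof of Proposition~\ref{prop:f_div_downstream} consists exactly of the observation that in the proofs of Theorem~\ref{thm:general_downstream_bound} and Theorem~\ref{thm:general_downstream_bound_cls} the quantity $\suffmeasure_\kl(\fun)$ enters only as an upper bound (Pinsker plus Jensen) on the expected total-variation term appearing in \eqref{eq:general_f_downstream_cond}, so the hypothesis lets one replace $\sqrt{\suffmeasure_\kl(\fun)}$ by $c_2\sqrt{\Suff_{\fdivfun}(\fun)}$ throughout, leaving $\dsterr$ and $\dsterrcls$ untouched. Your minor paraphrasing slips (the ``$\TV^2$ to $\TV$'' remark---the paper bounds the bounded squared difference directly via the variational form of total variation---and calling the classification step ``reverse-Pinsker'') do not affect the validity of the argument.
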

Proposition~\ref{prop:f_div_downstream} follows immediately by noting that, in the proof of Theorem~\ref{thm:general_downstream_bound}~and~\ref{thm:general_downstream_bound_cls},
$\Suff_\kl(\fun)$ is only used as an upper bound of the expected  total variation distance (e.g., by Pinsker's inequality). It can be verified that KL-divergence and $\csquare$-divergence satisfy Eq.~\eqref{eq:general_f_downstream_cond} with $c_2=1/\sqrt{2}$. Let $r=\P(\Samviewone,\Samviewtwo)/[\P(\Samviewone)\P(\Samviewtwo)]$ denote the density ratio.
Moreover, for general $\fdivfun$, we can choose $c_2=(2\inf_{(\Samviewone,\Samviewtwo)}\fdivfun^{''}(r))^{-1/2}$, which is bounded when $\fdivfun$ is strongly convex on the range of the density ratio $r.$
For example, we can choose $c_2=\sqrt{2}\bound^{3/4}$ when $\fdivfun(x) = 1 - \sqrt{x}$ corresponds to squared Hellinger-sufficiency if the density ratio $r \leq \bound$ for all pairs $(\Samviewone, \Samviewtwo)$.
We refer the readers to Lemma~\ref{lm:tv_upper_bound_gen} in Appendix~\ref{sec:def_properties} for further details. 
Combining the results from Sections~\ref{sec:enc_low_fsuff} and~\ref{sec:imp_low_suff}, we  provide end-to-end theoretical guarantees for the downstream performance of encoders obtained by minimizing  general $\fdivfun$-contrastive losses.

\section{Examples}\label{sec:example}
In this section, we present concrete examples on linear regression and topic classification to illustrate the applicability of our general results in Section~\ref{sec:general_contrastive_learning}.

\subsection{Linear regression}\label{sec:exm_linear_regression}
Let $\Sam$ follow a distribution $\Distsam$ on $\Samspace \subseteq \R^{\dimlin}$. We consider a downstream linear regression task, where each observed sample takes the form $(\Sam,\resp) \in \R^{\dimlin} \times \R$, with the conditional expectation $\E[\resp|\Sam]=\inprod{\Sam}{\TruePar}$ for some unknown parameter $\TruePar\in\R^{\dimlin}$. The goal is to predict $\resp$ given $\Sam$. 
While fitting a linear model using only the downstream samples yields a risk of order $\bigO(\dimlin/\dstsam)$, where $\dstsam$ is the number of downstream samples, a smaller risk may be achieved by fitting a linear model on a low-dimensional representation $\fun(\Samviewmain) \in \R^{\dimfun}$, where $\dimfun \ll \dimlin$, that captures sufficient information about $\Sam$ relevant to the downstream task.

Concretely,  suppose we are given a linear encoder $\fun(\Samviewmain)=\linmap\Samviewmain$ for some $\linmap\in\R^{\dimfun\times\dimlin}$ and  $\dstsam$ i.i.d. downstream samples  $\{(\Sam_i,\resp_i)\}_{i=1}^\dstsam$  from the  linear model $\resp=\inprod{\Sam}{\TruePar}+\noise$, where $\noise\sim\cN(0,\lindststd^2)\indep\Sam$. Suppose $\sup_{\Sam\in\Samspace}\vecnorm{\Sam}{2}\leq\boundsam,\vecnorm{\TruePar}{2}\leq\boundPar$ for some $\boundsam,\boundPar>0$ and let $\bound=\boundsam\boundPar.$ 
Also assume that $\E[(\idmat_\dimlin-\linmap^\dagger\linmap)\Samviewmain|\linmap\Samviewmain]=0$ almost surely.
Theorem~\ref{thm:linear_downstream} below gives a theoretical guarantee for learning the downstream task using a given linear encoder. 

\begin{theorem}[Linear regression with encoder representation]\label{thm:linear_downstream}
   Let $\dimfun\leq\dimlin.$ Under the setup and assumptions in Section~\ref{sec:exm_linear_regression},
    consider fitting a  linear model $\dstfun_{\estlinparam}(\Sam)=\<\fun(\Samviewmain),\estlinparam\>$ by ordinary least squares, i.e., 
    \begin{align*}
       \estlinparam\defn \argmin_{\linparam\in\R^{\dimfun}}\Big\{\emprisk_{\lin}(\dstfun_{\linparam})\defn\frac{1}{\dstsam}\sum_{i=1}^\dstsam(\inprod{\fun(\Samviewmain_i)}{\linparam}-\resp_i)^2\Big\},
    \end{align*}
    where $\Samviewmain=\tranfun(\Sam),\Samviewmain_i=\tranfun_i(\Sam_i)$, and $\tranfun,\{\tranfun\}_{i=1}^\dstsam$ are i.i.d. transformations from $\Disttran$. 
    Then the expected risk of the truncated linear model $\dstfuntil_{\estlinparam}(\Sam)\defn\proj_{[-\bound,\bound]}(\dstfun_{\estlinparam}(\Sam))$ satisfies
    \begin{align*}
      \E [\risk_{\lin}(\dstfuntil_{\estlinparam})]\defn\E\big[\E_{\Sam,\resp,\tranfun}[(\resp-\dstfuntil_{\estlinparam}(\Sam))^2]\big]
        \leq \underbrace{\lindststd^2}_{\textup{irreducible risk}}+ c\Big(
        (\bound^2c_2\sqrt{\suffmeasure_{\fdivfun}(\fun)}+\dsterr)+(\lindststd^2+\bound^2)\frac{\dimfun\log\dstsam}{\dstsam}\Big),
    \end{align*}
    where $\dsterr=\E[\inprod{\Sam-\Samviewmain}{\TruePar}^2]$
    and 
    the outer expectation is over  ${\{(\Sam_i,\resp_i,\tranfun_i)\}_{i=1}^\samsize}$ for some absolute constant $c>0$. Here, $c_2>0$ is any value that satisfies Eq.~\eqref{eq:general_f_downstream_cond}. 
\end{theorem}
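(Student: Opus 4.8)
The plan is to run a bias--estimation decomposition of the downstream risk entirely in the feature space $Z:=\fun(\tranfun(\Sam))=\linmap\tranfun(\Sam)$, peeling off the irreducible $\lindststd^2$, controlling the estimation part by a standard oracle inequality for least squares over the $\dimfun$-dimensional linear class, and reducing the approximation part --- via the conditional-mean assumption --- to the best-measurable-predictor risk, which Theorem~\ref{thm:general_downstream_bound} and Proposition~\ref{prop:f_div_downstream} bound by the sufficiency plus $\dsterr$. Concretely, since the test noise $\noise\sim\cN(0,\lindststd^2)$ is independent of $(\Sam,\tranfun)$ and of the training sample, writing $m^\star(z):=\E[\inprod{\Sam}{\TruePar}\mid\fun(\tranfun(\Sam))=z]$ and using that $\inprod{\Sam}{\TruePar}-m^\star(Z)$ is conditionally mean zero given $Z$ (so its cross term with any $Z$-measurable quantity vanishes),
\[
\E[\risk_\lin(\dstfuntil_{\estlinparam})]
=\lindststd^2
+\E\big[\Var(\inprod{\Sam}{\TruePar}\mid Z)\big]
+\E_{\mathrm{train}}\,\E_Z\big[(\dstfuntil_{\estlinparam}(Z)-m^\star(Z))^2\big].
\]

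For the last term, $\estlinparam$ is exactly the ordinary least squares fit of $\{\resp_i\}$ on the $\dimfun$-dimensional features $\{Z_i\}$ and $\dstfuntil_{\estlinparam}=\proj_{[-\bound,\bound]}(\inprod{\cdot}{\estlinparam})$; one has $|m^\star|\le\bound$ (Cauchy--Schwarz with $\vecnorm{\Sam}{2}\le\boundsam$, $\vecnorm{\TruePar}{2}\le\boundPar$, $\bound=\boundsam\boundPar$), and the feature-space noise $\resp-m^\star(Z)$ is conditionally mean zero with variance $\Var(\inprod{\Sam}{\TruePar}\mid Z)+\lindststd^2\le\bound^2+\lindststd^2$ and sub-Gaussian (a bounded part plus a Gaussian part). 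Plugging these into a standard oracle inequality for truncated least squares over a linear function class of dimension $\dimfun$ (the truncation being what makes the squared loss uniformly bounded over the class) gives $\E_{\mathrm{train}}\E_Z[(\dstfuntil_{\estlinparam}(Z)-m^\star(Z))^2]\le c(\lindststd^2+\bound^2)\frac{\dimfun\log\dstsam}{\dstsam}+c\inf_{\linparam\in\R^\dimfun}\E[(\inprod{Z}{\linparam}-m^\star(Z))^2]$. Since the regression residual $\inprod{\Sam}{\TruePar}-m^\star(Z)$ is $L^2$-orthogonal to every linear function of $Z$, Pythagoras gives $\E[\Var(\inprod{\Sam}{\TruePar}\mid Z)]+\inf_{\linparam}\E[(\inprod{Z}{\linparam}-m^\star(Z))^2]=\inf_{\linparam}\E[(\inprod{Z}{\linparam}-\inprod{\Sam}{\TruePar})^2]=:\mathrm{Bias}^2$, the approximation error of the best linear predictor on the representation. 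Hence $\E[\risk_\lin(\dstfuntil_{\estlinparam})]\le\lindststd^2+c(\lindststd^2+\bound^2)\frac{\dimfun\log\dstsam}{\dstsam}+c\,\mathrm{Bias}^2$, and it remains to bound $\mathrm{Bias}^2$.

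For the bias I would invoke the assumption $\E[(\idmat_\dimlin-\linmap^\dagger\linmap)\Samviewmain\mid\linmap\Samviewmain]=0$, which forces $\E[\inprod{\Samviewmain}{v}\mid\linmap\Samviewmain]=\inprod{\linmap\Samviewmain}{(\linmap^\dagger)^\top v}$ to be linear in $Z=\linmap\Samviewmain$ for every $v\in\R^\dimlin$. Splitting $\inprod{\Sam}{\TruePar}=\inprod{\Samviewmain}{\TruePar}+\inprod{\Sam-\Samviewmain}{\TruePar}$ and conditioning, $m^\star(Z)=\inprod{Z}{\linparam_0}+h(Z)$ with $\linparam_0=(\linmap^\dagger)^\top\TruePar$ and $\E[h(Z)^2]\le\E[\inprod{\Sam-\Samviewmain}{\TruePar}^2]=\dsterr$ by conditional Jensen, so $\inf_{\linparam}\E[(\inprod{Z}{\linparam}-m^\star(Z))^2]\le\dsterr$; therefore $\mathrm{Bias}^2=\E[\Var(\inprod{\Sam}{\TruePar}\mid Z)]+\inf_{\linparam}\E[(\inprod{Z}{\linparam}-m^\star(Z))^2]\le\E[\Var(\inprod{\Sam}{\TruePar}\mid Z)]+\dsterr$. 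Finally, $\E[\Var(\inprod{\Sam}{\TruePar}\mid Z)]$ is precisely the risk of the best measurable predictor of $\inprod{\Sam}{\TruePar}$ from $\fun(\Samviewmain)$, and applying Theorem~\ref{thm:general_downstream_bound} with $\truehypo(\Sam)=\inprod{\Sam}{\TruePar}$ (so $\boundthypo=\bound$) together with Proposition~\ref{prop:f_div_downstream} bounds it by $c(\bound^2 c_2\sqrt{\suffmeasure_{\fdivfun}(\fun)}+\dsterr)$, with $\dsterr=\E[\inprod{\Sam-\Samviewmain}{\TruePar}^2]$ matching the statement and $c_2$ as in Proposition~\ref{prop:f_div_downstream}. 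Combining with the previous paragraph and absorbing constants yields the claimed bound.

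The hard part will be the oracle inequality for truncated least squares: the features $Z_i$ have an arbitrary, possibly rank-deficient, distribution, so one cannot directly control an empirical-Gram inverse; and although the truncation onto $[-\bound,\bound]$ makes the loss class uniformly bounded, one must still (i) pass from in-sample to out-of-sample error for a $\dimfun$-dimensional linear class, which is what produces the extra $\log\dstsam$ factor, (ii) handle the unbounded Gaussian label noise so that only $\lindststd^2$, rather than $\lindststd^2\log\dstsam$, enters the rate, and (iii) absorb the misspecification $m^\star-\inprod{\cdot}{\linparam_0}$ without degrading the rate --- all of which require a careful localized uniform-convergence argument rather than a direct computation. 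Everything else --- the noise/bias split, the linearity of the conditional mean under the stated assumption, and the invocation of Theorem~\ref{thm:general_downstream_bound} and Proposition~\ref{prop:f_div_downstream} --- is routine.
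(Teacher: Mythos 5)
Your proposal is correct and follows essentially the same route as the paper: a black-box oracle inequality for truncated least squares over the $\dimfun$-dimensional linear class (the paper cites Gy\"orfi et al./Audibert--Catoni for exactly the bound you assert, so the "hard part" you flag is handled by citation, not a new localization argument), combined with bounding the linear-class approximation error through the assumption $\E[(\idmat_\dimlin-\linmap^\dagger\linmap)\Samviewmain\mid\linmap\Samviewmain]=0$, the term $\dsterr$, and Theorem~\ref{thm:general_downstream_bound} with Proposition~\ref{prop:f_div_downstream}. The only (harmless) difference is bookkeeping: you run a Pythagoras split around $m^\star(Z)=\E[\langle\Sam,\TruePar\rangle\mid Z]$ and bound the best-measurable risk $\E[\Var(\langle\Sam,\TruePar\rangle\mid Z)]$ directly by Theorem~\ref{thm:general_downstream_bound} as stated, whereas the paper exhibits the linear predictor $\bu\mapsto\langle\TruePar,\E[\Samviewmain\mid\fun(\Samviewmain)=\bu]\rangle$ and invokes the remarked variant of that theorem for this choice of $\dstfun$; both yield the same bound.
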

The proof of Theorem~\ref{thm:linear_downstream} is contained in Appendix~\ref{sec:pf_thm:linear_downstream}. Compared to fitting a linear model on the raw feature $\Sam\in\R^{\dimlin}$, which yields an excess risk of $\bigO(\dimlin/\dstsam)$, Theorem~\ref{thm:linear_downstream} achieves a smaller excess risk of order $\bigOtil(\dimfun/\dstsam)$
when $\dimfun\ll\dimlin$ and $\fun(\tranfun(\Sam))$ is a ``good" representation of $\Sam$, in the sense that $\suffmeasure_\fdivfun(\fun)$ and $\dsterr$ are sufficiently small.
A similar bound can be established for the risk $\risk_\lin(\dstfuntil_{\estlinparam})$ with high probability under additional sub-Gaussian  assumptions on the representation $\fun(\Samviewmain)=\linmap\tranfun(\Sam)$~\cite{hsu2011analysis}. 
 We provide the bound in expectation $ \E [\risk_{\lin}(\dstfuntil_{\estlinparam})]$ for simplicity of presentation.
 
 The assumption $\E[(\idmat_\dimlin-\linmap^\dagger\linmap)\Samviewmain|\linmap\Samviewmain]=0$  essentially states that the information of the augmented view $\Samviewmain$ discarded by the encoder $\fun$ does not contain any signal with a non-zero mean. 
Without this assumption, there may not exist a linear function of
$
\fun(\Samviewmain)$ that achieves a small risk $\risk_{\lin}(\cdot)$, even though Theorem~\ref{thm:general_downstream_bound} guarantees the existence of  a general function of $\fun(\Samviewmain)$ with a small risk. Note that the assumption is satisfied when e.g., $\Samviewmain$
follows the standard normal distribution on $\R^{\dimlin}$.

\subsubsection{A concrete scenario}\label{sec:conrecte_scenario_linear}

We now present a scenario in which a linear encoder $\fun$ with low KL-sufficiency $\Suff_\kl(\fun)$ can be obtained through SimCLR loss minimization in Eq.~\eqref{eq:simclr-empirical-risk-minimization}.
Let $\linlow = (\linlow_1,\linlow_2)\in\R^{\dimlin\times\dimlin}$, where $\linlow_1\in\R^{\dimlin\times\dimfun}$, be a fixed unitary matrix, and define  $\lintran=\linlow_1\linlow_1^\top$. For $i\in[2]$, define the unit sphere in the column space of $\linlow_i$ as $\sphere(\linlow_i)\defn\{\bv\in\R^{\dimlin}:\vecnorm{\bv}{2}=1,(\idmat_\dimlin-\linlow_i\linlow_i^\top)\bv=\bzero\}$.
Assume
$\Sam\in\R^{\dimlin}\sim\cN(\bzero,\idmat_\dimlin/{\dimfun})$
and consider the random transformation $\tranfun$ such that $\tranfun(\Sam)|\Sam \overset{d}{=} ( \lintran\Sam+\lintrannoise) | \{\lintran\Sam+\lintrannoise \in \sphere(\linlow_1)\oplus\sphere(\linlow_2)\}$, i.e., the conditional distribution $\tranfun(\Sam)|\Sam $ follows the distribution of $\lintran\Sam+\lintrannoise$ conditioned on $\sphere(\linlow_1)\oplus\sphere(\linlow_2)$,
%
\footnote{$\sphere(\linlow_1)\oplus\sphere(\linlow_2)\defn\{\bv\in\R^{\dimlin}:\bv=\bv_1+\bv_2 \text { for some }\bv_1\in\sphere(\linlow_1),\bv_2\in\sphere(\linlow_2)\}$.} where the noise $\lintrannoise\sim\cN(\bzero,\lintranstd^2\idmat_\dimlin/{\dimfun})$. A concrete example of this transformation involves zeroing out the second half of the coordinates 
of the sample $\Sam\in\R^{\dimlin}$, adding some Gaussian noise to all coordinates, and then normalizing both halves of the noisy sample to have unit norm. In this case, $\linlow_1,\linlow_2$ correspond to the first and second halves of the coordinates, respectively.

 Under this setup, it is readily verified that the distribution of $(\Samviewone,\Samviewtwo)$ is supported on $\sphere(\linlow_1)\oplus\sphere(\linlow_2)$, and conditioned on $\sphere(\linlow_1)\oplus\sphere(\linlow_2)$, the densities satisfy
 \footnote{All densities are with respect to the Lebesgue measure.} 
\begin{align*}
    \P(\Samviewone,\Samviewtwo)&\propto~ 
\exp
\left(-\frac{\dimfun}{2}
\left\< \begin{pmatrix}\Samviewone\\
\Samviewtwo
\end{pmatrix},\begin{bmatrix}
    \linlow_1\linlow_1^\top +  \lintranstd^2 \idmat_\dimlin &  \linlow_1\linlow_1^\top\\
     \linlow_1\linlow_1^\top&
      \linlow_1\linlow_1^\top
      +
   \lintranstd^2 \idmat_\dimlin 
\end{bmatrix}^{-1}\begin{pmatrix}\Samviewone\\
\Samviewtwo
\end{pmatrix}
\right\>
\right),
\\
\P(\Samviewone)&\propto~ 
1~~~~\text{and}
,\\
\frac{ \P(\Samviewone,\Samviewtwo)}{\P(\Samviewone)\P(\Samviewtwo)}
&\propto~ \exp
\big(\kappa
\< \Samviewone,
    \linlow_1\linlow_1^\top 
\Samviewtwo\>\big)
, 
~~~~~~~\kappa\defn\frac{\dimfun}{\lintranstd^2(\lintranstd^2+2)}\leq\frac{\dimfun}{\lintranstd^4} .
\end{align*}
Note that $(\Samviewone,\Samviewtwo)$ restricting on $\sphere(\linlow_1) \times \sphere(\linlow_1)$ follows 
the joint von Mises-Fisher distribution (vMF)~\cite{fisher1953dispersion}. 
In this case, the optimal score is given by $\truesimscore(\Samviewone,\Samviewtwo)=\linkfun(\inprod{\truefun(\Samviewone)}{\truefun(\Samviewtwo)})$, where $\linkfun(x)=\kappa x$ and $\truefun(\Samviewmain)=\linlow_1\Samviewmain.$ Moreover, we have the following guarantee on the sufficiency of the SimCLR estimator $\estfun$.
\begin{corollary}[An upper bound on the sufficiency]\label{cor:linear_encoder_learn}
Under the setup in Section~\ref{sec:conrecte_scenario_linear}, let $\funspace\defn\{\fun:\fun(\Samviewmain)=\linmap\Samviewmain,~~ \linmap\in\R^{\dimfun\times\dimlin} \text{  and }\opnorm{\linmap}\leq \boundlinmap\}$ for some $\boundlinmap\geq1$, and set $\linkfun(x)=\kappa x$. Define
 $\estfun$ as the SimCLR empirical risk minimizer obtained from Eq.~\eqref{eq:simclr-empirical-risk-minimization}, using batch size $\batchsize$ and $\samsize$ samples. Then, with probability at least $1-\delta$, we have 
 \begin{align*}
\suffmeasure_{\kl}(\estfun)
\leq 
\Big(1 +\frac{\polyshort}{\batchsize}\Big)
\cdot \sqrt\frac{\dimlin\dimfun \cdot \log\boundlinmap + \log(1/\delta)}{\samsize}
\end{align*}
for some constant $\polyshort>0$ that depends polynomially on $\exp(\kappa)$.
\end{corollary}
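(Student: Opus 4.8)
The plan is to obtain Corollary~\ref{cor:linear_encoder_learn} as a direct specialization of Theorem~\ref{thm:main_simclr_generalization} to the linear class $\funspace$ and the linear link $\linkfun(x)=\kappa x$. I would check three ingredients in turn: (i) that the approximation error in~\eqref{eq:excess_risk_bound_simclr} is exactly zero; (ii) that Assumptions~\ref{ass:bounded_score} and~\ref{ass:link_fun} hold with constants controlled by $\kappa$ and $\boundlinmap$; and (iii) that $\Covernum(u,\vecnorm{\cdot}{2,\infty},\funspace)$ obeys the usual parametric metric-entropy bound. Substituting these into Theorem~\ref{thm:main_simclr_generalization} and simplifying should give the claimed rate.

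First I would dispatch the approximation error. From the density-ratio identity in Section~\ref{sec:conrecte_scenario_linear}, $\truesimscore(\Samviewone,\Samviewtwo)=\kappa\inprod{\Samviewone}{\linlow_1\linlow_1^\top\Samviewtwo}-\log Z$ for the positive normalizing constant $Z$. Taking $\fun=\truefun$ with $\truefun(\Samviewmain)=\linlow_1^\top\Samviewmain$ --- which lies in $\funspace$ since $\linlow_1$ has orthonormal columns, so $\opnorm{\linlow_1^\top}=1\le\boundlinmap$ --- gives $\simscorep{\truefun}(\Samviewone,\Samviewtwo)=\kappa\inprod{\linlow_1^\top\Samviewone}{\linlow_1^\top\Samviewtwo}=\truesimscore(\Samviewone,\Samviewtwo)+\log Z$. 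Because the InfoNCE risk $\orisk_{\simclr,\batchsize}$ is invariant under adding a global constant to the score (it cancels between the numerator and each denominator term), $\orisk_{\simclr,\batchsize}(\simscorep{\truefun})=\orisk_{\simclr,\batchsize}(\truesimscore)$; combined with the fact that $\truesimscore$ is a global minimizer of $\orisk_{\simclr,\batchsize}$ (Section~\ref{sec:simclr_erm_analysis}), this yields $\apperr=0$.

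Next I would bound the constants. On the support $\sphere(\linlow_1)\oplus\sphere(\linlow_2)$ every view satisfies $\vecnorm{\Samviewmain}{2}^2=1+1=2$, so for all $\fun\in\funspace$, $|\simscorep{\fun}(\Samviewone,\Samviewtwo)|=|\kappa\inprod{\linmap\Samviewone}{\linmap\Samviewtwo}|\le 2\kappa\boundlinmap^2$, while $|\inprod{\Samviewone}{\linlow_1\linlow_1^\top\Samviewtwo}|=|\inprod{\linlow_1^\top\Samviewone}{\linlow_1^\top\Samviewtwo}|\le 1$ and the density ratio has unit mean under $\P(\Samviewone)\P(\Samviewtwo)$, so it lies in $[e^{-2\kappa},e^{2\kappa}]$; hence Assumption~\ref{ass:bounded_score} holds with $\boundscoreconst=\exp(2\kappa\boundlinmap^2)$. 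For $\linkfun(x)=\kappa x$ one has $\linkfun(0)=0$ and $\linkfun,\linkfun^{-1}$ are $\max(\kappa,1/\kappa)$-Lipschitz, so Assumption~\ref{ass:link_fun} holds with $\linkfunlip=\max(\kappa,1/\kappa)$. For the covering number, linearity gives $\vecnorm{\fun-\fun'}{2,\infty}=\sup_{\Samviewmain}\vecnorm{(\linmap-\linmap')\Samviewmain}{2}\le\sqrt2\,\opnorm{\linmap-\linmap'}$, so a $u/\sqrt2$-net in operator norm of $\{\linmap\in\R^{\dimfun\times\dimlin}:\opnorm{\linmap}\le\boundlinmap\}$ is a $u$-net of $\funspace$, and the volumetric estimate for a convex body in $\R^{\dimfun\dimlin}$ gives $\log\Covernum(u,\vecnorm{\cdot}{2,\infty},\funspace)\le\dimfun\dimlin\log(3\sqrt2\,\boundlinmap/u)$ for $u\le 3\sqrt2\,\boundlinmap$ (and $\Covernum=1$ beyond that). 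Plugging these into Theorem~\ref{thm:main_simclr_generalization} with $\apperr=0$, the Dudley integral $\int_0^{2(\log\boundscoreconst+\linkfunlip)}\sqrt{\dimfun\dimlin\log(3\sqrt2\,\boundlinmap/u)}\,du$ is bounded by $\sqrt{\dimfun\dimlin\log\boundlinmap}$ times a factor polynomial in $\kappa$ and $\boundlinmap$; absorbing that factor, the $\linkfunlip^2$ multiplier, and the prefactor $\polyshort(\boundscoreconst)$ of Theorem~\ref{thm:main_simclr_generalization} into a single $\polyshort$, and using $\sqrt a+\sqrt b\asymp\sqrt{a+b}$ to merge with the $\sqrt{\log(1/\delta)}$ term, gives
\[
\suffmeasure_\kl(\estfun)\le\Big(1+\frac{\polyshort}{\batchsize}\Big)\sqrt{\frac{\dimlin\dimfun\log\boundlinmap+\log(1/\delta)}{\samsize}}.
\]

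The argument is essentially bookkeeping rather than a new idea. The one point I would be careful about is establishing $\apperr=0$ \emph{exactly} (not merely approximately): this rests on the additive-constant invariance of the InfoNCE objective, which removes the nuisance term $-\log Z$ that otherwise keeps $\truesimscore$ from having the factorized form $\linkfun(\inprod{\fun(\cdot)}{\fun(\cdot)})$. I would also have to track that every quantity entering Theorem~\ref{thm:main_simclr_generalization} that grows with $\kappa$ or $\boundlinmap$ --- namely $\boundscoreconst=\exp(2\kappa\boundlinmap^2)$, $\linkfunlip$, the Dudley upper limit, and the covering radius --- collapses into a single prefactor that is polynomial in $\exp(\kappa)$ once $\boundlinmap$ is treated as an $O(1)$ constant, which is precisely why $\boundlinmap$ appears only inside the logarithm in the final bound.
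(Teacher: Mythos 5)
Your proposal is correct and follows essentially the same route as the paper's proof: apply Theorem~\ref{thm:main_simclr_generalization}, kill the approximation error via realizability of $\truesimscore$ up to an additive constant (using the constant-invariance of the InfoNCE risk), verify Assumptions~\ref{ass:bounded_score}--\ref{ass:link_fun} from the vMF density-ratio bounds, and plug the parametric covering number of the linear class into the Dudley integral. The only divergence is bookkeeping: you take the class-wide bound $\boundscoreconst=\exp(2\kappa\boundlinmap^2)$ and flag that the stated prefactor (polynomial in $\exp(\kappa)$ alone) requires treating $\boundlinmap$ as $O(1)$, whereas the paper simply asserts $\boundscoreconst=\exp(2\kappa)$ and $\linkfunlip=2\kappa$; your handling is, if anything, the more careful one on this point.
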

See Appendix~\ref{sec:pf_cor:linear_encoder_learn} for the proof. Note that the constant $\exp(\kappa)$ depends on the noise level $\lintranstd$. When $\lintranstd\gtrsim\dimfun^{1/4}$, finding a near-sufficient encoder is relatively easy. 
Combining Theorem~\ref{thm:linear_downstream} and Corollary~\ref{cor:linear_encoder_learn}, we conclude that the learned encoder $\estfun$ can achieve a small risk in the downstream linear regression task, provided that there are sufficient pretraining and downstream samples, and that data augmentation does not significantly alter the output of the true linear model (i.e., $\dsterr$ is small). 
See Appendix~\ref{sec:linear_end_to_end} for an end-to-end statement and its proof. 

\subsection{Topic classification}\label{sec:exm_image}

Next, we provide theoretical guarantees for contrastive learning and its downstream performance in a classification setting. Let $\clsset = \{1, 2, \ldots, \clsnum\}$  represent a set of classes. A sample $\img$ is generated by first selecting a class $\clsins \in \clsset$ from some distribution $\Distcls$, and then drawing $\img=(\imgc{1},\imgc{2})\in[\imgstate]\times[\imgstate]$ conditioned on $\clsins$, with the joint distribution 
\begin{align*}
\P(\img | \clsins) = \Distimgone(\imgc{1} | \clsins) \times \Distimgone(\imgc{2} | \clsins),
\end{align*}
where $\Distimgone(\cdot|\clsins)$ is some conditional distribution over $ [\imgstate]$.
For example, in a topic classification task, each sample consists of a  two-part sentence (or a two-word phrase), with the class $\clsins$ representing the topic (e.g., sports,  technology, or health).  The first and second  parts (or words), $\imgc{1}$ and $\imgc{2}$, are independently sampled from a vocabulary of size $\imgstate$, conditioned on the topic $\clsins$.

\paragraph{Contrastive learning.}
We consider learning a near-sufficient encoder $\fun$ via minimizing the $\csquare$-contrastive loss.
Namely,
 we consider the random dropout transformation $\tranfun: [\imgstate] \times [\imgstate] \to [\imgstate]$, which selects one component $\imgc{i}$ from the pair $(\imgc{1}, \imgc{2})$ with equal probability as the augmented view $\Samviewmain$ and drops the other. With slight abuse of notation, we also  denote the augmented view $\Samviewmain$ using one-hot encoding. 
 We consider encoders $\fun$ that are linear functions of $\Samviewmain$ augmented with the one-hot encoding, namely, consider the encoder space
\begin{align*}\funspace=\{\funaug:\cup_{i=1}^\imgstate\{e_i\}\mapsto\R^{\clsnum+\imgstate}~|\funaug(\Samviewmain)=((\imgmap\Samviewmain)^\top, \imgwei\cdot\Samviewmain^\top)^\top,~\imgmap\in\R^{\clsnum\times\imgstate},\imgwei\in\R,\vecnorm{\imgmap}{2,\infty}\vee|\imgwei/\sqrt{\imgstate}|\leq\boundimgmap\}
\end{align*}
with 
    $\boundimgmap={\clsnum}$. 
To learn an encoder $\estfunaug$, we minimize the $\csquare$-contrastive loss computed using $\samsize$ i.i.d. pairs of augmented views via Eq.~\eqref{eq:chisq-empirical-risk-minimization}. Importantly, class labels $\{\clsins_i\}_{i=1}^{\samsize}$ remain \textit{unobservable} during contrastive learning. We note that a similar data distribution was studied in~\cite{tosh2021contrastive1}, where the augmented views correspond deterministically to the first and second components of the sample.
\\

\paragraph{Downstream classification.}
We consider a downstream task in which we are given i.i.d. samples
$\{(\Sam_i,\clsins_i)\}_{i=1}^{\dstsam}$ from the joint distribution of $(\Sam, \resp)$, and the goal is to learn 
 the conditional topic distribution ${\P(\clsins = y | \img)}_{y \in [\clsnum]} \in \R^{\clsnum}$ using an encoder. 
 Let \( \estfunaug(\Samviewmain) = ((\estimgmap\Samviewmain)^\top, \estimgwei\cdot\Samviewmain^\top)^\top \) be the representation learned from contrastive learning, and define the encoder as \( \estfun(\Samviewmain) \defn \estimgmap\Samviewmain \in \R^{\clsnum} \). We train a multi-class linear classifier on \( \estfun \) to predict the topic distribution.

Define the gold representation $\trueimgrep\in\R^{\clsnum\times\imgstate}$ whose $j$'th column gives $\trueimgrep_{,\cdot j}=\Big(\frac{\Distimgone(\resp=1|\imgc{1}=j)}{\sqrt{\Distcls(\resp=1)}},\ldots,\frac{\Distimgone(\resp=\clsnum|\imgc{1}=j)}{\sqrt{\Distcls(\resp=\clsnum)}}\Big)^\top
$ for $j\in[\imgstate]$. We also make the following regularity assumptions:
\begin{itemize}
\item[~(a)]
The marginal distributions of $\resp$ and  $\imgc{1}$ are uniform over $[\clsnum]$ and $[\imgstate]$, respectively.
  \item[~(b)]
    The minimum singular value of $\trueimgrep\trueimgrep^\top$ satisfies
$
    \sigma_{\min}(\trueimgrep\trueimgrep^\top)\geq \mineigrep^2
$for some $\mineigrep>0.$
\item[~(c)]
  $\imgstate\geq 4\clsnum$ and $\inf_{y\in[\clsnum],s\in[\imgstate]}\Distimgone(y|s)\geq\exp(-\bound)$ for some $\bound>0$.
\end{itemize}

Assumption (a) assumes uniform topic and word (or sentence) distributions, simplifying the analysis of $\csquare$-contrastive learning.
Assumption (b) is a technical assumption that allows us to transform  the learned embedding $\estfun(\Samviewmain)$ to the
gold representation $\trueimgrep(\Samviewmain)$.  
Assumption (c) ensures the vocabulary size \(\imgstate\) is large compared with the number of topics $\clsnum$ and all topics have non-vanishing conditional probability in $\Distimgone.$ With these assumptions at hand, we have


\begin{theorem}[Classification using the $\csquare$-trained encoder]\label{thm:classification_main}
Under the setup and assumptions  in Section~\ref{sec:exm_image} and 
    let $\estfunaug$ be the ERM in Eq.~\eqref{eq:chisq-empirical-risk-minimization}.
Then, with probability at least $1-\delta$ 
     over $\{(\Samviewone_i,\Samviewtwo_i)\}_{i=1}^{\samsize}$, 
    \begin{align}
\suffmeasurechi(\estfunaug)
\leq
\vform_{\fdivfun}(\simscorep{\estfunaug})-\vform_{\fdivfun}(\truesimscore)\revdef\Suff_{\csquare}(\simscorep{\estfunaug})
\leq
\frac{c\imgstate^2\clsnum^4}{\sqrt{\samsize}}\Big[\sqrt{\log(1/\delta)}
+
\sqrt{\imgstate}{\clsnum}^{1.5}
\Big]
\label{eq:thm_img_suff_bound}
    \end{align}
    for some absolute constant $c>0.$
    
    In  downstream classification,
    given $\dstsam$ i.i.d. samples $\{(\Sam_i,\clsins_i)\}_{i=1}^{\dstsam}$, consider fitting a multi-class classifier  $\dstfun_{\estimgparam}(\Sam)=\dstfunbar_{\estimgparam}(\estfun(\Samviewmain))\defn\softmax(\log\trun(\estimgparamw\estfun(\Samviewmain)+\estimgparamb))$ with
    \begin{align}
       \estimgparam\defn \argmin_{\imgparamw\in\R^{\clsnum\times \clsnum},\imgparamb\in\R^{\clsnum},~~\opnorm{\imgparamw}\vee\vecnorm{\imgparamb}{2}\leq\boundimgp}\Big\{\emprisk_{\cls}(\dstfun_{\imgparam})\defn
       -
       \frac{1}{\dstsam}\sum_{i=1}^\dstsam {\log \dstfunbar_{\imgparam}(\estfun(\Samviewmain_i) )_{\clsins_i}}\Big\},\label{eq:thm_img_dst_bound}
    \end{align}
    where $\Samviewmain=\tranfun(\Sam),\Samviewmain_i=\tranfun_i(\Sam_i)$ and $\tranfun,\{\tranfun\}_{i=1}^\dstsam$ are i.i.d. dropout transformations, $\boundimgp\geq 4\sqrt{\imgstate}\clsnum/\mineigrep$, and $\trun(x)\defn\proj_{[\exp(-\bound),1]}(x)$.
    Then there exists some absolute constants $c,c'>0$ such that,
 given the encoder $\estfun$ and suppose $ \Suff_{\csquare}(\simscorep{\estfunaug})\leq c'\frac{\mineigrep^2}{\imgstate^2\clsnum}$, 
with probability at least $1-\delta_1$ 
    \begin{align*}
     &\quad~\risk_{\cls}(\dstfunbar_{\estimgparam})
     \defn
     \E_{\Sam,\resp,\tranfun}[
     \KL(\P(\resp|\Sam)||
   \dstfun_{\estimgparam}(\estfun(\tranfun(\Sam)) ) )]\\
        &\leq  c\Big(\underbrace{\Big[\dsterrcls+\frac{\imgstate\exp(\bound)}{\mineigrep^2}\cdot\Suff_{\csquare}(\simscorep{\estfunaug})\Big]}_{\apperr}
        +
        \underbrace{ \frac{\tempbound}{\sqrt{\dstsam}}\Big[\sqrt{\log(1/\delta_1)}
        +
        {\clsnum(\sqrt{\log \boundimgp}+\sqrt{\tempbound}) }\Big]\Big)}_{\generr}.
    \end{align*}

\end{theorem}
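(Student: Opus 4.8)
\medskip
\noindent\textbf{Proof proposal.} The plan is to establish the two displayed bounds separately and then glue them via a chain decomposition of $\risk_{\cls}$. Throughout I work in one-hot coordinates, $\Samviewmain=e_s$ encoding a token $s\in[\imgstate]$, and set the link function to be the identity. \emph{For \eqref{eq:thm_img_suff_bound}:} by Assumption~(a) the marginals of $\Samviewone,\Samviewtwo$ are $\Unif[\imgstate]$, and since $\Samviewone=\imgc{I_1},\Samviewtwo=\imgc{I_2}$ with $I_1,I_2\iidsim\Unif\{1,2\}$ and $(\imgc{1},\imgc{2})$ conditionally i.i.d.\ given $\clsins$, a direct computation gives $\truesimscorecsq(e_{s_1},e_{s_2})=\tfrac12(\trueimgrep^{\top}\trueimgrep)_{s_1s_2}+\tfrac{\imgstate}{2}\indicator[s_1=s_2]$ with $\trueimgrep_{y,s}=\sqrt{\clsnum}\,\Distimgone(\resp=y\mid\imgc{1}=s)$ and $(\trueimgrep^{\top}\trueimgrep)_{s_1s_2}=\imgstate^{2}\,\E_{\clsins}[\Distimgone(s_1\mid\clsins)\Distimgone(s_2\mid\clsins)]$. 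Hence $\truesimscorecsq=\simscorep{\funaug^{\star}}$ for the linear encoder with $\imgmap^{\star}=\tfrac1{\sqrt2}O\trueimgrep$ ($O$ any orthogonal matrix), $\imgwei^{\star}=\sqrt{\imgstate/2}$; since $\vecnorm{\imgmap^{\star}}{2,\infty}\le\sqrt{\clsnum/2}$ and $|\imgwei^{\star}/\sqrt{\imgstate}|=1/\sqrt2$ are $\le\boundimgmap=\clsnum$, we get $\funaug^{\star}\in\funspace$, so the approximation error in Theorem~\ref{thm:main_chisq_generalization} vanishes. Applying that theorem with $\boundscoreconstcsq:=2\imgstate\clsnum^{2}$ (the one-hot bound $|\simscorep{\funaug}|\le\boundimgmap^{2}+\imgstate\boundimgmap^{2}$), $\linkfunlip=1$, and the product covering estimate $\log\Covernum(u,\vecnorm{\cdot}{2,\infty},\funspace)\lesssim\clsnum\imgstate\log(\boundimgmap\sqrt{\imgstate}/u)$ gives the right-hand side of \eqref{eq:thm_img_suff_bound}; the first inequality there is the VFS bound of Definition~\ref{def:general_app_suff} applied to $\simscorep{\estfunaug}=\simscore\circ\estfunaug$ with $\simscore(u,x):=\langle u,\estfunaug(x)\rangle$.

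\emph{From sufficiency to a near-optimal linear head (the crux).} Since $\vform_{\csquare}$ is quadratic in $\simscore$ with global minimizer $\truesimscorecsq$, its excess risk is exactly $\Suff_{\csquare}(\simscorep{\estfunaug})=\tfrac12\E_{\Samviewone}\Var_{\Samviewtwo}[(\simscorep{\estfunaug}-\truesimscorecsq)(\Samviewone,\Samviewtwo)]$ (independent arguments, uniform marginals). With the symmetric matrix $\bN:=\estimgmap^{\top}\estimgmap+(\estimgwei^{2}-\tfrac{\imgstate}{2})I_{\imgstate}-\tfrac12\trueimgrep^{\top}\trueimgrep$ representing $\simscorep{\estfunaug}-\truesimscorecsq$ in one-hot coordinates, this reads $\Suff_{\csquare}(\simscorep{\estfunaug})=\tfrac{1}{2\imgstate^{2}}\fronorm{\bN-\onevec_{\imgstate}\bar\bN^{\top}}^{2}$ ($\bar\bN$ the row-mean vector), hence $\fronorm{\bN-\tfrac12(\onevec\bar\bN^{\top}+\bar\bN\onevec^{\top})}\le\imgstate\sqrt{2\Suff_{\csquare}(\simscorep{\estfunaug})}$. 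I would then use: (i) $\onevec_{\imgstate}\in\mathcal V:=\mathrm{row}(\trueimgrep)$, because uniform marginals force $\trueimgrep^{\top}\onevec_{\clsnum}=\sqrt{\clsnum}\,\onevec_{\imgstate}$; (ii) Assumption~(b), giving $\tfrac12\trueimgrep^{\top}\trueimgrep$ a spectral gap $\mineigrep^{2}/2$ between $\mathcal V$ and $\mathcal V^{\perp}$; (iii) Assumption~(c) ($\imgstate\ge4\clsnum$), which leaves an $(\ge\imgstate/2)$-dimensional subspace inside $\ker(\estimgmap)\cap\mathcal V^{\perp}$ on which only the identity shift survives, forcing $|\estimgwei^{2}-\imgstate/2|\lesssim\sqrt{\imgstate\,\Suff_{\csquare}(\simscorep{\estfunaug})}$ and, via the PSD off-diagonal-block inequality together with the self-consistency $P_{\mathcal V^{\perp}}\bar\bN=\imgstate^{-1}P_{\mathcal V^{\perp}}\estimgmap^{\top}\estimgmap\onevec$, $\opnorm{P_{\mathcal V^{\perp}}\estimgmap^{\top}\estimgmap P_{\mathcal V}}\lesssim\imgstate\sqrt{\Suff_{\csquare}(\simscorep{\estfunaug})}$. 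A Davis--Kahan argument with gap $\mineigrep^{2}/2$ — valid on the hypothesized event $\Suff_{\csquare}(\simscorep{\estfunaug})\le c'\mineigrep^{2}/(\imgstate^{2}\clsnum)$, which keeps the off-diagonal perturbation below the gap — places $\mathrm{row}(\estimgmap)$ within small principal angle of $\mathcal V$ and gives $\sigma_{\min}(\estimgmap)\gtrsim\mineigrep$. I then choose the head $\imgparamw^{\star}:=\clsnum^{-1/2}\trueimgrep\estimgmap^{\dagger}$, $\imgparamb^{\star}:=\mathbf 0$, which is feasible ($\opnorm{\imgparamw^{\star}}\le\clsnum^{-1/2}\opnorm{\trueimgrep}/\sigma_{\min}(\estimgmap)\lesssim\sqrt{\imgstate}/\mineigrep\le\boundimgp$); then $\imgparamw^{\star}\estimgmap e_{s}=\clsnum^{-1/2}\trueimgrep P_{\mathrm{row}(\estimgmap)}e_{s}$ differs from $\P(\resp\mid\imgc{1}=s)=\clsnum^{-1/2}\trueimgrep e_{s}$ only through $\clsnum^{-1/2}\trueimgrep(I-P_{\mathrm{row}(\estimgmap)})e_{s}$, whose average squared norm the angle bound controls by $\lesssim\imgstate\mineigrep^{-2}\Suff_{\csquare}(\simscorep{\estfunaug})$. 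Since $\trun=\proj_{[\exp(-\bound),1]}$ makes $\log\circ\trun$ be $\exp(\bound)$-Lipschitz, $\softmax$ is $1$-Lipschitz and fixes log-probability vectors, and $\P(\resp\mid\Samviewmain)\ge\exp(-\bound)$ by Assumption~(c), this converts into $\E_{\Samviewmain}[\KL(\P(\resp\mid\Samviewmain)\,\|\,\dstfunbar_{\imgparam^{\star}}(\estimgmap\Samviewmain))]\lesssim\exp(\bound)\,\imgstate\,\mineigrep^{-2}\,\Suff_{\csquare}(\simscorep{\estfunaug})$.

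\emph{Generalization error and assembly.} Using $\KL(p\,\|\,g)=\KL(p\,\|\,q)+\sum_{y}p_{y}\log(q_{y}/g_{y})$ with $p=\P(\resp\mid\Sam)$, $q=\P(\resp\mid\Samviewmain)$ and the clipping bounds on $\log(q_{y}/g_{y})$, one obtains $\risk_{\cls}(\dstfunbar_{\estimgparam})\lesssim\E[\KL(\P(\resp\mid\Sam)\|\P(\resp\mid\Samviewmain))]+\bound\,\E[\TV(\P(\resp\mid\Sam),\P(\resp\mid\Samviewmain))]+\E_{\Samviewmain}[\KL(\P(\resp\mid\Samviewmain)\|\dstfunbar_{\estimgparam}(\estimgmap\Samviewmain))]$; the first two terms are $\lesssim\dsterrcls$ because $\KL\le\Renyi{2}$ and $\TV\le\sqrt{\Renyi{2}/2}$. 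For the last term, a standard uniform-convergence bound over the $(\clsnum^{2}+\clsnum)$-dimensional ball $\{\opnorm{\imgparamw}\vee\vecnorm{\imgparamb}{2}\le\boundimgp\}$ — the clipped log-softmax loss being $O(\bound)$-bounded and Lipschitz in $\imgparam$ — shows it is at most $\E_{\Samviewmain}[\KL(\P(\resp\mid\Samviewmain)\|\dstfunbar_{\imgparam^{\star}}(\estimgmap\Samviewmain))]+O\big(\tempbound\,\dstsam^{-1/2}[\sqrt{\log(1/\delta_{1})}+\clsnum(\sqrt{\log\boundimgp}+\sqrt{\tempbound})]\big)$ with probability $\ge1-\delta_{1}$. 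Plugging in the bound from the previous paragraph for $\imgparam^{\star}$ yields the claimed inequality.

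\emph{Main obstacle.} The heart of the proof is turning the exact variance identity ``$\Var_{\Samviewtwo}$ of the score gap is small'' into ``$\mathrm{row}(\estimgmap)$ is close to $\mathrm{row}(\trueimgrep)$'': the score only pins the off-diagonal entries of $\estimgmap^{\top}\estimgmap$ (its diagonal being entangled with $\estimgwei^{2}$), and there is an a-priori-unbounded rank-$\le2$ all-ones ambiguity coming from the row-mean subtraction. Resolving this — pushing the Davis--Kahan perturbation below the gap $\mineigrep^{2}/2$ and distributing the residual error sharply over the $\clsnum$ principal angles so as to land on the linear-in-$\Suff_{\csquare}$ rate $\tfrac{\imgstate\exp(\bound)}{\mineigrep^{2}}\Suff_{\csquare}(\simscorep{\estfunaug})$ rather than a worse power — is exactly where Assumptions~(a)--(c) and the threshold $\Suff_{\csquare}(\simscorep{\estfunaug})\le c'\mineigrep^{2}/(\imgstate^{2}\clsnum)$ are needed; the covering-number and bounded-loss parts of the generalization error are by comparison routine.
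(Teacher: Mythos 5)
Your first display (the sufficiency bound) and your downstream generalization-error analysis are essentially the paper's own argument: same optimal-score formula with the $\imgstate/2$ diagonal spike, same realizability of $\truesimscorecsq$ inside $\funspace$ so the approximation error in Theorem~\ref{thm:main_chisq_generalization} vanishes, same covering-number/Dudley treatment of the head class. The genuine gap is exactly the step you flag as the ``main obstacle'', and the route you sketch does not close it. You pass from the excess-risk identity to closeness of $\mathrm{row}(\estimgmap)$ and $\mathrm{row}(\trueimgrep)$ via Davis--Kahan, and then use the head $\imgparamw^{\star}=\clsnum^{-1/2}\trueimgrep\estimgmap^{\dagger}$. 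Two concrete problems: (i) the intermediate facts you need (perturbation of the Gram matrices below the spectral gap $\mineigrep^{2}/2$, hence $\sigma_{\min}(\estimgmap)\gtrsim\mineigrep$ and feasibility of $\imgparamw^{\star}$) are not implied by the hypothesis $\Suff_{\csquare}(\simscorep{\estfunaug})\le c'\mineigrep^{2}/(\imgstate^{2}\clsnum)$: that hypothesis only gives a Gram perturbation of Frobenius size $\lesssim\mineigrep/\sqrt{\clsnum}$, which sits below the gap only if $\mineigrep\gtrsim1/\sqrt{\clsnum}$, and Assumptions (a)--(c) do not force this; (ii) even granting the gap condition, the subspace route loses factors: $\fronorm{\trueimgrep(\idmat-P_{\mathrm{row}(\estimgmap)})}^{2}\le\opnorm{\trueimgrep}^{2}\fronorm{\sin\Theta}^{2}$ with Davis--Kahan yields a prefactor of order $\opnorm{\trueimgrep}^{2}\imgstate/(\clsnum\mineigrep^{4})$ rather than $\imgstate/\mineigrep^{2}$, while the trace/nuclear-norm alternative $\mathrm{tr}\big((\idmat-P)\trueimgrep^{\top}\trueimgrep(\idmat-P)\big)\lesssim\sqrt{\clsnum}\,\fronorm{E}$ only gives a $\sqrt{\Suff_{\csquare}}$ rate. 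Either way you do not land on the claimed term $\tfrac{\imgstate\exp(\bound)}{\mineigrep^{2}}\Suff_{\csquare}(\simscorep{\estfunaug})$.

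The paper never compares subspaces and needs no control of $\sigma_{\min}(\estimgmap)$: it takes the explicit feasible head $\imgparamw=\sqrt{2}\,\Pdiagcls^{1/2}(\trueimgrep\trueimgrep^{\top})^{-1}\trueimgrep(\idmat_{\imgstate}-\projection_{\onevec_{\imgstate}})\estimgrep^{\top}$ and $\imgparamb=\tfrac{1}{\sqrt{2}}\Pdiagcls^{1/2}(\trueimgrep\trueimgrep^{\top})^{-1}\trueimgrep\onevec_{\imgstate}$, so the prediction error is the image, under the fixed well-conditioned map $(\trueimgrep\trueimgrep^{\top})^{-1}\trueimgrep$, of the \emph{centered Gram difference} $(\idmat_{\imgstate}-\projection_{\onevec_{\imgstate}})(\estimgrep^{\top}\estimgrep-\trueimgrep^{\top}\trueimgrep)$; the identity-shift ambiguity in $\estimgwei$ is absorbed by a rank argument (at least $\imgstate/2$ singular values of the difference equal the shift, using $\imgstate\ge4\clsnum$), and the centered difference's squared Frobenius norm equals exactly $\imgstate^{2}$ times the excess risk by the second-order expansion of the quadratic $\csquare$ loss. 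This delivers the linear-in-$\Suff_{\csquare}$ bound with prefactor $\imgstate/\mineigrep^{2}$; the smallness threshold is used only to show $\lone{\trun(\imgparamw\estfun(\Samviewmain)+\imgparamb)}$ stays within $1/2$ of $1$ so the softmax-of-log map is Lipschitz. A secondary flaw in your assembly: you bound the cross term by $\bound\,\E[\TV(\P(\resp|\Sam),\P(\resp|\Samviewmain))]$ and claim it is $\lesssim\dsterrcls$ via $\TV\le\sqrt{\Renyi{2}/2}$, but in expectation this only gives $\bound\sqrt{\dsterrcls}$, which does not match the stated bound with absolute constant; the paper instead applies the triangle-like inequality for R\'enyi divergences (Lemma~\ref{lm:triangle_like_ineq}) to stay at the $\dsterrcls$ scale.
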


\vspace{-0.4em}

See the proof in Appendix~\ref{sec:pf_thm:classification_main}. Note that  the bound on downstream classification depends on  the sufficiency of the score function $\Suff_{\csquare}(\simscorep{\estfunaug})$, introduced in Appendix~\ref{sec:suff_similarity_score}, rather than $\Suff_{\csquare}(\estfun)$. This distinction arises because we restrict ourselves to linear classifiers, whereas Theorem~\ref{thm:general_downstream_bound_cls} considers arbitrary measurable functions, leading to an additional approximation error term.

\section{Experiments}\label{sec:experiments}
We  also conduct synthetic experiments to learn data representations via contrastive learning using two-layer neural networks, and evaluate them on downstream linear regression.

In the contrastive learning stage, we generate $\samsize$ i.i.d.\ samples $\Sam_i \sim \mathcal{N}(0, \idmat_d)$. The augmentation $\tranfun$ adds i.i.d.\ $\mathcal{N}(0, \sigma_1^2)$ noise to the first $s < d$ coordinates of $\Sam_i$, and replaces the remaining coordinates with i.i.d.\ $\mathcal{N}(0, 1)$ noise. We apply KL and $\chi^2$-contrastive learning (Eq.~\ref{eq:simclr-empirical-risk-minimization}~and~\ref{eq:chisq-empirical-risk-minimization}) with link function $\tau(x) = x$, and encoder $\fun(\cdot)$ being a two-layer ReLU neural network mapping $\mathbb{R}^d$ to $\mathbb{R}^s$. We set $s = 10, d = 100, \samsize=500$, hidden dimension $64$, and batch size $K = 64$. The encoder is trained using Adam (learning rate $0.001$) for $1000$ epochs until convergence.

For downstream regression, we generate $\dstsam$ i.i.d.\ samples $(\Sam_i, \resp_i)$, where $\Sam_i \sim \mathcal{N}(0, \idmat_d)$ and $\resp_i = \langle \Sam_i, \TruePar \rangle + \noise_i$, with $\noise_i \sim \mathcal{N}(0, \sigma^2)$ independent of $\Sam_i$. We choose $\TruePar = (\mathbf{1}^\top_s/\sqrt{s},\mathbf{0}^\top_{d-s})^\top$ and $\sigma=1$. Using the learned representation $\widehat{f}(\Sam_i) \in \mathbb{R}^s$ from KL (or $\chi^2$)-contrastive learning, we fit a downstream linear model to predict $\resp_i$. We define the excess risk of any predictor $h$ as $\mathbb{E}[(\resp_i - h(\Sam_i))^2] - \sigma^2$, and evaluate the excess risk of the linear model trained on $\widehat{f}(\Sam_i)$. For comparison, we also report the excess risk of a linear model trained directly on the original features $\Sam_i$ (denoted as Direct LR). Results for various downstream sample size $\dstsam$ and the standard deviation over $10$ runs are shown in Figure~\ref{fig:excess-risk}.
\begin{figure}[t]
  \centering
  \includegraphics[width=0.4\textwidth]{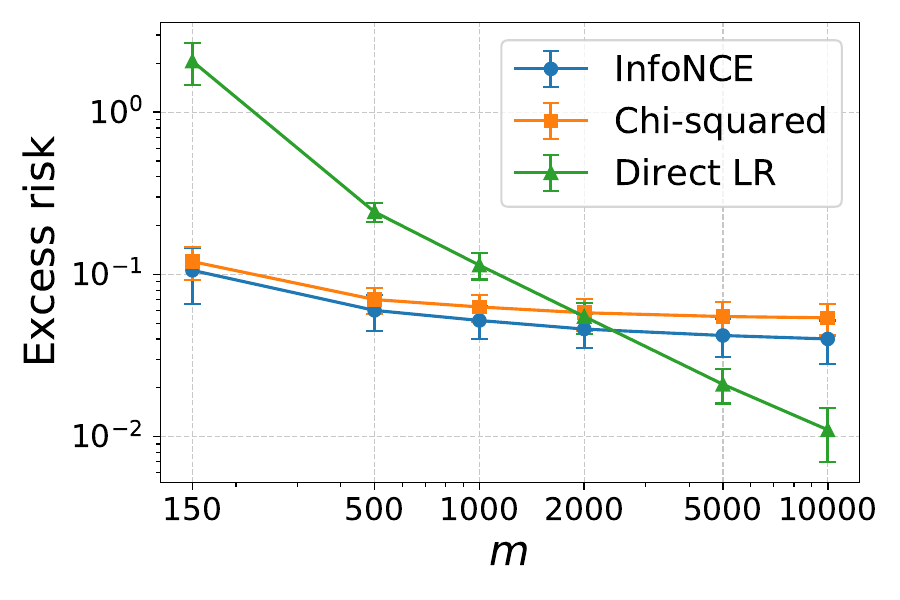}
  \caption{Excess risk for various downstream sample sizes $m$. The errorbars represent the standard deviation over $10$ runs.}
  \label{fig:excess-risk}
  \end{figure}

From the Figure, we observe that linear regression based on  KL (i.e., InfoNCE) or $\chi^2$-pretrained representations achieve comparable excess risks, both much lower than that of direct linear regression when the sample size $m$ is relatively small (e.g., $\dstsam = 150, 500$). This suggests that both KL and $\chi^2$-contrastive learning can learn a ``good'' low-dimensional representation for the downstream task. As the sample size increases, the excess risk of direct linear regression converges to zero, while those of KL and $\chi^2$-pretrained representations converge to non-zero constants. This is consistent with our theoretical results, which attribute the excess risk to the non-zero sufficiency of $\estfun$ and the augmentation error $\dsterr$. More results comparing KL (i.e., InfoNCE) and $\chi^2$-contrastive learning in the CLIP setting are provided in Appendix~\ref{sec:clip_training}.

\section{Conclusion}
In this work, we present a new theoretical framework for data augmentation-based contrastive learning, with SimCLR as a representative example. Based on the extended concept of approximate sufficient statistics, we  establish a connection between minimizing the $\fdivfun$-contrastive losses and minimizing the conditional Bregman sufficiency (CBS) of the encoder. Moreover, we show that the learned encoders  can be effectively applied to downstream tasks with performance depending on their sufficiency and the error on the downstream task induced by data augmentation. 

Our work opens up many directions for future research. First, as seen in Definition~\ref{def:general_app_suff}, the concept of approximate sufficient statistics is not limited to contrastive learning; exploring its applicability to  other self-supervised and supervised learning paradigms  is a promising direction.
Second, while approximate sufficiency quantifies the information preserved by the encoder, it does not reflect the redundancy in its representation. Thus, it would be interesting to generalize the concept of minimal sufficient statistics and develop practical algorithms for finding representations that are both approximately sufficient and minimal. Lastly, our work mainly focuses on the empirical risk minimizers  in contrastive learning. Understanding what representations are learned and how training algorithms influence the learned representation remains another exciting avenue for future research.

\section*{Acknowledgement} 
This project was supported by NSF grants DMS-2210827, CCF-2315725, CAREER DMS-2339904, ONR grant N00014-24-S-B001, DARPA AIQ grant HR001124S0029-AIQ-FP-003, an Amazon Research Award, a Google Research Scholar Award, an Okawa Foundation Research Grant, and a Sloan Research Fellowship.

\bibliographystyle{amsalpha}
\bibliography{reference.bib}

\clearpage

\tableofcontents

\clearpage

\appendix

\section{Properties of approximate sufficient statistics}\label{sec:property_apprx_suff_stat}
In this section, we discuss some properties of approximate sufficient statistics introduced in Definition~\ref{def:general_app_suff} and provide some concrete examples.

\subsection{Equivalence in Definition~\ref{def:general_app_suff}}
\begin{lemma}[Equivalence of three forms of sufficiency]\label{lm:equiv_def}
    The ILS, VFS, CBS definitions in Definition~\ref{def:general_app_suff} are equivalent, i.e., for any statistic $\stat$
\begin{align*}
\Suff_{\il, \fdivfun}(\stat) = \Suff_{\vf, \fdivfun}(\stat)= \Suff_{\cb, \fdivfun}(\stat)\revdef\Suff_{\fdivfun}(\stat).
\end{align*}
\end{lemma}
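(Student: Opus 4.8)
The plan is to prove the chain of equalities by establishing ILS $=$ CBS directly and then VFS $=$ CBS via the variational representation of $\fdivfun$-divergences. For the first equality, I would start from the definition of $\fdivfun$-MI and write $\mutinfo{\fdivfun}(X,Y) = \E_{\P(x)\P(y)}[\fdivfun(\P(y\mid x)/\P(y))]$, using that $\P(x,y)/(\P(x)\P(y)) = \P(y\mid x)/\P(y)$. Similarly, since $\stat$ is a deterministic function of $X$, one has $\P(y\mid \stat(x)) = \E[\P(y\mid X)\mid \stat(X)=\stat(x)]$, so $\mutinfo{\fdivfun}(\stat(X),Y) = \E_{\P(x)\P(y)}[\fdivfun(\P(y\mid \stat(x))/\P(y))]$ — here the key point is that taking $\E_{\P(x)}$ outside and conditioning on $\stat(x)$ is consistent because $\P(y\mid\stat(x))/\P(y)$ is $\sigma(\stat(X))$-measurable. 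Subtracting, the cross term in the Bregman divergence $B_\fdivfun(a,b) = \fdivfun(a) - \fdivfun(b) - (a-b)\fdivfun'(b)$ must vanish in expectation: I would verify $\E_{\P(x)\P(y)}[(a-b)\fdivfun'(b)] = 0$ where $a = \P(y\mid x)/\P(y)$, $b = \P(y\mid\stat(x))/\P(y)$, by first taking $\E[\,\cdot\mid \stat(X), Y]$ — conditionally, $\fdivfun'(b)$ is constant, and $\E[a - b \mid \stat(X)=s, Y=y] = \P(y\mid s)/\P(y) - \P(y\mid s)/\P(y) = 0$ by the tower property. This gives ILS $=$ CBS.

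For VFS $=$ CBS (equivalently VFS $=$ ILS), I would invoke the standard Fenchel–Young / variational characterization underlying Eq.~\eqref{eq:f_contrastive_loss}: for fixed $\simscore_x$, the inner term $\E_{\P(x)\P(y)}[\fdivfun^*(\simscore(x,y) - \simscore_x(x)) + \simscore_x(x)]$ combined with $\E_{\P(x,y)}[-\simscore(x,y)]$ is designed so that $\inf_{\simscore:\Samspace\times\respspace\to\R} \vform_\fdivfun(\simscore) = -\mutinfo{\fdivfun}(X,Y)$, with the infimum attained (up to additive shifts) at $\simscore(x,y) = \fdivfun'(\P(x,y)/(\P(x)\P(y)))$. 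I would prove this by pointwise optimization: for each $x$, optimizing over $\simscore(x,\cdot)$ and $\simscore_x(x)$ reduces to the dual formula $\fdivfun(t) = \sup_u [ut - \fdivfun^*(u)]$ applied with $t = \P(x,y)/(\P(x)\P(y))$, and the role of the inner $\inf_{\simscore_x}$ is exactly to enforce the normalization constraint that makes $\fdivfun^*$ evaluate correctly against the product measure (this is why setting $\simscore_x = 0$, as in \cite{lu2024f}, gives a different, larger loss). Applying the same identity with $\simscore$ restricted to factor through $\stat$ gives $\inf_{\simscore\circ\stat}\vform_\fdivfun(\simscore\circ\stat) = -\mutinfo{\fdivfun}(\stat(X),Y)$, since the conditional density $\P(y\mid\stat(x))/\P(y)$ is the relevant Radon–Nikodym derivative. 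Subtracting yields $\Suff_{\vf,\fdivfun}(\stat) = \mutinfo{\fdivfun}(X,Y) - \mutinfo{\fdivfun}(\stat(X),Y) = \Suff_{\il,\fdivfun}(\stat)$.

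The main obstacle I anticipate is the careful justification of the variational identity $\inf_\simscore \vform_\fdivfun(\simscore) = -\mutinfo{\fdivfun}(X,Y)$ with attainment — in particular, handling the two-level infimum (over $\simscore$ and over $\simscore_x$) correctly and verifying that the minimizer is the conditional-density-ratio evaluated through $\fdivfun'$, including measure-theoretic care about where densities vanish and about interchanging infimum and integral (measurable selection). A secondary subtlety is that $\fdivfun$ need not be differentiable everywhere, so $\fdivfun'$ in the CBS formula and in the minimizer should be read as a subgradient selection; I would either assume $\fdivfun$ differentiable on the relevant range (which holds for the two running examples $x\log x$ and $(x-1)^2/2$, and is implied by Assumption~\ref{ass:bounded_score} restricting density ratios to a compact interval) or remark that the identities hold for any consistent choice of subgradient. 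The computations for the cross-term vanishing and the pointwise Fenchel optimization are otherwise routine.
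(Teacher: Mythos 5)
Your proposal is correct and follows essentially the same route as the paper: ILS $=$ CBS via the tower-property cancellation of the linear Bregman term (exactly the paper's computation), and VFS $=$ ILS via the Fenchel variational representation of the $\fdivfun$-divergence applied to both the unrestricted and the $\stat$-restricted score classes. The only cosmetic difference is that you re-derive the variational identity by pointwise Fenchel optimization and interpret the inner infimum over $\simscore_x$ as a normalization, whereas the paper simply absorbs $\simscore_x$ into $\simscore$ and quotes the standard variational form; this does not change the substance of the argument.
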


\begin{proof}[Proof of Lemma~\ref{lm:equiv_def}.]\label{sec:pf_lm:equiv_def}

\noindent
{\bf $(\ILS)$~$\Leftrightarrow$~$(\VFS)$.}
Note that by the variational form of $\fdivfun$-divergence, we have
\begin{align*}
&\quad~-\mutinfo{\fdivfun}(X, Y) \\
&=
\inf_{\simscore:\Samspace\times\respspace\to\R}\E_{\P(x, y)}[ - \simscore(x, y)] + \E_{\P(x)\P(y)}[\fdivfun^*(\simscore(x, y) )] \\
&=
\inf_{{\simscore_x:\Samspace\to\R},{\simscore:\Samspace\times\respspace\to\R}}\E_{\P(x, y)}[ \simscore_x(x)- \simscore(x, y)] + \E_{\P(x)\P(y)}[\fdivfun^*(\simscore(x, y) -\simscore_x)] \\
&=
\inf_{\simscore:\Samspace\times\respspace\to\R}
\E_{\P(x, y)}[ - \simscore(x, y)] + \inf_{\simscore_x:\Samspace\mapsto\R} \E_{\P(x)\P(y)}[\fdivfun^*(\simscore(x, y) - \simscore_x(x)) + \simscore_x(x)] =\inf_{\simscore:\Samspace\times\respspace\to\R}\vform_\fdivfun(\simscore).
\end{align*}
Similarly,
\begin{align*}
&\quad-\mutinfo{\fdivfun}(\stat(X), Y) 
\\
&=
\inf_{\simscore:\stat(\Samspace)\times\respspace\to\R}\E_{\P(\stat(x), y)}[ - \simscore(\stat(x), y)] + \E_{\P(\stat(x))\P(y)}[\fdivfun^*(\simscore(\stat(x), y) )] \\
&=
\inf_{\simscore:\stat(\Samspace)\times\respspace\to\R}
\E_{\P(\stat(x), y)}[ - \simscore(\stat(x), y)] + \inf_{\simscore_x:\stat(\Samspace)\mapsto\R} \E_{\P(\stat(x))\P(y)}[\fdivfun^*(\simscore(\stat(x), y) - \simscore_x(\stat(x))) + \simscore_x(\stat(x))] \\
&=
\inf_{\simscore:\stat(\Samspace)\times\respspace\to\R}
\E_{\P(x, y)}[ - \simscore(\stat(x), y)] + \inf_{\simscore_x:\stat(\Samspace)\mapsto\R} \E_{\P(x)\P(y)}[\fdivfun^*(\simscore(\stat(x), y) - \simscore_x(\stat(x))) + \simscore_x(\stat(x))] \\
&=
\inf_{\simscore:\stat(\Samspace)\times\respspace\to\R}\vform_\fdivfun(\simscore\circ \stat).
\end{align*}
Combining the two results yields the equivalence between $(\ILS)$ and $(\VFS)$.

\noindent
{\bf $(\ILS)$~$\Leftrightarrow$~$(\CBS)$.}
By definition of the $(\ILS)$
\begin{align*}
   \Suff_{\il,\fdivfun}(\stat)
   &=\mutinfo{\fdivfun}(X, Y) -\mutinfo{\fdivfun}(\stat(X), Y) \\
   &=
 \int \fdivfun\Big( \frac{\P(x, y)}{\P(x) \P(y)} \Big) \P(x) \P(y) d\bmeasure
 -
  \int \fdivfun\Big( \frac{\P(\stat(x), y)}{\P(\stat(x)) \P(y)} \Big) \P(\stat(x)) \P(y) d\bmeasure\\
   &=
 \int \fdivfun\Big( \frac{\P(y|x)}{ \P(y)} \Big) \P(x) \P(y) d\bmeasure
 -
  \int \fdivfun\Big( \frac{\P(y|\stat(x))}{\P(y) } \Big) \P(x) \P(y) d\bmeasure
  \\
  &=
  \E_{\P(x)\P(y)} \Big[ \fdivfun\Big( \frac{\P(y|x)}{ \P(y)} \Big)
  -
  \fdivfun\Big( \frac{\P(y|\stat(x))}{\P(y) } \Big)
  \Big]=  \Suff_{\cb,\fdivfun}(\stat),
\end{align*}
where the last equality follows since 
\begin{align}
    &\quad \E_{\P(x)\P(y)} \Big[ 
  \fdivfun'\Big( \frac{\P(y|\stat(x))}{\P(y) } \Big)\Big( \frac{\P(y|x)}{ \P(y)}-\frac{\P(y|\stat(x))}{\P(y) }  \Big)
  \Big] \notag\\
  &=
  \E\Big[\E\Big[ 
  \fdivfun'\Big( \frac{\P(y|\stat(x))}{\P(y) } \Big)\Big( \frac{\P(y|x)}{ \P(y)}-\frac{\P(y|\stat(x))}{\P(y) }  \Big)
  \Big|\stat(x)\Big]\Big]\notag
  \\
  &=
    \E\Big[  \frac{1}{\P(y)}\fdivfun'\Big( \frac{\P(y|\stat(x))}{\P(y) } \Big)\Big[ \E[\P(y|x)|\stat(x)]-{\P(y|\stat(x))} \Big]
\Big]=0.\label{eq:cbs_pf1}
\end{align}

\noindent
{\bf An equivalent expression of $(\CBS)$.}
We now show that
\begin{align*}
    \E_{\P(x)\times  \P(y)}\Big[ B_{\fdivfun}\Big( \frac{\P(y | x)}{\P(y)}, \frac{\P(y | \stat(x))}{\P(y)} \Big) \Big]
    = \inf_{\Q:\stat(\Samspace)\mapsto\Delta(\respspace)} \E_{\P(x)\times  \P(y)}\Big[ B_{\fdivfun}\Big( \frac{\P(y | x)}{\P(y)}, \frac{\Q(y | \stat(x))}{\P(y)} \Big) \Big].
\end{align*}
This follows immediately as for any 
$\Q:\stat(\Samspace)\mapsto\Delta(\respspace)$
\begin{align*}
 &\quad\E_{\P(x)\times  \P(y)}\Big[ B_{\fdivfun}\Big( \frac{\P(y | x)}{\P(y)}, \frac{\Q(y | \stat(x))}{\P(y)} \Big) \Big]
 -
  \E_{\P(x)\times  \P(y)}\Big[ B_{\fdivfun}\Big( \frac{\P(y | x)}{\P(y)}, \frac{\P(y | \stat(x))}{\P(y)} \Big) \Big]
  \\
  &=
    \E_{\P(x)\times  \P(y)}\Big[ \fdivfun\Big(\frac{\P(y | \stat(x))}{\P(y)} \Big)
    -
    \fdivfun\Big(\frac{\Q(y | \stat(x))}{\P(y)} \Big)
    -
      \fdivfun'\Big(\frac{\Q(y | \stat(x))}{\P(y)} \Big)
      \Big(
      \frac{\P(y | x)}{\P(y)}- \frac{\Q(y | \stat(x))}{\P(y)} 
      \Big)
    \Big]\\
    &\geq
       \E_{\P(x)\times  \P(y)}\Big[ 
      \fdivfun'\Big(\frac{\Q(y | \stat(x))}{\P(y)} \Big)
      \Big(
      \frac{\P(y | \stat(x))}{\P(y)} -\frac{\P(y | x)}{\P(y)}
      \Big)
    \Big]=0,
\end{align*}where the first equality uses Eq.~\eqref{eq:cbs_pf1}.

\end{proof}

\subsection{Properties and examples}\label{sec:def_properties}
\begin{lemma}[Global minimizers of $\vform_\fdivfun(\simscore)$] \label{lm:vfs_properties}
Recall   \begin{align*}
    \vform_{\fdivfun}(\simscore)= \E_{\P(x, y)}[ - \simscore(x, y)] + \inf_{\simscore_x:\Samspace\mapsto\R} \E_{\P(x)\P(y)}[\fdivfun^*(\simscore(x, y) - \simscore_x(x)) + \simscore_x(x)].
    \end{align*}
For $\fdivfun$ that is strictly convex and differentiable,    the following results hold for $\vform_{\fdivfun}(\cdot)$.
\begin{itemize}
    \item [(1).] The infimum in the definition of $\vform_{\fdivfun}(\cdot)$ is  obtained by $\simscore_x(x)$ such that $\E_{\P(y)}[(\fdivfun')^{-1}(\simscore(x,y)-\simscore_x(x))]=1$ for all $x$.
    \item [(2).] Let $\truesimscore(x,y)\defn\fdivfun'(\frac{\P(x,y)}{\P(x)\P(y)}).$ The global minimizers of $\vform_{\fdivfun}(\cdot)$ form the set
    \begin{align*}
        \mathcal{M}_{\fdivfun}\defn \Big\{\simscore:\Samspace\times\respspace\mapsto\R,
     \simscore(x,y) =\truesimscore(x,y) +\simscore_x(x) \text{ for some }\simscore_x: \Samspace\mapsto\R\Big\}.
    \end{align*}
\end{itemize}
\end{lemma}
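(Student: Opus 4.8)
The plan is to handle the two parts separately; both reduce to elementary convex analysis once the infimum over $\simscore_x$ is read correctly.

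\textbf{Part (1).} I would fix the score $\simscore$ and observe that the objective of the inner infimum decouples over $x$: since $\P(x)\P(y)$ is a product measure,
\[
\E_{\P(x)\P(y)}\big[\fdivfun^*(\simscore(x,y)-\simscore_x(x)) + \simscore_x(x)\big] = \E_{\P(x)}\big[G_x(\simscore_x(x))\big], \qquad G_x(t) \defn \E_{\P(y)}\big[\fdivfun^*(\simscore(x,y)-t)\big] + t,
\]
so it suffices to minimize $G_x$ over $t\in\R$ for each $x$. Because $\fdivfun$ is strictly convex and differentiable, $\fdivfun^*$ is differentiable with $(\fdivfun^*)' = (\fdivfun')^{-1}$; each $G_x$ is convex (an affine precomposition of the convex $\fdivfun^*$ plus a linear term), and differentiating under the expectation gives $G_x'(t) = 1 - \E_{\P(y)}[(\fdivfun')^{-1}(\simscore(x,y)-t)]$. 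Hence the first-order condition $G_x'(t)=0$ — equivalently $\E_{\P(y)}[(\fdivfun')^{-1}(\simscore(x,y)-t)] = 1$ — characterizes the global minimizer, and taking $\simscore_x(x)$ to be this minimizer (a measurable selection exists by standard arguments) proves the claim.

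\textbf{Part (2).} The key step is to absorb $\simscore_x$ into $\simscore$. Using $\E_{\P(x)}[\simscore_x(x)] = \E_{\P(x,y)}[\simscore_x(x)]$ I would rewrite
\[
\vform_{\fdivfun}(\simscore) = \inf_{\simscore_x:\Samspace\mapsto\R}\Big\{ \E_{\P(x,y)}\big[-(\simscore(x,y)-\simscore_x(x))\big] + \E_{\P(x)\P(y)}\big[\fdivfun^*(\simscore(x,y)-\simscore_x(x))\big]\Big\} = \inf_{\simscore_x} F(\simscore-\simscore_x),
\]
where $(\simscore-\simscore_x)(x,y) \defn \simscore(x,y)-\simscore_x(x)$ and $F(g) \defn \E_{\P(x,y)}[-g] + \E_{\P(x)\P(y)}[\fdivfun^*(g)]$ for arbitrary $g:\Samspace\times\respspace\mapsto\R$. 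Writing $F(g) = \E_{\P(x)\P(y)}[-g(x,y)\,r(x,y) + \fdivfun^*(g(x,y))]$ with $r \defn \P(x,y)/(\P(x)\P(y))$, I would minimize the integrand pointwise: $s\mapsto -sr + \fdivfun^*(s)$ is strictly convex (strict convexity of $\fdivfun$ makes $(\fdivfun')^{-1}$, hence $(\fdivfun^*)'$, strictly increasing) with unique stationary point $s$ solving $(\fdivfun')^{-1}(s)=r$, i.e.\ $s = \fdivfun'(r) = \truesimscore(x,y)$, and Fenchel's equality gives the minimal value $-\fdivfun(r)$. Therefore $F(g) \ge -\E_{\P(x)\P(y)}[\fdivfun(r)] = -\mutinfo{\fdivfun}(X,Y)$ with equality iff $g = \truesimscore$ ($\P(x)\P(y)$-almost everywhere), which recovers and refines the identity $\inf_{\simscore}\vform_{\fdivfun}(\simscore) = -\mutinfo{\fdivfun}(X,Y)$ used in the proof of Lemma~\ref{lm:equiv_def}. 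Consequently $\vform_{\fdivfun}(\simscore) \ge -\mutinfo{\fdivfun}(X,Y)$ for every $\simscore$, with equality precisely when there is some $\simscore_x$ with $\simscore - \simscore_x = \truesimscore$, i.e.\ when $\simscore \in \mathcal{M}_{\fdivfun}$; this is the asserted description of the global minimizers. As a consistency check, for such $\simscore$ the inner optimizer from part (1) satisfies $\E_{\P(y)}[(\fdivfun')^{-1}(\truesimscore(x,y))] = \E_{\P(y)}[r(x,y)] = 1$, so it does realize $g = \truesimscore$.

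\textbf{Main obstacle.} The difficulty is measure-theoretic bookkeeping rather than anything conceptual: I must make sure $\fdivfun^*$ is finite and differentiable on the range of the arguments that arise (guaranteed in the paper's applications by boundedness of the scores and density ratios, Assumption~\ref{ass:bounded_score}, or more generally by essential smoothness of $\fdivfun$), that $(\fdivfun')^{-1}$ is defined on the relevant interval so the first-order conditions actually have solutions, that differentiation under the expectation is legitimate, and that the ``pointwise'' minimizations are understood $\P(x)\P(y)$-almost everywhere with a measurable selection. Once these regularity points are in place, the convex-analysis core above closes the argument.
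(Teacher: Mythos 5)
Your proposal is correct, and Part (1) is the paper's own argument: fix $x$, minimize the scalar objective in $\simscore_x(x)$, and use $(\fdivfun^*)'=(\fdivfun')^{-1}$ to read off the first-order condition. For Part (2) you share the paper's first reduction (absorb $\simscore_x$ into the score, so $\vform_\fdivfun(\simscore)=\inf_{\simscore_x}F(\simscore-\simscore_x)$ with $F$ the unconstrained objective $\E_{\P(x,y)}[-g]+\E_{\P(x)\P(y)}[\fdivfun^*(g)]$), but the mechanism then differs: the paper writes $\simscore=\truesimscore+c\perturb$, argues that $c\mapsto F(\truesimscore+c\perturb)$ is strictly convex with vanishing derivative at $c=0$ for every direction $\perturb$, and concludes uniqueness of $\truesimscore$ without ever computing the optimal value; you instead minimize the integrand $s\mapsto -s\,r+\fdivfun^*(s)$ pointwise, with $r=\P(x,y)/[\P(x)\P(y)]$, and use Fenchel--Young to get the explicit minimum $-\fdivfun(r)$, hence $\inf F=-\mutinfo{\fdivfun}(X,Y)$ with equality iff $g=\truesimscore$ almost everywhere. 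Your route buys a little more: it re-derives the identity $\inf_{\simscore}\vform_\fdivfun(\simscore)=-\mutinfo{\fdivfun}(X,Y)$ (which the paper imports as the known variational form of the $\fdivfun$-divergence in the proof of Lemma~\ref{lm:equiv_def}) as a byproduct, and it reduces strict convexity to a scalar problem per $(x,y)$ rather than convexity along lines in function space; the paper's perturbation argument is marginally lighter on regularity since it never manipulates $r$ inside the conjugate. One caveat common to both proofs: the implication ``global minimizer of $\vform_\fdivfun$ implies membership in $\mathcal{M}_{\fdivfun}$'' tacitly uses that the inner infimum over $\simscore_x$ is attained (your Part (1) under the regularity conditions you list); you flag this explicitly while the paper leaves it implicit, so your write-up is not less rigorous than the original on this point.
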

\begin{proof}[Proof of Lemma~\ref{lm:vfs_properties}.]
For any fixed $x$, we have
\begin{align*}
    \nabla_c\E_{\P(y)}[\fdivfun^*(\simscore(x,y)-c)+c]
    =  \E_{\P(y)}[-\nabla\fdivfun^*(\simscore(x,y)-c)+1]
\end{align*}
Claim~(1) follows immediately from setting the derivative equal to zero and noting that $\nabla\fdivfun^*=(\fdivfun')^{-1}$.

To prove claim~(2), we first 
    note that adding any function $\simscore_x(x)$ to $\simscore(x,y)$ does not change the value of $\vform_\fdivfun(\simscore)$ due to the infimum inside the definition of $\vform_\fdivfun(\simscore)$. Therefore, it suffices to show that the unique minimizer of 
    \begin{align*}
        \widebar \vform_{\fdivfun}(\simscore)\defn \E_{\P(x, y)}[ - \simscore(x, y)] + \E_{\P(x)\P(y)}[\fdivfun^*(\simscore(x, y))] 
    \end{align*} is $\truesimscore=\fdivfun'\Big(\frac{\P(x,y)}{\P(x)\P(y)}\Big)$. 
Write $\simscore=\truesimscore+c\perturb$. It can be verified that $ \widebar \vform_{\fdivfun}(\truesimscore+c\perturb)$ is strictly convex in $c$. Thus
$\truesimscore$ is the unique minimizer of $\widebar \vform_{\fdivfun}$ if $\nabla_c\widebar \vform_{\fdivfun}(\truesimscore+c\perturb)|_{c=0} =0 $ for all $\perturb.$
This is true since
\begin{align*}
  \nabla_c\widebar \vform_{\fdivfun}(\truesimscore+c\perturb)|_{c=0} 
  &=
  \E_{\P(x, y)}[ - \perturb(x, y)] + \E_{\P(x)\P(y)}[\nabla\fdivfun^*(\simscore(x, y))\perturb(x, y)] \\
  &=
  \E_{\P(x, y)}[ - \perturb(x, y)] + \E_{\P(x)\P(y)}\Big[\frac{\P(x,y)}{\P(x)\P(y)}\perturb(x, y)\Big] =0,
\end{align*}
where the second inequality uses the property of convex conjugates that $\nabla\fdivfun^*(\fdivfun'(x))=x$.
    
\end{proof}

\begin{lemma}[A general bound on $\TV(\P(y|x)||\P(y|\stat(x)))$ based on sufficiency.]\label{lm:tv_upper_bound_gen}
    For  $\fdivfun$ in Definition~\ref{def:general_app_suff} that is twice continuously differentiable,  and for any statistic $\stat$, we have
    \begin{align}
    \E_{\P(x)}[\TV(\P(y|x)||\P(y|\stat(x)))]
    \leq c_2\cdot
    {\sqrt{\Suff_{\cb,\fdivfun}(\stat)}},\label{eq:tv_upper_bound_gen}
    \end{align}
where
     $
c_2\defn\Big(2\inf_{(x,y)\in\supp(x,y)}\fdivfun^{''}\Big(\frac{\P(y | x)}{\P(y)}\Big)\Big)^{-1/2}
    $,
      and $\supp(x,y)$ denotes the support of $\P(x)\times\P(y)$. 
     Notably, when $\fdivfun(x)=(x-1)^2/2$ ($\csquare$-divergence), we have $c_2=1/\sqrt{2}$.
\end{lemma}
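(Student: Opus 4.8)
The plan is to route the estimate through the conditional Bregman form of the sufficiency (Lemma~\ref{lm:equiv_def}), lower bound each Bregman divergence by a squared difference using strong convexity of $\fdivfun$ on the range of the density ratio, compare that squared difference with the total variation distance by Cauchy--Schwarz, and finally pull the square root outside the outer expectation via Jensen's inequality.

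First I would record the second-order form of the Bregman divergence. Since $\fdivfun$ is twice continuously differentiable, Taylor's theorem with Lagrange remainder gives, for any $a,b$ in the domain of $\fdivfun$, a point $\xi_{a,b}$ between $a$ and $b$ with
\[
B_{\fdivfun}(a,b)=\fdivfun(a)-\fdivfun(b)-(a-b)\fdivfun'(b)=\tfrac12\fdivfun''(\xi_{a,b})(a-b)^2 .
\]
I would then apply this with $a=\P(y|x)/\P(y)$ and $b=\P(y|\stat(x))/\P(y)$, observing that $\P(y|\stat(x))=\E\!\left[\P(y|X)\mid\stat(X)\right]$ is a conditional average of the values $\P(y|x')$, so $b$ — and hence $\xi_{a,b}$ — lies in the closed interval spanned by the density ratios $\{\P(y|x')/\P(y)\}$. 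On that interval $\fdivfun''$ is bounded below by $m\defn 2c_2^{-2}=\inf_{(x,y)\in\supp(x,y)}\fdivfun''(\P(y|x)/\P(y))$ (for the $\csquare$-divergence $\fdivfun''\equiv1$ and $m=1$), which yields the strong-convexity estimate $B_{\fdivfun}(a,b)\ge\tfrac{m}{2}(a-b)^2$.

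Plugging this into the CBS expression from Definition~\ref{def:general_app_suff}, written in terms of the marginal $\P(y)$, gives
\[
\Suff_{\cb,\fdivfun}(\stat)=\E_{\P(x)\times\P(y)}\!\Big[B_{\fdivfun}\Big(\tfrac{\P(y|x)}{\P(y)},\tfrac{\P(y|\stat(x))}{\P(y)}\Big)\Big]\ \ge\ \tfrac{m}{2}\,\E_{\P(x)}\!\Big[\E_{\P(y)}\Big[\Big(\tfrac{\P(y|x)}{\P(y)}-\tfrac{\P(y|\stat(x))}{\P(y)}\Big)^2\Big]\Big].
\]
On the other hand, writing the total variation distance as an expectation over $\P(y)$ and applying Cauchy--Schwarz,
\[
\TV\big(\P(\cdot|x)\,\|\,\P(\cdot|\stat(x))\big)=\tfrac12\,\E_{\P(y)}\Big[\Big|\tfrac{\P(y|x)}{\P(y)}-\tfrac{\P(y|\stat(x))}{\P(y)}\Big|\Big]\ \le\ \tfrac12\sqrt{\E_{\P(y)}\Big[\Big(\tfrac{\P(y|x)}{\P(y)}-\tfrac{\P(y|\stat(x))}{\P(y)}\Big)^2\Big]}.
\]
Squaring, taking $\E_{\P(x)}$, and combining with the previous display gives $\E_{\P(x)}[\TV^2]\le\tfrac1{2m}\,\Suff_{\cb,\fdivfun}(\stat)$; Jensen's inequality for the concave map $t\mapsto\sqrt t$ applied to the outer expectation then yields $\E_{\P(x)}[\TV]\le(2m)^{-1/2}\sqrt{\Suff_{\cb,\fdivfun}(\stat)}=c_2\sqrt{\Suff_{\cb,\fdivfun}(\stat)}$, i.e.\ \eqref{eq:tv_upper_bound_gen}; specializing $\fdivfun(x)=(x-1)^2/2$ gives $c_2=1/\sqrt2$.

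The one step that needs care is the localization of the Taylor remainder point: one must ensure $\xi_{a,b}$ (equivalently $b=\P(y|\stat(x))/\P(y)$) stays inside the region where $\fdivfun''\ge m$. This is precisely why $b$ being a conditional expectation of $\P(y|X)/\P(y)$ matters — it keeps $b$ in the convex hull of the density ratios — and why the constant is cleanest when read as the infimum of $\fdivfun''$ over the closed interval spanned by those ratios. Once that is settled, the remaining ingredients (the Taylor identity, Cauchy--Schwarz, and Jensen) are routine.
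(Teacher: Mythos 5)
Your proof is correct and follows essentially the same route as the paper's: lower bound the CBS form by $\tfrac12\inf\fdivfun''$ times the squared difference of density ratios (using that $\P(y|\stat(x))$, being a conditional average of $\P(y|x')$, keeps the Taylor remainder point in the relevant range), then pass to total variation via Cauchy--Schwarz/Jensen, with your only cosmetic difference being that you apply Cauchy--Schwarz over $\P(y)$ first and Jensen over $\P(x)$ second rather than in one step over the joint measure. The only blemish is the stray identity ``$m\defn 2c_2^{-2}$'' (it should read $m=c_2^{-2}/2$, i.e.\ $c_2=(2m)^{-1/2}$), but since you use $m=\inf_{(x,y)}\fdivfun''(\P(y|x)/\P(y))$ consistently, the final constant $c_2$ comes out correctly.
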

\begin{proof}[Proof of Lemma~\ref{lm:tv_upper_bound_gen}.]
  Using the CBS form of sufficiency, we find that
  \begin{align*}
        \Suff(\stat)
        &=\E_{\P(x)\times  \P(y)}\Big[ B_{\fdivfun}\Big( \frac{\P(y | x)}{\P(y)}, \frac{\P(y | \stat(x))}{\P(y)} \Big) \Big]\\
         &\geq\frac{1}{2}\E_{\P(x)\times  \P(y)}\Big[\fdivfun^{''}\Big(\frac{\P(y | x)}{\P(y)}\Big)\cdot \Big[\frac{\P(y | x)}{\P(y)}- \frac{\P(y | \stat(x))}{\P(y)}\Big]^2\Big]\\
        &\geq\frac{1}{2}\inf_{(x,y)\in\supp(x,y)}\fdivfun^{''}\Big(\frac{\P(y | x)}{\P(y)}\Big)\cdot \E_{\P(x)\times  \P(y)}\Big[\Big[\frac{\P(y | x)}{\P(y)}- \frac{\P(y | \stat(x))}{\P(y)}\Big]^2\Big],
  \end{align*}
  where the first inequality follows from the definition of Bregman divergence and the fact that the range of $\P(y|\stat(x))$ belongs to the range of $\P(y|x)$.
  Moreover, by Jensen's inequality, we have
  \begin{align*}
  \Big(\E_{\P(x)\times  \P(y)}\Big[\Big[\frac{\P(y | x)}{\P(y)}- \frac{\P(y | \stat(x))}{\P(y)}\Big]^2\Big]\Big)^{1/2}
       &\geq
       \E_{\P(x)\times  \P(y)}\Big[\Big|\frac{\P(y | x)}{\P(y)}- \frac{\P(y | \stat(x))}{\P(y)}\Big|\Big]\\
       &=2
      \E_{\P(x)}[\TV(\P(y|x)||\P(y|\stat(x)))]
        .
  \end{align*}Putting pieces together yields Lemma~\ref{lm:tv_upper_bound_gen}.
\end{proof}

\begin{example}[KL-sufficiency]\label{example:kl}
Take $\fdivfun(x) = x \log x$ (KL-divergence), then we have 
\begin{align*}
\Suff_{\cb,\fdivfun}(\stat)
&= \E_{\P(x)}\Big[ \KL\Big( \P(y | x) ||\P(y|\stat(x)) \Big) \Big], ~~~\text{and}\\
\vform_\fdivfun(\simscore) 
&= \E_{\P(x,y)}[-\simscore(x,y)]+\E_{\P(x)}[\log \E_{\P(y)}[\exp(\simscore(x,y))]].
\end{align*}
It can be verified that the InfoNCE loss in Eq.~\eqref{eq:simclr_risk} is an asymptotically unbiased estimate of $\vform_\fdivfun(\simscore)$ as the batch size $\batchsize\to\infty$ (see Eq.~\ref{eq:intuitive_bound_suff}).
Moreover, by Pinsker's inequality
\begin{align*}
     \E_{\P(x)}[\TV(\P(y|x)||\P(y|\stat(x)))]
    \leq \frac{1}{\sqrt{2}}\cdot
    {\sqrt{\Suff_{\cb,\kl}(\stat)}}.
\end{align*}
\end{example}


\begin{example}[Chi-sufficiency]
Take $\fdivfun(x) = (x - 1)^2/2$ ($\csquare$-divergence), then we have 
\begin{align*}
\Suff_{\cb, \fdivfun}(\stat) &= \E_{\P(x) \times \P(y)}\Big[ \frac12\Big( \frac{\P(y | x) }{ \P(y)} - \frac{\P(y | \stat(x)) }{ \P(y )}\Big)^2  \Big],
\\
 \vform_{\fdivfun}(\simscore) &= \E_{\P(x, y)}[ - \simscore(x, y)] + \E_{\P(x)\P(y)}[(\simscore(x, y) - \E_{\P(y)}[\simscore(x, y)])^2/2 
    + \simscore(x, y)
   ].
\end{align*}
Lemma~\ref{lm:tv_upper_bound_gen} gives
\[
\E_{\P(x)}[\TV(\P(y | x)|| \P(y | \stat(x)))] \le \frac{1}{\sqrt{2}}\sqrt{ \Suff_{\cb, \csquare}(\stat)}.
\]
Also, we can bound the $\csquare$-divergence by the sufficiency:
\[
 \E_{\P(x)}\chi^2(\P(y | x)||\P(y | \stat(x))) \le  \Suff_{\cb, f}(T) \cdot \Big[2\sup_{(x,y)\in\supp(x,y)} \frac{\P(\stat(x))\P(y)}{\P(\stat(x),y)} \Big].
\]
\end{example}

\begin{example}[Squared Hellinger-sufficiency]
Take $\fdivfun(x) = 1 - \sqrt{x}$, then we have $\fdivfun^*(x) = -1 - \frac{1}{4x}$ for $x < 0$, and
\begin{align*}
    \suffmeasure_{\cb,\fdivfun}(\stat) 
    &= \E_{\P(x)}\big[H^2(\P(y)|| \P(y|x)) - H^2(\P(y)|| \P(y|\stat(x)))\big],
\end{align*}
where $H^2(p||q) \defn \int (\sqrt{p(x)} - \sqrt{q(x)})^2 \, dx / 2$ is the squared Hellinger distance. 
Similarly, the squared Hellinger distance between $\P(y|x), \P(y|\stat(x))$ can be bounded by the sufficiency of $\stat$: 
\begin{align*}
   &\quad \E_{\P(x)}\big[H^2(\P(y|x)||\P(y|\stat(x)))\big]
   =
    \frac{1}{2} \, \E_{\P(x)}\bigg[\sum_y \big(\sqrt{\P(y|x)} - \sqrt{\P(y|\stat(x))}\big)^2\bigg] \\
    &\leq \bigg[\sup_{(x,y)\in\supp(x,y)} \sqrt{\frac{\P(\stat(x), y)}{\P(\stat(x)) \P(y)}} \bigg]
    \cdot \E_{\P(x)}\bigg[\sum_y \sqrt{\P(y)} \, 
    \frac{\big(\sqrt{\P(y|\stat(x))} - \sqrt{\P(y|x)}\big)^2}{2\sqrt{\P(y|\stat(x))}}\bigg] \\
    &= \bigg[\sup_{(x,y)\in\supp(x,y)} \sqrt{\frac{\P(\stat(x), y)}{\P(\stat(x)) \P(y)}} \bigg]
    \cdot \suffmeasure_{\cb,\fdivfun}(\stat),
\end{align*}
where the last equality follows from 
\begin{align*}
    &\quad \E\big[\sqrt{\P(y|\stat(x))} - \sqrt{\P(y|x)} \, \big| \, y, \stat(x)\big] \\
    &= \E\bigg[\big(\sqrt{\P(y|\stat(x))} - \sqrt{\P(y|x)}\big)
    \cdot \frac{\sqrt{\P(y|\stat(x))} - \sqrt{\P(y|x)}}{2\sqrt{\P(y|\stat(x))}} \, \bigg| \, y, \stat(x)\bigg]  + \E\bigg[\frac{\P(y|\stat(x)) - \P(y|x)}{2\sqrt{\P(y|\stat(x))}} \, \bigg| \, y, \stat(x)\bigg] \\
    &= \E\bigg[\frac{\big(\sqrt{\P(y|\stat(x))} - \sqrt{\P(y|x)}\big)^2}{2\sqrt{\P(y|\stat(x))}} \, \bigg| \, y, \stat(x)\bigg].
\end{align*}
\end{example}

\subsection{Sufficiency of similarity scores}\label{sec:suff_similarity_score}
The definition of approximate sufficiency can be extended to score functions $\simscore:\Samspace\times\respspace\mapsto\R$ that measure the similarity between $(X,Y)$. 

\begin{definition}[Approximate sufficient score functions]\label{def:score_fun}
Let $\simscore:\Samspace\times\respspace\mapsto\R$
 be a similarity score function. It induces a conditional density $\P_\simscore$ on $\Samspace\times\respspace$ w.r.t. the  base measure $\bmeasure$ via
\begin{align*}
    \P_\simscore(y|x)=\P(y)(\fdivfun')^{-1}(\simscorebar(x,y)),
\end{align*}
where $\simscorebar(x,y)=\simscore(x,y)-\simscore_x(x)$ such that $\E_{\P(y)}[(\fdivfun')^{-1}\simscorebar(x,y)]=1$ for all $x.$
 We define the sufficiency of $\simscore$ in two equivalent forms:
\begin{itemize}
    \item \textbf{Variational Form Sufficiency (VFS):} The variational form sufficiency of $\stat$ is given by
     \begin{align*}
    \Suff_{\vf, \fdivfun}(\simscore) 
    = \vform_{\fdivfun}(\simscore) - \inf_{\simscoretil:
\Samspace\times\respspace\mapsto\R} \vform_{\fdivfun}(\simscoretil),
    \end{align*}
 and the $\fdivfun$-contrastive loss
     \begin{align}
    \vform_{\fdivfun}(\simscore) \defn \E_{\P(x, y)}[ - \simscore(x, y)] + \inf_{\simscore_x:\Samspace\mapsto\R} \E_{\P(x)\P(y)}[\fdivfun^*(\simscore(x, y) - \simscore_x(x)) + \simscore_x(x)],\label{eq:f_contrastive_loss_score}
    \end{align}
    where $\fdivfun^*$ is the Fenchel-dual of $\fdivfun$.
    
    \item \textbf{Conditional Bregman Sufficiency (CBS):} The conditional Bregman sufficiency of $\stat$ is defined as
    \[
    \Suff_{\cb, \fdivfun}(\simscore) = \E_{\P(x)\times  \P(y)}\Big[ B_{\fdivfun}\Big( \frac{\P(y | x)}{\P(y)}, \frac{\P_\simscore(y | x)}{\P(y)} \Big) \Big],
    \]
    where $B_{\fdivfun}(a, b) \defn \fdivfun(a) - \fdivfun(b) - (a-b)\fdivfun'(b)$ is the Bregman divergence of $\fdivfun$.
\end{itemize}
\end{definition}
Note that the excess risk of the contrastive loss equals the sufficiency of $\simscore$ under our definition.
Similar to Definition~\ref{def:general_app_suff}, we have
\begin{lemma}[Equivalence of two forms of score sufficiency]\label{lm:equiv_def_score}
For any similarity score $\simscore:\Samspace\times\respspace\mapsto\R$, the three forms of sufficiency in Definition~\ref{def:score_fun} are equivalent, i.e.,
\begin{align*}
\Suff_{\vf, \fdivfun}(\simscore)= \Suff_{\cb, \fdivfun}(\simscore)\revdef\Suff_{\fdivfun}(\simscore).
\end{align*}
\end{lemma}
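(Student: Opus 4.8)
The plan is to adapt the $(\ILS)\Leftrightarrow(\VFS)\Leftrightarrow(\CBS)$ argument from the proof of Lemma~\ref{lm:equiv_def}, but now tracking the induced conditional density $\P_\simscore$ rather than a statistic $\stat$. The key observation is that $\Suff_{\vf,\fdivfun}(\simscore)$ decomposes as $\vform_\fdivfun(\simscore) + \mutinfo{\fdivfun}(X,Y)$, since Lemma~\ref{lm:equiv_def} already established $\inf_{\simscoretil:\Samspace\times\respspace\to\R}\vform_\fdivfun(\simscoretil) = -\mutinfo{\fdivfun}(X,Y)$ via the variational representation of $\fdivfun$-divergence. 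So it suffices to show
\[
\vform_\fdivfun(\simscore) + \mutinfo{\fdivfun}(X,Y) = \E_{\P(x)\times\P(y)}\Big[B_\fdivfun\Big(\tfrac{\P(y|x)}{\P(y)},\tfrac{\P_\simscore(y|x)}{\P(y)}\Big)\Big].
\]

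First I would unpack $\vform_\fdivfun(\simscore)$ using part (1) of Lemma~\ref{lm:vfs_properties}: the infimum over $\simscore_x$ is attained at the $\simscore_x$ making $\E_{\P(y)}[(\fdivfun')^{-1}(\simscore(x,y)-\simscore_x(x))]=1$, which is exactly the normalization defining $\simscorebar$ and $\P_\simscore$ in Definition~\ref{def:score_fun}. At this $\simscore_x$, one has $\simscorebar(x,y) = \fdivfun'\big(\P_\simscore(y|x)/\P(y)\big)$ by construction, so $(\fdivfun')^{-1}(\simscorebar(x,y)) = \P_\simscore(y|x)/\P(y)$. Plugging into the Fenchel dual and using the identity $\fdivfun^*(\fdivfun'(u)) = u\fdivfun'(u) - \fdivfun(u)$, the term $\E_{\P(x)\P(y)}[\fdivfun^*(\simscorebar(x,y)) + \simscore_x(x)]$ becomes $\E_{\P(x)\P(y)}\big[\tfrac{\P_\simscore(y|x)}{\P(y)}\fdivfun'\big(\tfrac{\P_\simscore(y|x)}{\P(y)}\big) - \fdivfun\big(\tfrac{\P_\simscore(y|x)}{\P(y)}\big)\big] + \E_{\P(x)\P(y)}[\simscore_x(x)]$. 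Meanwhile $\mutinfo{\fdivfun}(X,Y) = \E_{\P(x)\P(y)}[\fdivfun(\tfrac{\P(y|x)}{\P(y)})]$ and $\E_{\P(x,y)}[-\simscore(x,y)] = \E_{\P(x)\P(y)}[-\tfrac{\P(y|x)}{\P(y)}(\simscorebar(x,y)+\simscore_x(x))]$. Collecting the $\simscore_x$ terms, they combine to $\E_{\P(x)\P(y)}[(1 - \tfrac{\P(y|x)}{\P(y)})\simscore_x(x)]$, which vanishes after taking the conditional expectation over $y$ given $x$ (since $\E_{\P(y)}[\P(y|x)/\P(y)] = 1$). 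What remains, after substituting $\simscorebar(x,y) = \fdivfun'(\P_\simscore(y|x)/\P(y))$ into the $-\tfrac{\P(y|x)}{\P(y)}\simscorebar$ term, is precisely
\[
\E_{\P(x)\P(y)}\Big[\fdivfun\Big(\tfrac{\P(y|x)}{\P(y)}\Big) - \fdivfun\Big(\tfrac{\P_\simscore(y|x)}{\P(y)}\Big) - \Big(\tfrac{\P(y|x)}{\P(y)} - \tfrac{\P_\simscore(y|x)}{\P(y)}\Big)\fdivfun'\Big(\tfrac{\P_\simscore(y|x)}{\P(y)}\Big)\Big],
\]
which is exactly $\E_{\P(x)\P(y)}[B_\fdivfun(\tfrac{\P(y|x)}{\P(y)}, \tfrac{\P_\simscore(y|x)}{\P(y)})] = \Suff_{\cb,\fdivfun}(\simscore)$.

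The main obstacle I anticipate is bookkeeping the $\simscore_x(x)$ bias terms carefully so that they cancel, and justifying that the attained infimum in Lemma~\ref{lm:vfs_properties}(1) indeed coincides with the normalization in Definition~\ref{def:score_fun} (this needs $\fdivfun$ strictly convex and differentiable so that $(\fdivfun')^{-1} = \nabla\fdivfun^*$ is well-defined, which is assumed throughout); one should also note $\simscorebar$ is invariant under shifting $\simscore$ by a function of $x$, consistent with $\vform_\fdivfun$ being shift-invariant, so the construction is well-posed. Everything else is the algebraic conjugate-function identity $\fdivfun^*(\fdivfun'(u)) = u\fdivfun'(u) - \fdivfun(u)$ applied pointwise, exactly mirroring the ILS--CBS step in Lemma~\ref{lm:equiv_def}.
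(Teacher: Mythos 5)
Your proposal is correct and follows essentially the same route as the paper: evaluate $\vform_{\fdivfun}(\simscore)$ at the normalized score $\simscorebar$ (whose optimality of the inner $\simscore_x$-infimum is exactly Lemma~\ref{lm:vfs_properties}(1) and matches the normalization in Definition~\ref{def:score_fun}), apply the conjugate identity $\fdivfun^*(\fdivfun'(u))=u\fdivfun'(u)-\fdivfun(u)$, and reassemble the terms into the Bregman divergence. The only cosmetic difference is that you replace the baseline by $\inf_{\simscoretil}\vform_{\fdivfun}(\simscoretil)=-\mutinfo{\fdivfun}(X,Y)$ (from Lemma~\ref{lm:equiv_def}) rather than expanding $\vform_{\fdivfun}(\truesimscore)$ via the same conjugate identity at $\truesimscore$, which amounts to the same computation.
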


\begin{proof}[Proof of Lemma~\ref{lm:equiv_def_score}.]

\noindent
{\bf $(\VFS)$~$\Leftrightarrow$~$(\CBS)$.}
Let $\truesimscore(x,y)=\fdivfun'(\frac{\P(x,y)}{\P(x)\P(y)})$. We have by Lemma~\ref{lm:vfs_properties} that $\truesimscore\in\argmin_{\simscoretil}\vform_{\fdivfun}(\simscoretil).$ 
By the definition of the $(\VFS)$, we have
\begin{align*}
   \Suff_{\vf,\fdivfun}(\simscore)
   &=\vform_{\fdivfun}(\simscore)
   -\vform_{\fdivfun}(\truesimscore)\\
   &=
   \E_{\P(x, y)}[\truesimscore - \simscorebar(x, y)] 
   + \E_{\P(x)\P(y)}[\fdivfun^*(\simscorebar(x, y)) - \fdivfun^*(\truesimscore(x, y)) ]\\
    &\overset{(i)}{=}
   \E_{\P(x, y)}[\truesimscore - \simscorebar(x, y)] 
   + 
   \E_{\P(x)\P(y)}\Big[\fdivfun\Bigl(\frac{\P(x,y)}{\P(x)\P(y)}\Bigr)
     -\frac{\P(x,y)}{\P(x)\P(y)}\truesimscore(x, y)\Big]\\
    &=
      - \E_{\P(x, y)}[ \simscorebar(x, y)] 
      + \E_{\P(x)\P(y)}[\fdivfun^*(\simscorebar(x, y))] +
      \E_{\P(x)\P(y)}\Big[\fdivfun\Bigl(\frac{\P(x,y)}{\P(x)\P(y)}\Bigr)\Big]\\
      &\overset{(ii)}{=}
       - \E_{\P(x, y)}[ \simscorebar(x, y)] 
       +
         \E_{\P(x)\P(y)}\Big[\fdivfun\Bigl(\frac{\P(x,y)}{\P(x)\P(y)}\Bigr)
         +\frac{\P_\simscore(y|x)}{\P(y)}\simscorebar(x,y)
         -\fdivfun\Bigl(\frac{\P_\simscore(y|x)}{\P(y)}\Bigr)\Big]
         \\
         &{=}
          \E_{\P(x)\P(y)}\Big[\fdivfun\Bigl(\frac{\P(y|x)}{\P(y)}\Bigr)
         -\fdivfun\Bigl(\frac{\P_\simscore(y|x)}{\P(y)}\Bigr)\Big]
         -\E_{\P(x)\times\P(y)}\Big[\simscorebar(x,y)\Big(\frac{\P(y | x)}{\P(y)}-\frac{\P_\simscore(y | x)}{\P(y)}\Big)\Big],
\end{align*}
where step~(i)~and~(ii) use $f((f')^{-1}(z))+f^*(z)=z(f')^{-1}(z)$ with $z=\truesimscore(x,y)$ and $\simscorebar(x,y)$, respectively.
Since $\simscorebar(x,y)=\fdivfun'(\frac{\P_\simscore(y|x)}{\P(y)})$, it follows immediately that $ \Suff_{\vf,\fdivfun}(\simscore)= \Suff_{\cb,\fdivfun}(\simscore)$.

\end{proof}

\begin{example}
Take $\fdivfun(x) = x \log x$ (KL-divergence). Then $\truesimscore(x, y) = \log\big(\P(x, y) / [\P(x) \P(y)]\big)$, 
$B_{\fdivfun}(a, b) = a \log (a/b) - (a - b)$, 
and $\P_{\simscore}(y|x) =  \P(y) \exp(\simscore(x, y)) / \E_{\P(y)}[\exp(\simscore(x, y))]$. 
Also, we have
\begin{align*}
\Suff_\kl(\simscore)=
\vform_{\fdivfun}(\simscore) - \vform_{\fdivfun}(\truesimscore) 
&= \int \P( y|x) \log\bigg(\frac{\P(y|x)}{\P_{\simscore}(y|x)}\bigg) -\big(\P(y|x) - \P_{\simscore}(y|x)\big) \, \P(x) dy \, dx \\
&= \E_{x \sim \P(x)}\big[\KL(\P(y|x) \, \| \, \P_{\simscore}(y|x))\big].
\end{align*}
\end{example}

\begin{example}
Take $\fdivfun(x) = (x - 1)^2/2$ ($\chi^2$-divergence). Then $\truesimscore(x, y) = \P(x, y) / [\P(x) \P(y)] - 1$, 
$B_{\fdivfun}(a, b) = (a - b)^2/2$, 
and $\P_{\simscore}(y|x) = \P(y) \big(\simscore(x, y) - \E_{y}[\simscore(x, y)]  + 1\big)$. Moreover,
\begin{align*}
\Suff_{\csquare}(\simscore)=
\vform_{\fdivfun}(\simscore) - \vform_{\fdivfun}(\truesimscore) 
&= \frac12\E_{\P(x) \times \P(y)}\bigg[\frac{\big(\P(y|x) - \P_{\simscore}(y|x)\big)^2}{\P(y)^2}\bigg] \\
&= \frac12\E_{\P(x)} \sum_y \bigg[\frac{\big(\P(y|x) - \P_{\simscore}(y|x)\big)^2}{\P(y|x)} \cdot \frac{\P(y|x)}{\P(y)}\bigg] \\
&\geq \inf_{(x, y)\in\supp(x,y)} \frac{\P(x, y)}{\P(x) \P(y)} \cdot \E_{\P(x)}\big[\chi^2(\P(y|x)|| \P_{\simscore}(y|x))\big].
\end{align*}
\end{example}


\section{Proofs in Section~\ref{sec:general_contrastive_learning}}
\label{sec:pf_simclr_training}

\subsection{Proof of Eq.~\eqref{eq:intuitive_bound_suff}}\label{sec:pf_prop:kl_induce_infonce}
As given in Example~\ref{example:kl} (which can be established using Lemma~\ref{lm:vfs_properties}), the KL-contrastive loss has the form 
\begin{align*}
   \vform_\kl(\simscore)= \E_{(\Samviewone,\Samviewtwo)}[-\simscore(\Samviewone,\Samviewtwo)]+\E_{\Samviewone\sim\Distview}[\log \E_{\Samviewtwo\sim\Distview}[\exp(\simscore(\Samviewone,\Samviewtwo))]].
\end{align*}
Recall the SimCLR loss $\orisk_{\simclr,\batchsize}(\simscore)$  in Eq.~\eqref{eq:simclr_risk}. We then have
\begin{align*}
 &\quad
 \lim_{\batchsize\to\infty}\orisk_{\simclr,\batchsize}(\simscore)-\log\batchsize 
 \\
    &=
    \frac{1}{2}  \lim_{\batchsize\to\infty}\E \Big[ - \log \frac{\exp(\simscore(\Samviewone_1, \Samviewtwo_1)  ) }{\sum_{j \in [\batchsize]} \exp(\simscore(\Samviewone_1, \Samviewtwo_j))/\batchsize} \Big] 
+ 
\frac{1}{2}\E \Big[ - \log \frac{\exp(\simscore(\Samviewone_1, \Samviewtwo_1)  ) }{\sum_{j \in [\batchsize]} \exp(\simscore(\Samviewone_j, \Samviewtwo_1))/\batchsize} \Big]\\
&=
 \lim_{\batchsize\to\infty}\E \Big[  \log {\sum_{j \in [\batchsize]} \exp(\simscore(\Samviewone_1, \Samviewtwo_j))/\batchsize} \Big]-\E[{\exp(\simscore(\Samviewone_1, \Samviewtwo_1)  ) }]=\vform_\kl(\simscore),
\end{align*}
where the second equality follows from the symmetry of $\simscore$ in its arguments and the last equality uses the law of large numbers (note that $\Samviewone_1$ is independent of $\Samviewtwo_j$ for $j\neq1$) and the bounded convergence theorem.

\subsection{Proof of Theorem~\ref{thm:main_simclr_generalization}}\label{sec:pf_thm:main_simclr_generalization}
We begin the proof by stating the following proposition that connects the excess risk with  sufficiency.
\begin{proposition}[Near-minimizers of SimCLR as near-sufficient statistics; Proposition 1 in~\cite{oko2025statistical}]\label{prop:approximate_global_min_contrastive_loss}
Suppose Assumption~\ref{ass:bounded_score} holds and $\truesimscore$ is a  global minimizer of $\orisk_{\simclr, \batchsize}(\simscore)$ as defined in Section~\ref{sec:simclr_erm_analysis}. Then, there exists a constant $\polyshort>0$, which depends polynomially on $\boundscoreconst$, such that for any function $\fun\in\funspace$, its sufficiency can be bounded by its SimCLR excess risk. Namely, for any $\batchsize \geq 2$, we have
\begin{align*}
\suffmeasure({\fun})\leq\lim_{\batchsize' \to \infty}\Big[ \orisk_{\simclr,  \batchsize'}(\simscorep{\fun}) - \orisk_{\simclr,  \batchsize'}(\truesimscore{})\Big]
\leq 
\underbrace{\Big[ \orisk_{\simclr,  \batchsize}(\simscorep{\fun}) - \orisk_{\simclr,  \batchsize}(\truesimscore{}) \Big]}_{\textup{SimCLR excess risk}}\cdot\Big(1 +\frac{\polyshort}{\batchsize}\Big).
\end{align*} 
\end{proposition}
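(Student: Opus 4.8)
I would prove the two inequalities separately. For the first one, Eq.~\eqref{eq:intuitive_bound_suff} gives $\vform_{\kl}(\simscore)=\lim_{\batchsize'\to\infty}[\orisk_{\simclr,\batchsize'}(\simscore)-\log\batchsize']$, so subtracting the two risks (the $\log\batchsize'$ terms cancel) identifies the middle quantity $\lim_{\batchsize'\to\infty}[\orisk_{\simclr,\batchsize'}(\simscorep{\fun})-\orisk_{\simclr,\batchsize'}(\truesimscore)]$ with $\vform_{\kl}(\simscorep{\fun})-\vform_{\kl}(\truesimscore)$. By Lemma~\ref{lm:vfs_properties}, $\truesimscore(\Samviewone,\Samviewtwo)=\log[\P(\Samviewone,\Samviewtwo)/(\P(\Samviewone)\P(\Samviewtwo))]$ is a global minimizer of $\vform_{\kl}$, hence this equals $\vform_{\kl}(\simscorep{\fun})-\inf_{\simscore}\vform_{\kl}(\simscore)$. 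Since $\simscorep{\fun}(\Samviewone,\Samviewtwo)=\linkfun(\inprod{\fun(\Samviewone)}{\fun(\Samviewtwo)})$ is of the admissible form $\simscore(\fun(\Samviewone),\Samviewtwo)$ (take $\simscore(u,\Samviewtwo):=\linkfun(\inprod{u}{\fun(\Samviewtwo)})$), the VFS form of Definition~\ref{def:general_app_suff} yields $\suffmeasure(\fun)=\Suff_{\vf,\kl}(\fun)\le\vform_{\kl}(\simscorep{\fun})-\inf_{\simscore}\vform_{\kl}(\simscore)$, which is exactly the first claimed bound.

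For the second inequality I would route through a finite-batch-bias estimate. Using that the two halves of~\eqref{eq:simclr_risk} are equal by symmetry, write $\orisk_{\simclr,\batchsize}(\simscore)-\log\batchsize=-\E[\simscore(\Samviewone_1,\Samviewtwo_1)]+g_\batchsize(\simscore)$, where $g_\batchsize(\simscore):=\E_{\Samviewone_1}\big[\log\big(\tfrac1\batchsize\sum_{j\in[\batchsize]}\exp(\simscore(\Samviewone_1,\Samviewtwo_j))\big)\big]$, the $j=1$ term being paired with $\Samviewone_1$ and the rest drawn independently from $\Distview$. Since $g_\batchsize(\simscore)\to g_\infty(\simscore):=\E_{\Samviewone_1}[\log\E_{\Samviewtwo\sim\Distview}\exp(\simscore(\Samviewone_1,\Samviewtwo))]$ and $\vform_{\kl}(\simscore)=-\E[\simscore(\Samviewone_1,\Samviewtwo_1)]+g_\infty(\simscore)$, the difference between the $\batchsize'\to\infty$ excess risk and the batch-$\batchsize$ excess risk is exactly $E_\batchsize(\simscorep{\fun})-E_\batchsize(\truesimscore)$ with $E_\batchsize(\simscore):=g_\infty(\simscore)-g_\batchsize(\simscore)$. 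So it suffices to prove
\begin{align*}
E_\batchsize(\simscorep{\fun})-E_\batchsize(\truesimscore)\;\le\;\frac{\polyshort}{\batchsize}\,\big[\orisk_{\simclr,\batchsize}(\simscorep{\fun})-\orisk_{\simclr,\batchsize}(\truesimscore)\big],
\end{align*}
the right-hand side being nonnegative because $\truesimscore$ globally minimizes $\orisk_{\simclr,\batchsize}$ (Section~\ref{sec:simclr_erm_analysis}).

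To establish this key estimate I would Taylor-expand the logarithm inside $g_\batchsize$ around the conditional unpaired mean $\bar R(\Samviewone_1):=\E_{\Samviewtwo\sim\Distview}\exp(\simscore(\Samviewone_1,\Samviewtwo))$. Assumption~\ref{ass:bounded_score} forces every $\exp(\simscore)$ and every density ratio into $[1/\boundscoreconst,\boundscoreconst]$, so $\tfrac1\batchsize\sum_{j}\exp(\simscore(\Samviewone_1,\Samviewtwo_j))$ stays in a compact interval bounded away from $0$ and $\infty$, and a second-order expansion of $\log$ with remainder controlled uniformly in terms of $\boundscoreconst$ is legitimate. This yields $E_\batchsize(\simscore)=\tfrac1\batchsize\Phi(\simscore)+r_\batchsize(\simscore)$ with $|r_\batchsize(\simscore)|\le(c/\batchsize)\,\Phi(\simscore)$, where $\Phi(\simscore)$ is a weighted $\csquare$-type quadratic functional of $\exp(\simscore)$. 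The crux is that both $\Phi(\simscorep{\fun})-\Phi(\truesimscore)$ and the batch-$\batchsize$ excess risk are, up to constants depending only on $\boundscoreconst$, equivalent to one and the same weighted squared discrepancy between $\simscorep{\fun}$ and $\truesimscore$ --- the former because $\truesimscore$ is (up to an additive term the functional is insensitive to) the unique critical point of $\Phi$ with Hessian bounded above and below on the compact domain, the latter because $\truesimscore$ globally minimizes $\orisk_{\simclr,\batchsize}$ with a likewise bounded Hessian there. Combining the pieces produces the displayed inequality with $\polyshort$ polynomial in $\boundscoreconst$. (Alternatively, one may simply invoke this as Proposition~1 of~\cite{oko2025statistical}.)

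The step I expect to be the main obstacle is precisely this last comparison: it is easy to check that $E_\batchsize$ and the batch-$\batchsize$ excess risk are each $O(1/\batchsize)$-small and behave quadratically near $\truesimscore$, but upgrading this to the quantitative \emph{global} bound ``$E_\batchsize$-gap $\le(\polyshort/\batchsize)\times$(batch-$\batchsize$ excess risk)'' requires showing that the finite-batch bias functional and the contrastive excess-risk functional are both squeezed between constant multiples of the same weighted $L^2$ distance over the whole bounded domain --- which is exactly where Assumption~\ref{ass:bounded_score} is used in full strength.
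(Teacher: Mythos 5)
Your handling of the first inequality is correct and matches the paper's own ingredients: Eq.~\eqref{eq:intuitive_bound_suff} identifies $\lim_{\batchsize'\to\infty}[\orisk_{\simclr,\batchsize'}(\simscorep{\fun})-\orisk_{\simclr,\batchsize'}(\truesimscore)]$ with $\vform_{\kl}(\simscorep{\fun})-\vform_{\kl}(\truesimscore)$, Lemma~\ref{lm:vfs_properties} makes $\truesimscore$ a global minimizer of $\vform_{\kl}$, and since $\simscorep{\fun}$ factors through $\fun$ in its first argument, the VFS form of Definition~\ref{def:general_app_suff} gives $\suffmeasure(\fun)\le\vform_{\kl}(\simscorep{\fun})-\inf_{\simscore}\vform_{\kl}(\simscore)$. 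Your reduction of the second inequality to the bias comparison $E_\batchsize(\simscorep{\fun})-E_\batchsize(\truesimscore)\le(\polyshort/\batchsize)\big[\orisk_{\simclr,\batchsize}(\simscorep{\fun})-\orisk_{\simclr,\batchsize}(\truesimscore)\big]$ is also correct bookkeeping (nonnegativity of the right side being guaranteed because the proposition assumes $\truesimscore$ minimizes $\orisk_{\simclr,\batchsize}$).

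That comparison, however, is the entire content of the proposition, and your argument for it is asserted rather than proved. Concretely: (i) expanding $g_\batchsize$ around your unpaired mean $\bar R(\Samviewone_1)$ produces not only a variance ($\csquare$-type) term but also a first-order bias from the paired $j=1$ summand, namely a term proportional to $\frac{1}{\batchsize}\E\big[\big(\E[\exp(\simscore(\Samviewone_1,\Samviewtwo_1))\mid\Samviewone_1]-\E_{\Samviewtwo\sim\Distview}[\exp(\simscore(\Samviewone_1,\Samviewtwo))]\big)/\bar R(\Samviewone_1)\big]$; hence your $\Phi$ is not a purely quadratic functional, need not be nonnegative, and the relative-remainder bound $|r_\batchsize(\simscore)|\le(c/\batchsize)\Phi(\simscore)$ is unjustified as stated. (ii) The claim that the $\Phi$-gap and the batch-$\batchsize$ excess risk are each two-sidedly comparable to one common weighted squared distance between $\simscorep{\fun}$ and $\truesimscore$ is a global strong-convexity/smoothness statement for the symmetric InfoNCE functional that must be proved with constants polynomial in $\boundscoreconst$, and it must respect the mismatch in invariances: adding a function of $\Samviewone$ leaves $\vform_{\kl}$ unchanged but changes the second half of Eq.~\eqref{eq:simclr_risk}. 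You explicitly flag this step as the obstacle and do not carry it out, so as a self-contained argument the proposal has a genuine gap exactly at the step the proposition exists to supply. Your fallback---``simply invoke Proposition~1 of \cite{oko2025statistical}''---is in fact the paper's entire proof: the paper gives no independent derivation and only observes that SimCLR is the symmetric special case of CLIP, so the cited proposition applies verbatim; if you take that route you should likewise justify the reduction (symmetry of the two views' joint law) rather than prove the estimate yourself.
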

A similar version of this result has been established for contrastive language-image pretraining (CLIP) in Proposition 1 in~\cite{oko2025statistical}.  
The proof of Proposition~\ref{prop:approximate_global_min_contrastive_loss} follows immediately from the proof of Proposition~1 in~\cite{oko2025statistical} as  the SimCLR setup can be viewed as a special case of CLIP
in which the text and image follows a symmetric distribution conditioned on their shared information.

Adopt the shorthand notation $ \orisk_{  \batchsize}$ for $ \orisk_{\simclr,  \batchsize}$. 
With Proposition~\ref{prop:approximate_global_min_contrastive_loss} at hand, we obtain the following decomposition for some $\polyshort>0$ polynomially dependent on $\boundscoreconst$
\begin{align*}
\suffmeasure(\estfun)
   &\leq
  {\Big[ \orisk_{  \batchsize}(\simscorep{\estfun}) - \orisk_{  \batchsize}(\truesimscore) \Big]}\cdot\Big(1 +\frac{\polyshort}{\batchsize}\Big)
   \\
   &=
    {\Big[ [\orisk_{  \batchsize}(\simscorep{\estfun}) - \inf_{\fun\in\funspace}\orisk_{  \batchsize}(\simscorep{\fun})
    ]
    +
   [\inf_{\fun\in\funspace}\orisk_{  \batchsize}(\simscorep{\fun})  - \orisk_{  \batchsize}(\truesimscore)
    ]
    \Big]}\cdot\Big(1 +\frac{\polyshort}{\batchsize}\Big)
    \\
   &\leq
   \underbrace{\Big[ \orisk_{  \batchsize}(\simscorep{\estfun}) -\inf_{\fun\in\funspace}\orisk_{  \batchsize}(\simscorep{\fun})\Big]}_{\generr}\cdot\Big(1 +\frac{\polyshort}{\batchsize}\Big)
+
 \underbrace{\Big[\inf_{\fun\in\funspace}\orisk_{  \batchsize}(\simscorep{\fun})- \orisk_{  \batchsize}(\truesimscore) \Big]}_{\apperr}\cdot\Big(1 +\frac{\polyshort}{\batchsize}\Big).
\end{align*}
Therefore, it remains to prove the following bound.
    \begin{itemize}
        \item[(1).] With probability at least $1-\delta$, the excess risk
        \begin{align}
            \orisk_{  \batchsize}(\simscorep{\estfun}) -\inf_{\fun\in\funspace}\orisk_{  \batchsize}(\simscorep{\fun})\leq 
    \frac{\polyshort}{\sqrt\samsize}\Bigg[\sqrt{{\log(1/\delta)}}
+\linkfunlip^2\int_{0}^{2{(\log\boundscoreconst+\linkfunlip)}}\sqrt{\log\Covernum(u,\vecnorm{\cdot}{2,\infty},\funspace)}]du
    \Bigg]
   \label{eq:pf_thm_simclr_claim1}
        \end{align}
        for some constant $\polyshort>0$ that is polynomially dependent on $\boundscoreconst$.
 \end{itemize}

\myunder{Proof of Eq.~\eqref{eq:pf_thm_simclr_claim1}.}
Recall the definition of $\emprisk_{\simclr,\batchsize}$ in Eq.~\eqref{eq:simclr-empirical-risk-minimization} and adopt the shorthand $\emprisk_{\batchsize}$ for $\emprisk_{\simclr,\batchsize}$. Let $\boundfun\defn\sqrt{\linkfunlip(\log\boundscoreconst+\linkfunlip)},~\bound \defn c(\boundscoreconst^6+1)\boundfun\linkfunlip $ for some absolute constant $c>0$. It can be verified by Assumption~\ref{ass:link_fun} that $\funspace$ must satisfy $\vecnorm{\fun}{2,\infty}\leq \boundfun$ for all $\fun\in\funspace$  for Assumption~\ref{ass:bounded_score} to hold.
Define the zero-mean random process
   $\process_\fun \defn \emprisk_{\batchsize}(\simscorep{\fun})-\E[\emprisk_{\batchsize}(\simscorep{\fun})],~~\fun\in\funspace.
$
We will show that
\begin{subequations}
   \begin{align}
   \P\Big(
\big|\sup_{\fun\in\funspace}|\process_{\fun}|
-
\E[\sup_{\fun\in\funspace}|\process_{\fun}|]\big|\geq t
\Big)
&\leq 
2\exp\Big(-\frac{2\samsize t^2}{9\boundscoreconst^4}\Big),~~\text{for all } t\geq0,~\text{~and}
   \label{eq:concen_pf_claim_2}
   \\
   \label{eq:concen_pf_claim_1}
\E[\sup_{\fun\in\funspace}|\process_{\fun}|]
&\leq
\E[|\process_{\fun_0}|]
+
\E[\sup_{\fun,\afun\in\funspace}|\process_{\fun}-\process_{\afun}|]\notag\\
   &\leq 
   c\frac{\boundscoreconst^2}{\sqrt{\samsize}}
   +
   32 \frac{\bound}{\sqrt{\samsize}}\cdot \int_{0}^{2\boundfun}\sqrt{\log\Covernum(u,\vecnorm{\cdot}{2,\infty},\funspace)}
   du
   \end{align}
   for any $\fun_0\in\funspace$ and some absolute constant $c>0.$
\end{subequations}
Combining the two bounds and noting 
\begin{align}
      \orisk_{  \batchsize}(\simscorep{\estfun}) -\inf_{\fun\in\funspace}\orisk_{  \batchsize}(\simscorep{\fun})
      &\leq 2\sup_{\fun\in\funspace}|\emprisk_\batchsize(\simscorep{\fun})-\orisk_\batchsize(\simscorep{\fun})|=
      2\sup_{\fun\in\funspace}|\emprisk_\batchsize(\simscorep{\fun})-\E[\emprisk_\batchsize(\simscorep{\fun})]|
     =2\sup_{\fun\in\funspace}|\process_\fun|
\end{align}
yields claim~(1).
\\

\paragraph{Proof of Eq.~\eqref{eq:concen_pf_claim_2}.}
Let $\Samview_i=(\Samviewone_i,\Samviewtwo_i)$. Then $\{\Samview_i\}_{i=1}^\samsize$ are i.i.d. pairs of augmented views. For any $i\in[\numbatch],j\in[\batchsize]$, suppose $\Samview_{(i-1)\batchsize+j}$ is replaced by some alternative sample $\Samviewtil_{(i-1)\batchsize+j}=(\Samviewonetil_{(i-1)\batchsize+j},\Samviewtwotil_{(i-1)\batchsize+j})$ in the calculation of $\emprisk_{\batchsize}(\simscorep{\fun})$. Then we have
\begin{align}
&\quad|\process_\fun(\Samview_1,\ldots,\Samview_{(i-1)\batchsize+j},\ldots,\Samview_\samsize)
-\process_\fun(\Samview_1,\ldots,\Samviewtil_{(i-1)\batchsize+j},\ldots,\Samview_\samsize)|\notag\\
&=
 | \emprisk_{\batchsize}(\simscorep{\fun})(\Samview_1,\ldots,\Samview_{(i-1)\batchsize+j},\ldots,\Samview_\samsize) 
 -
 \emprisk_{\batchsize}(\simscorep{\fun})(\Samview_1,\ldots,\Samviewtil_{(i-1)\batchsize+j},\ldots,\Samview_\samsize) 
 |
 \leq
 \term_1+\term_2,\label{eq:bound_term_22_generalization}
\end{align}
where (assuming $\Samviewtil_s=\Samview_s$ for $j\in[\samsize]\backslash\{(i-1)\batchsize+j\}$)
\begin{align*}
    \term_1&\defn 
      \frac{1}{\samsize}\Big|{\simscorep{\fun}(\Samviewone_{(i-1)\batchsize+j}, \Samviewtwo_{(i-1)\batchsize+j}) } 
      - 
      {\simscorep{\fun}(\Samviewonetil_{(i-1)\batchsize+j}, \Samviewtwotil_{(i-1)\batchsize+j}) } 
    \Big|\leq
    \frac{2\log\boundscoreconst}{\samsize},
    \end{align*} and
    \begin{align*}
    \term_2
    &\defn 
      \frac{1}{2\samsize}\sum_{k=1}^\batchsize\Bigg| \Big[  \log \Big(\frac{1}{\batchsize}{\sum_{l \in [\batchsize]} \exp(\simscorep{\fun}(\Samviewone_{(i-1)\batchsize+k}, \Samviewtwo_{(i-1)\batchsize+l}))}\Big) 
      +
      \log{\Big(\frac{1}{\batchsize}\sum_{l \in [\batchsize]} \exp(\simscorep{\fun}(\Samviewone_{(i-1)\batchsize+l}, \Samviewtwo_{(i-1)\batchsize+k}))}\Big) \Big]\notag \\
    &~~~~\qquad
    - \Big[  \log \Big(\frac{1}{\batchsize}{\sum_{l \in [\batchsize]} \exp(\simscorep{\fun}(\Samviewonetil_{(i-1)\batchsize+k}, \Samviewtwotil_{(i-1)\batchsize+l}))}\Big) 
      +
      \log{\Big(\frac{1}{\batchsize}\sum_{l \in [\batchsize]} \exp(\simscorep{\fun}(\Samviewonetil_{(i-1)\batchsize+l}, \Samviewtwotil_{(i-1)\batchsize+k}))}\Big) \Big]
      \Bigg|\\
      &
     \overset{(i)}{\leq}
       \frac{\boundscoreconst}{2\samsize}
       \sum_{k=1}^\batchsize\Bigg| \frac{1}{\batchsize}\Big|{\sum_{l \in [\batchsize]} \exp(\simscorep{\fun}(\Samviewone_{(i-1)\batchsize+k}, \Samviewtwo_{(i-1)\batchsize+l}))} 
         - {\sum_{l \in [\batchsize]} \exp(\simscorep{\fun}(\Samviewonetil_{(i-1)\batchsize+k}, \Samviewtwotil_{(i-1)\batchsize+l}))}\Big|\\
 &\qquad \qquad\qquad
      +
     \frac{1}{\batchsize} \Big|\sum_{l \in [\batchsize]} \exp(\simscorep{\fun}(\Samviewone_{(i-1)\batchsize+l}, \Samviewtwo_{(i-1)\batchsize+k})) -
    \sum_{l \in [\batchsize]} \exp(\simscorep{\fun}(\Samviewonetil_{(i-1)\batchsize+l}, \Samviewtwotil_{(i-1)\batchsize+k}))\Big|
      \Bigg|\\
      &\leq
        \frac{\boundscoreconst}{\samsize\batchsize}
         \sum_{k=1}^\batchsize  \sum_{l=1}^\batchsize
         \Big|\exp(\simscorep{\fun}(\Samviewone_{(i-1)\batchsize+k}, \Samviewtwo_{(i-1)\batchsize+l}))
         -\exp(\simscorep{\fun}(\Samviewonetil_{(i-1)\batchsize+k}, \Samviewtwotil_{(i-1)\batchsize+l}))\Big|\\
         &\overset{(ii)}{\leq}\frac{2(\boundscoreconst^2-1)}{\samsize}
        ,
\end{align*}
Here, step~(i) follows from the triangle inequality, a Taylor expansion of $\log(x)$, and Assumption~\ref{ass:bounded_score}; step~(ii) follows from Assumption~\ref{ass:bounded_score} and noting that
$ \Big|\exp(\simscorep{\fun}(\Samviewone_{(i-1)\batchsize+k}, \Samviewtwo_{(i-1)\batchsize+l}))
         -\exp(\simscorep{\fun}(\Samviewonetil_{(i-1)\batchsize+k}, \Samviewtwotil_{(i-1)\batchsize+l}))\Big|\neq 0$ for at most $2\batchsize$ terms with indices $k,l\in[\batchsize]$. 

Putting pieces together, we find
\begin{align*}
     &\quad| \emprisk_{\batchsize}(\simscorep{\fun})(\Samview_1,\ldots,\Samview_{(i-1)\batchsize+j},\ldots,\Samview_\samsize) 
 -
 \emprisk_{\batchsize}
 (\simscorep{\fun})(\Samview_1,\ldots,\Samviewtil_{(i-1)\batchsize+j},\ldots,\Samview_\samsize) 
 |\\
 &\leq \frac{2\log\boundscoreconst+2\boundscoreconst^2-2}{\samsize} 
 \leq \frac{3\boundscoreconst^2}{\samsize}
\end{align*}for any $\Samviewtil_{(i-1)K+j}$ and any $i\in[\numbatch],j\in[\batchsize]$ and all $\fun\in\funspace$. Therefore, Eq.~\eqref{eq:concen_pf_claim_2} follows from Corollary~2.21 in~\cite{wainwright_2019} for functions with bounded differences.
\\

\paragraph{Proof of Eq.~\eqref{eq:concen_pf_claim_1}.}
First, we have $\E[|\process_{\fun_0}|]\leq c\boundscoreconst^2/\sqrt{\samsize}$
by properties of sub-Gaussian variables and the fact that, for any $\fun_0\in\funspace$, $\process_{\fun_0}$ is  zero-mean with bounded differences $c\boundscoreconst^2/\samsize$, as implied by the proof of Eq.~\eqref{eq:concen_pf_claim_2}.
By Dudley’s entropy integral bound (see Theorem~5.22 in~\cite{wainwright_2019}), it suffices to show $\{\process_{\fun},\fun\in\funspace\}$ is a zero-mean sub-Gaussian process with respect to the metric $\metric_{\process}(\fun,\afun)\defn \bound\vecnorm{\fun-\afun}{2,\infty}/\sqrt{\samsize}.
$

Let $\vecnorm{\Sam}{\psi} \defn \inf \{ t > 0 : \mathbb{E} [ \psi(\Sam / t) ] \leq 1 \}
$ denote the Orlicz norm for random variables and let $\psi_2(u)=\exp(u^2)-1$.
We have
\begin{align}
   \vecnorm{ \process_{\fun}-\process_{\afun}}{\psi_2}
=\vecnorm{\emprisk_{\batchsize}(\simscorep{\fun})-\emprisk_{\batchsize}(\simscorep{\afun})-\E[\emprisk_{\batchsize}(\simscorep{\fun})-\emprisk_{\batchsize}(\simscorep{\afun})]}{\psi_2}
\leq
c (\vecnorm{\term_3-\E[\term_3]}{\psi_2}+\vecnorm{\term_4-\E[\term_4]}{\psi_2})  \label{eq:bound_term_12_generalization}
\end{align}
for some absolute constant $c>0$ (we allow the value of $c$ to vary from place to place),
where
\begin{align*}
    \term_3&\defn 
      \frac{1}{\samsize}\sum_{i = 1}^\numbatch \sum_{j=1}^\batchsize \Big[{\simscorep{\fun}(\Samviewone_{(i-1)\batchsize+j}, \Samviewtwo_{(i-1)\batchsize+j}) } 
      - 
      {\simscorep{\afun}(\Samviewone_{(i-1)\batchsize+j}, \Samviewtwo_{(i-1)\batchsize+j}) } 
    \Big],\\
    \term_4&\defn 
      \frac{1}{2\samsize}\sum_{i = 1}^\numbatch \sum_{j=1}^\batchsize\Bigg[ \Big[  \log \Big(\frac{1}{\batchsize}{\sum_{l \in [\batchsize]} \exp(\simscorep{\fun}(\Samviewone_{(i-1)\batchsize+j}, \Samviewtwo_{(i-1)\batchsize+l}))}\Big) 
      +
      \log{\Big(\frac{1}{\batchsize}\sum_{l \in [\batchsize]} \exp(\simscorep{\fun}(\Samviewone_{(i-1)\batchsize+l}, \Samviewtwo_{(i-1)\batchsize+j}))}\Big) \Big]\notag \\
    &\qquad\qquad 
    - \Big[  \log \Big(\frac{1}{\batchsize}{\sum_{l \in [\batchsize]} \exp(\simscorep{\afun}(\Samviewone_{(i-1)\batchsize+j}, \Samviewtwo_{(i-1)\batchsize+l}))}\Big) 
      +
      \log{\Big(\frac{1}{\batchsize}\sum_{l \in [\batchsize]} \exp(\simscorep{\afun}(\Samviewone_{(i-1)\batchsize+l}, \Samviewtwo_{(i-1)\batchsize+j}))}\Big) \Big]
      \Bigg].
\end{align*}
It remains to show both $\term_3-\E[\term_3]$ and $\term_4-\E[\term_4]$ are $\metric_{\process}(\fun,\afun)$ sub-Gaussian.

Notice that   for any $\Samviewone,\Samviewtwo\in\Samspace,\fun,\afun\in\funspace$, by Assumption~\ref{ass:link_fun}, we have
\begin{align}
    |\simscorep{\fun}(\Samviewone,\Samviewtwo)-
    \simscorep{\afun}(\Samviewone,\Samviewtwo)|
    &\leq\linkfunlip\cdot
    |\inprod{\fun(\Samviewone)}{\fun(\Samviewtwo)}-\inprod{\afun(\Samviewone)}{\afun(\Samviewtwo)}|\notag\\
    &\leq 
  \linkfunlip(  \vecnorm{\fun(\Samviewtwo)}{2}\cdot\vecnorm{\fun-\afun}{2,\infty}
    +
     \vecnorm{\afun(\Samviewone)}{2}\cdot\vecnorm{\fun-\afun}{2,\infty})\notag\\
     &
    \overset{(i)}{ \leq}
    2\boundfun\linkfunlip\vecnorm{\fun-\afun}{2,\infty},\label{eq:bound_term_1.5_generalization}
\end{align}
where step~(i) uses $\simscorep{\fun}(\Samviewmain,\Samviewmain)=\vecnorm{\fun(\Samviewmain)}{2}^2\leq \boundfun^2$ for $\Samviewmain\in\Samspace$.
Since $\Samview_i=(\Samviewone_i,\Samviewtwo_i),i\in[\samsize]$ are i.i.d.,
 it follows immediately that $\term_3-\E[\term_3]$ is $2\boundfun\linkfunlip\vecnorm{\fun-\afun}{2,\infty}/\sqrt{\samsize}$-sub-Gaussian, i.e.,
\begin{align}
    \vecnorm{\term_3-\E[\term_3]}{\psi_2}&\leq  \frac{c\boundfun\linkfunlip}{\sqrt\samsize}\vecnorm{\fun-\afun}{2,\infty}.
    \label{eq:bound_term_1_generalization}
\end{align}
Recall the definition of $\{\Samview_{s},\Samviewtil_{s}\}_{s=1}^\samsize$ in the proof of Eq.~\eqref{eq:concen_pf_claim_2}. 
To bound $\vecnorm{\term_4}{\psi_2}$, we start with introducing the shorthands for any fixed indices $i\in[\numbatch],j\in[\batchsize]$
\begin{align*}
    \short_{k}(\Samview)&\defn
   \frac{1}{\batchsize} {\sum_{l \in [\batchsize]} \exp(\simscorep{\fun}(\Samviewone_{(i-1)\batchsize+k}, \Samviewtwo_{(i-1)\batchsize+l}))},~~ \shortsim_{k}(\Samview)\defn
   \frac{1}{\batchsize} {\sum_{l \in [\batchsize]} \exp(\simscorep{\fun}(\Samviewone_{(i-1)\batchsize+l}, \Samviewtwo_{(i-1)\batchsize+k}))},
   \\
      \shorttil_{k}(\Samview)&\defn
   \frac{1}{\batchsize} {\sum_{l \in [\batchsize]} \exp(\simscorep{\afun}(\Samviewone_{(i-1)\batchsize+k}, \Samviewtwo_{(i-1)\batchsize+l}))},~~ 
   \shortsimtil_{k}(\Samview)\defn
   \frac{1}{\batchsize} {\sum_{l \in [\batchsize]} \exp(\simscorep{\afun}(\Samviewone_{(i-1)\batchsize+l}, \Samviewtwo_{(i-1)\batchsize+k}))}
\end{align*}for all $k\in[\batchsize]$. 
Similar to the proof of Eq.~\eqref{eq:concen_pf_claim_2}, 
for any given index $(i-1)\batchsize+j$, we have
\begin{align*}
  &\quad |\term_4(\Samview_1,\ldots,\Samview_{(i-1)\batchsize+j},\ldots,\Samview_\samsize)
-\term_4(\Samview_1,\ldots,\Samviewtil_{(i-1)\batchsize+j},\ldots,\Samview_\samsize)|
\\
&
=\Big|\frac{1}{2\samsize}\sum_{k=1}^\batchsize
\Big[\log\Big(\frac{\short_{k}(\Samview)}{\shorttil_{k}(\Samview)}\Big)
+
\log\Big(\frac{\shortsim_{k}(\Samview)}{\shortsimtil_{k}(\Samview)}\Big)
-
\log\Big(\frac{\short_{k}(\Samviewtil)}{\shorttil_{k}(\Samviewtil)}\Big)
-
\log\Big(\frac{\shortsim_{k}(\Samviewtil)}{\shortsimtil_{k}(\Samviewtil)}\Big)
\Big]\Big|\\
&\leq
\frac{\boundscoreconst^2}{2\samsize}\sum_{k=1}^\batchsize
\Bigg[
\Big|
\frac{\short_{k}(\Samview)}{\shorttil_{k}(\Samview)}
-
\frac{\short_{k}(\Samviewtil)}{\shorttil_{k}(\Samviewtil)}\Big|
+
\Big|\frac{\shortsim_{k}(\Samview)}{\shortsimtil_{k}(\Samview)}-
\frac{\shortsim_{k}(\Samviewtil)}{\shortsimtil_{k}(\Samviewtil)}
\Big|\Bigg],
\end{align*}
where the last line follows from Assumption~\ref{ass:bounded_score} and a Taylor expansion of $\log(x)$.
Moreover,
\begin{align*}
   \sum_{k=1}^\batchsize \Big|
\frac{\short_{k}(\Samview)}{\shorttil_{k}(\Samview)}
-
\frac{\short_{k}(\Samviewtil)}{\shorttil_{k}(\Samviewtil)}\Big|
&=
   \sum_{k=1}^\batchsize \Big|
\frac{\short_{k}(\Samview)-\shorttil_{k}(\Samview)}{\shorttil_{k}(\Samview)}
-
\frac{\short_{k}(\Samviewtil)-\shorttil_{k}(\Samviewtil)}{\shorttil_{k}(\Samviewtil)}\Big|\\
&
\overset{(ii)}{\leq}
\boundscoreconst^2  \sum_{k=1}^\batchsize |(\short_{k}(\Samview)-\shorttil_{k}(\Samview))\shorttil_{k}(\Samviewtil)
-
(\short_{k}(\Samviewtil)-\shorttil_{k}(\Samviewtil))\shorttil_{k}(\Samview)
|  \\
&\leq 
\boundscoreconst^2  \sum_{k=1}^\batchsize\big[|((\short_{k}-\shorttil_{k})(\Samview)-(\short_{k}-\shorttil_{k})(\Samviewtil))\shorttil_{k}(\Samviewtil)|
+
|
(\short_{k}-\shorttil_{k} )(\Samviewtil)(\shorttil_{k}(\Samviewtil)-\shorttil_{k}(\Samview))
| \big]\\
&\overset{(iii)}{\leq}
\boundscoreconst^3  \sum_{k=1}^\batchsize\big[|(\short_{k}-\shorttil_{k})(\Samview)-(\short_{k}-\shorttil_{k})(\Samviewtil)|
+
2\boundfun\linkfunlip\vecnorm{\fun-\afun}{2,\infty}|
\shorttil_{k}(\Samviewtil)-\shorttil_{k}(\Samview)
| \big],
\end{align*}
where step~(ii) uses Assumption~\ref{ass:bounded_score},  step~(iii) uses Assumption~\ref{ass:bounded_score}, Eq.~\eqref{eq:bound_term_1.5_generalization} and a Taylor expansion of $\exp(x)$. 
Similar to the proof of Eq.~\eqref{eq:concen_pf_claim_2}, by counting the number of terms in the summations that are different and using Assumption~\ref{ass:bounded_score}, we find
\begin{align*}
    \sum_{k=1}^\batchsize |
\shorttil_{k}(\Samviewtil)-\shorttil_{k}(\Samview)
| &\leq 
2\boundscoreconst,~~\text{and}\\
 \sum_{k=1}^\batchsize|(\short_{k}-\shorttil_{k})(\Samview)-(\short_{k}-\shorttil_{k})(\Samviewtil)|
 &\leq 4
 \boundscoreconst\boundfun\linkfunlip\vecnorm{\fun-\afun}{2,\infty}.
\end{align*} 
Similar results hold for $\shortsim$ by symmetry. Putting pieces together, we obtain
\begin{align*}
 |\term_4(\Samview_1,\ldots,\Samview_{(i-1)\batchsize+j},\ldots,\Samview_\samsize)
-\term_4(\Samview_1,\ldots,\Samviewtil_{(i-1)\batchsize+j},\ldots,\Samview_\samsize)|
\leq   \frac{4\boundscoreconst^6\boundfun\linkfunlip}{{\samsize}}.\end{align*} Therefore, it follows from Corollary~2.21 in~\cite{wainwright_2019} for functions with bounded differences that
\begin{align}
    \vecnorm{\term_4-\E[\term_4]}{\psi_2}\leq \frac{c\boundscoreconst^6\boundfun\linkfunlip}{\sqrt{\samsize}}.
    \label{eq:bound_term_2_generalization}
\end{align}
Substituting Eq.~\eqref{eq:bound_term_1_generalization}~and~\eqref{eq:bound_term_2_generalization} into Eq.~\eqref{eq:bound_term_12_generalization}, we obtain that  $\{\process_{\fun},\fun\in\funspace\}$ is a zero-mean sub-Gaussian process with respect to the metric $\metric_{\process}(\fun,\afun)\defn \bound\vecnorm{\fun-\afun}{2,\infty}/\sqrt{\samsize}.
$ This concludes the proof of Eq.~\eqref{eq:concen_pf_claim_1}.
\\

\subsection{Proof of Theorem~\ref{thm:general_downstream_bound}}\label{sec:pf_thm:general_downstream_bound}
Write $\Samviewmain=\tranfun(\Sam)$ with $\tranfun\sim\Disttran\indep\Sam\sim\Distsam$. 
Define  
$\hypomin\defn\argmin_\hypo\E_{\Sam\sim\Distsam,\tranfun\sim\Disttran}[(\hypo(\tranfun(\Sam))-\truehypo(\Sam))^2]$ and 
$\dstfun(\bu)\defn\E[\hypomin(\Samviewone)|\fun(\Samviewone)=\bu]$. 
Note that $|\hypomin(\Samviewone)|=|\E[\truehypo(\Sam)|\Samviewone]|$ is bounded by $\boundthypo$ almost surely by the assumption in Theorem~\ref{thm:general_downstream_bound}.
We first show that $\drisktran{\dstfun\circ\fun}$ satisfies bound~\eqref{eq:thm_general_downstream_bound}
with $\dsterr$ replaced by $\dsterrtil=\inf_\hypo\E_{\Sam\sim\Distsam,\tranfun\sim\Disttran}[(\hypo(\tranfun(\Sam))-\truehypo(\Sam))^2]$. The original  bound~\eqref{eq:thm_general_downstream_bound} follows immediately since $\dsterrtil\leq\dsterr.$

\begin{subequations}

Since $(a+b)^2\leq2a^2+2b^2$, we have
\begin{align}
&\quad\drisktran{\dstfun\circ\fun}
=
\E_{\Sam,\Samviewone,\Samviewtwo}[(\dstfun(\fun(\Samviewone))-\truehypo(\Sam))^2]
{\leq}
2\E_{\Sam,\Samviewone,\Samviewtwo}[(\dstfun(\fun(\Samviewone))-\hypomin(\Samviewtwo))^2]+
2\dsterrtil.
\label{eq:general_downstream_pf0}
\end{align}
Introduce a random variable which follows the distribution of $\Samviewone$  conditioned on $\fun(\Samviewone)$ and is independent of $(\Samviewone,\Samviewtwo)$ when conditioned on $\fun(\Samviewone)$, i.e., $[\Samviewonetil\sim \Distview(\Samviewone|\fun(\Samviewone))\indep(\Samviewone,\Samviewtwo)]|\fun(\Samviewone)$. Consider the joint distribution of the tuple $(\Samviewonetil,\Samviewone,\Samviewtwo)$. By Bayes' formula, we have $\Samviewonetil\overset{d}{=}\Samviewone\sim\Distview$ and $\Samviewtwo|\Samviewonetil\sim\P(\Samviewtwo|\fun(\Samviewone)
=
\fun(\Samviewonetil))$ and therefore
\begin{align}
 &\quad  \E[(\dstfun(\fun(\Samviewone))-\hypomin(\Samviewtwo))^2]
\overset{(i)}{\leq}
\E[(\hypomin(\Samviewonetil)-\hypomin(\Samviewtwo))^2] \notag\\
&=
\E_{\Samviewonetil\sim\Distview,\Samviewtwo\sim\P(\Samviewtwo|\fun(\Samviewone)=\fun(\Samviewonetil))}[(\hypomin(\Samviewonetil)-\hypomin(\Samviewtwo))^2], \label{eq:general_downstream_pf1}
\end{align}
where step~(i) follows from
\begin{align*}
   \E[(\dstfun(\fun(\Samviewone))-\hypomin(\Samviewtwo))^2|\fun(\Samviewone)]
{\leq}
\E[(\hypomin(\Samviewonetil)-\hypomin(\Samviewtwo))^2|\fun(\Samviewone)], 
\end{align*}
which uses Jensen's inequality, the independence of $\Samviewonetil$ and $\Samviewtwo$ conditioned on $\fun(\Samviewone)$, and
the fact that $\E[\hypomin(\Samviewonetil)|\fun(\Samviewone)]=\dstfun(\fun(\Samviewone))$.
Moreover, 
\begin{align}
&\quad\E_{\Samviewonetil\sim\Distview,\Samviewtwo\sim\P(\Samviewtwo|\fun(\Samviewone)=\fun(\Samviewonetil))}[(\hypomin(\Samviewonetil)-\hypomin(\Samviewtwo))^2]\notag
\\
&\overset{(ii)}{\leq} 
\E_{\Samviewonetil\sim\Distview,\Samviewtwo\sim\P(\Samviewtwo|\Samviewone=\Samviewonetil)}[(\hypomin(\Samviewonetil)-\hypomin(\Samviewtwo))^2]
\notag \\
&\qquad\quad+
\sqrt{2}\boundthypo^2\cdot\E_{\Samviewonetil\sim\Distview}\Bigg[\sqrt{\KL\Big( \P_{\Samviewtwo | \Samviewone}(\cdot | \Samviewonetil) \Big|\Big|  \P_{\Samviewtwo | \Samviewone}(\cdot | \fun(\Samviewonetil)) \Big)} \Bigg]\notag\\
&\overset{(iii)}{\leq} 
\E_{\Samviewonetil\sim\Distview,\Samviewtwo\sim\P(\Samviewtwo|\Samviewone=\Samviewonetil)}[(\hypomin(\Samviewonetil)-\hypomin(\Samviewtwo))^2]
+
\sqrt{2}\boundthypo^2\cdot\sqrt{\suffmeasure_{\cb,\kl}(\fun)}\notag
\\
&~=
\E_{\Samviewone,\Samviewtwo}[(\hypomin(\Samviewone)-\hypomin(\Samviewtwo))^2]
+
\sqrt{2}\boundthypo^2\cdot\sqrt{\suffmeasure_{\cb,\kl}(\fun)},
\label{eq:general_downstream_pf2}
\end{align}
where step~(ii) follows from the variational form of total variation distance and Pinsker's inequality, while step~(iii) uses  the (CBS) definition of $\suffmeasure_\kl(\fun)$   in Definition~\ref{def:general_app_suff} and Jensen's inequality.
Lastly, we have from a triangle inequality that
\begin{align}
  &\quad~\E_{\Samviewone,\Samviewtwo}[(\hypomin(\Samviewone)-\hypomin(\Samviewtwo))^2]\notag\\
&\leq 2(   \E_{\Sam,\Samviewone}[(\hypomin(\Samviewone)-\truehypo(\Sam))^2]
+
\E_{\Sam,\Samviewtwo}[(\hypomin(\Samviewtwo)-\truehypo(\Sam))^2]
)
=4\dsterrtil.
\label{eq:general_downstream_pf3}
\end{align}
Combining Eq.~\eqref{eq:general_downstream_pf0}---\eqref{eq:general_downstream_pf3} yields Eq.~\eqref{eq:thm_general_downstream_bound} in Theorem~\ref{thm:general_downstream_bound}. 
Eq.~\eqref{eq:thm_general_downstream_bound1} in Theorem~\ref{thm:general_downstream_bound} follows immediately by noting 
\begin{align*}
   &\qquad
   \drisk{\dstfun\circ\fun}  =
    \E[(\dstfun(\fun(\Sam))-\truehypo(\Sam))^2]=
   \E_{\Samviewone}[(\dstfun(\fun(\Samviewone))-\truehypo(\Samviewone))^2]\\
   &\leq 2 \E_{\Samviewone,\Samviewtwo}[(\dstfun(\fun(\Samviewone))-\truehypo(\Sam))^2]
   +
   2\E_{\Samviewone,\Samviewtwo}[(\truehypo(\Samviewone)-\truehypo(\Sam))^2]\\
   &=2 \E_{\Samviewone,\Samviewtwo}[(\dstfun(\fun(\Samviewone))-\truehypo(\Sam))^2]
   +
   2\dsterr
\end{align*}
and using Eq.~\eqref{eq:thm_general_downstream_bound}.
\paragraph{Comments on Theorem~\ref{thm:general_downstream_bound}.}
    Following the same proof strategy, it can be verified that Eq.~\eqref{eq:thm_general_downstream_bound}~and~\eqref{eq:thm_general_downstream_bound1} also hold when choosing $\dstfun(\bu) \defn\E[\truehypo(\Samviewone)|\fun(\Samviewone)=\bu]$. The main difference in the proof is to replace $\hypomin$ by $\truehypo$ in Eq.~\eqref{eq:general_downstream_pf0}---~\eqref{eq:general_downstream_pf3}.
    
    Moreover, although we consider the expected squared loss (i.e., $\ell(x,y)=(x-y)^2$) for simplicity, it can be seen from the proof that
a similar version of Eq.~\eqref{eq:thm_general_downstream_bound}~and~\eqref{eq:thm_general_downstream_bound1}  
  hold for any semimetric $\ell(x,y)$  that is convex in $x$ for all $y$.
 This includes the absolute loss, Huber loss, losses induced by norms, etc.


\end{subequations}

\subsection{Proof of Theorem~\ref{thm:general_downstream_bound_cls}}\label{sec:pf_thm:general_downstream_bound_cls}
For any densities $\P,\Q$, define
$\alpha$-Rényi divergence 
\begin{align}
\Renyi{\alpha}(\P||\Q)
\defn 
\frac{1}{\alpha-1}\log\Big(\E_{x\sim\P}\Big[\Big(\frac{\P(x)}{\Q(x)}\Big)^{\alpha-1}
\Big]\Big)\notag
\end{align} for any $\alpha>0$. 
Note that the $1$-Rényi divergence corresponds to the KL divergence. For any densities $\P,\Q,\T$, we have the following triangle-like inequality which we will repeatedly use in the proof.
\begin{lemma}[Triangle-like inequality for Rényi divergence (Lemma 26 in~\cite{bun2016concentrated})]\label{lm:triangle_like_ineq}
Let $\P$, $\Q$, and $\T$ be probability densities w.r.t. the same measure. Then
\[
\Renyi{\alpha} (\P || \Q) \le {\frac{k\alpha}{k\alpha - 1}} \Renyi{\frac{k\alpha - 1}{k - 1}} (\P || \T) + \mathrm{D}_{k\alpha} (\T || \Q)
\]
for all $k, \alpha \in (1, \infty)$.
\end{lemma}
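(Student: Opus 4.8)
The plan is to deduce the inequality from a single application of H\"older's inequality to the integral representation of R\'enyi divergence. Write all densities with respect to a common dominating measure $\mu$ and recall that $\Renyi{\alpha}(\P\|\Q) = \frac{1}{\alpha-1}\log\int \P^{\alpha}\Q^{1-\alpha}\,d\mu$ for $\alpha>1$. I would first dispose of the degenerate cases: if either $\Renyi{\frac{k\alpha-1}{k-1}}(\P\|\T)$ or $\mathrm{D}_{k\alpha}(\T\|\Q)$ equals $+\infty$ the bound is vacuous, and in the remaining case the relevant integrals are finite and positive, so all the manipulations below are legitimate and the conventions $0\cdot\infty$, $0^{1-\alpha}$ never arise.

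Abbreviate $\beta \defn \frac{k\alpha-1}{k-1}$ and $\gamma\defn k\alpha$, the orders appearing on the right-hand side. The algebraic heart of the argument is the pointwise identity
\[
\P^{\alpha}\Q^{1-\alpha} \;=\; \bigl(\P^{\beta}\T^{1-\beta}\bigr)^{1/p}\,\bigl(\T^{\gamma}\Q^{1-\gamma}\bigr)^{1/q}, \qquad p \defn \frac{\beta}{\alpha},\quad q\defn\frac{\gamma-1}{\alpha-1},
\]
which one verifies by matching the exponents of $\P$, $\T$, and $\Q$ separately; the only not-entirely-mechanical checks are that $\frac1p+\frac1q = \frac{\alpha(k-1)}{k\alpha-1}+\frac{\alpha-1}{k\alpha-1}=1$, so that $(p,q)$ is a genuine conjugate pair, and that $p>1$, $q>1$ (which follow from $\beta-\alpha=\frac{\alpha-1}{k-1}>0$ and $\gamma-1>\alpha-1>0$, using $k,\alpha>1$). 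Applying H\"older's inequality with this pair gives
\[
\int \P^{\alpha}\Q^{1-\alpha}\,d\mu \;\le\; \Bigl(\int \P^{\beta}\T^{1-\beta}\,d\mu\Bigr)^{1/p}\Bigl(\int \T^{\gamma}\Q^{1-\gamma}\,d\mu\Bigr)^{1/q}.
\]
Taking logarithms, dividing by $\alpha-1>0$, and rewriting each integral as a R\'enyi divergence yields $\Renyi{\alpha}(\P\|\Q)\le \frac{\beta-1}{p(\alpha-1)}\,\Renyi{\beta}(\P\|\T)+\frac{\gamma-1}{q(\alpha-1)}\,\mathrm{D}_{\gamma}(\T\|\Q)$. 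Finally a one-line computation gives $\frac{\beta-1}{p(\alpha-1)} = \frac{\alpha(\beta-1)}{\beta(\alpha-1)} = \frac{k\alpha}{k\alpha-1}$ (using $\beta-1 = \frac{k(\alpha-1)}{k-1}$) and $\frac{\gamma-1}{q(\alpha-1)}=1$; substituting $\beta$ and $\gamma$ back recovers the stated inequality.

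The main obstacle is purely bookkeeping: guessing the correct factorization of $\P^{\alpha}\Q^{1-\alpha}$ and, above all, checking that the resulting exponents form a \emph{conjugate} H\"older pair rather than a reverse-H\"older one --- if $1/p+1/q$ came out wrong, or $p,q$ had the wrong sign, the inequality would flip, and it is precisely here that the hypotheses $k,\alpha\in(1,\infty)$ enter. A secondary, routine point is the measure-theoretic reduction to the case where both right-hand terms are finite, which I would handle at the very start. (Since this is exactly Lemma~26 of~\cite{bun2016concentrated}, one could alternatively just invoke that reference; I outline the argument here only for completeness.)
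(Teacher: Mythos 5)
Your proof is correct: the exponent bookkeeping checks out ($1/p+1/q=\frac{\alpha(k-1)}{k\alpha-1}+\frac{\alpha-1}{k\alpha-1}=1$ with $p,q>1$, the powers of $\P,\T,\Q$ match, and the coefficients simplify to $\frac{k\alpha}{k\alpha-1}$ and $1$ as claimed), and the degenerate infinite cases are handled appropriately. The paper itself gives no proof and simply cites Lemma~26 of~\cite{bun2016concentrated}, whose argument is this same H\"older-inequality factorization, so your write-up is essentially the standard proof made self-contained.
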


Write $\Samviewmain=\tranfun(\Sam)$ with $\tranfun\sim\Disttran\indep\Sam\sim\Distsam$ and 
define $\dstfun(\fun(\Samviewmain))\defn \P(\resp|\fun(\Samviewmain))\in\Delta([\clsnumgen])$ as the conditional distribution of $\resp$ given $\fun(\Samviewmain)$, where $\Samviewmain=\tranfun(\Sam)$ for some random transformation $\tranfun\sim\Disttran$.
It can be verified that $\dstfun=\argmin_{\Q:\R^{\dimfun}\mapsto\Delta([\clsnumgen])}\KL(\P(\resp|\Sam)||\Q(\resp|\fun(\Samviewmain)))$.
\begin{subequations}
Therefore,  using Lemma~\ref{lm:triangle_like_ineq} with $k=4/3,\alpha=1$ (by taking the limit $\alpha\to 1$), we obtain
\begin{align}
\drisktrancls{\dstfun\circ\fun}
&=
\E_{\Sam,\resp,\Samviewone}[\KL(\P(\resp|\Sam)||\P(\resp|\fun(\Samviewone)))]\notag
\\
&\leq
4\E_{\Sam,\resp,\Samviewtwo}[\KL(\P(\resp|\Sam)||\P(\resp|\Samviewtwo))]
+
\E_{\Sam,\resp,\Samviewone,\Samviewtwo}[\Renyi{4/3}(\P(\resp|\Samviewtwo)||\P(\resp|\fun(\Samviewone)))]\notag\\
&\leq
4\dsterrcls
+
\E_{\Sam,\resp,\Samviewone,\Samviewtwo}[\Renyi{4/3}(\P(\resp|\Samviewtwo)||\P(\resp|\fun(\Samviewone)))]
,
\label{eq:general_downstream_pf0_cls}
\end{align}
where the last inequality uses the monotonicity of $\alpha$-Rényi divergence w.r.t. $\alpha$. Similar to the proof of Theorem~\ref{thm:general_downstream_bound}, 
introduce a random variable which follows the distribution of $\Samviewone$  conditioned on $\fun(\Samviewone)$ and is independent of $(\Samviewone,\Samviewtwo)$ when conditioned on $\fun(\Samviewone)$, i.e., $[\Samviewonetil\sim \Distview(\Samviewone|\fun(\Samviewone))\indep(\Samviewone,\Samviewtwo)]|\fun(\Samviewone)$. Consider the joint distribution of the tuple $(\Samviewonetil,\Samviewone,\Samviewtwo)$. By Bayes' formula, we have $\Samviewonetil\overset{d}{=}\Samviewone\sim\Distview$ and $\Samviewtwo|\Samviewonetil\sim\P(\Samviewtwo|\fun(\Samviewone)
=
\fun(\Samviewonetil))$ and thus
\begin{align}
 &\quad \E_{\Sam,\resp,\Samviewone,\Samviewtwo}[\Renyi{4/3}(\P(\resp|\Samviewtwo)||\P(\resp|\fun(\Samviewone)))]
\overset{(i)}{\leq}
\E_{\Sam,\resp,\Samviewone,\Samviewtwo}[\Renyi{4/3}(\P(\resp|\Samviewtwo)||\P(\resp|\Samviewonetil))]
\notag\\
&=
\E_{\Samviewonetil\sim\Distview,\Samviewtwo\sim\P(\Samviewtwo|\fun(\Samviewone)=\fun(\Samviewonetil))}[\Renyi{4/3}(\P(\resp|\Samviewtwo)||\P(\resp|\Samviewonetil))], \label{eq:general_downstream_pf1_cls}
\end{align}
where step~(i) uses Jensen's inequality, the convexity of Rényi divergence w.r.t. its second argument and the fact that $\E[\P(\resp|\Samviewonetil)|\fun(\Samviewonetil)]=\P(\resp|\fun(\Samviewone)=\fun(\Samviewonetil))$.
Moreover, 
\begin{align}
&\quad\E_{\Samviewonetil\sim\Distview,\Samviewtwo\sim\P(\Samviewtwo|\fun(\Samviewone)=\fun(\Samviewonetil))}[\Renyi{4/3}(\P(\resp|\Samviewtwo)||\P(\resp|\Samviewonetil))]\notag
\\
&\overset{(ii)}{\leq} 
\E_{\Samviewonetil\sim\Distview,\Samviewtwo\sim\P(\Samviewtwo|\Samviewone=\Samviewonetil)}[\Renyi{4/3}(\P(\resp|\Samviewtwo)||\P(\resp|\Samviewonetil))]
\notag \\
&\qquad\quad+
\sqrt{2}\boundclsgen\cdot\E_{\Samviewonetil\sim\Distview}\Bigg[\sqrt{\KL\Big( \P_{\Samviewtwo | \Samviewone}(\cdot | \Samviewonetil) \Big|\Big|  \P_{\Samviewtwo | \Samviewone}(\cdot | \fun(\Samviewonetil)) \Big)} \Bigg]\notag\\
&\overset{(iii)}{\leq} 
\E_{\Samviewonetil\sim\Distview,\Samviewtwo\sim\P(\Samviewtwo|\Samviewone=\Samviewonetil)}[\Renyi{4/3}(\P(\resp|\Samviewtwo)||\P(\resp|\Samviewonetil))]
+
\sqrt{2}\boundclsgen\cdot\sqrt{\suffmeasure_{\cb,\kl}(\fun)}\notag
\\
&~=
\E_{\Samviewone,\Samviewtwo}[\Renyi{4/3}(\P(\resp|\Samviewtwo)||\P(\resp|\Samviewone))]
+
\sqrt{2}\boundclsgen\cdot\sqrt{\suffmeasure_{\cb,\kl}(\fun)},
\label{eq:general_downstream_pf2_cls}
\end{align}
where step~(ii) follows from the variational form of total variation distance, Pinsker's inequality and the fact that
\begin{align*}
    \Renyi{4/3}(\P(\resp|\Samviewtwo)||\P(\resp|\Samviewonetil))\leq \Renyi{2}(\P(\resp|\Samviewtwo)||\P(\resp|\Samviewonetil)) = \log\E_{\resp\sim\P(\cdot|\Samviewtwo)}\Big[\frac{\P(\resp|\Samviewtwo)}{\P(\resp|\Samviewonetil)}\Big]\leq \boundclsgen,
\end{align*}
and step~(iii) uses  the CBS definition of $\suffmeasure_{\kl}(\fun)$  and Jensen's inequality.
Finally, applying Lemma~\ref{lm:triangle_like_ineq} another time using $
\alpha=4/3$ and $k=1.5$ yields 
\begin{align}
    &\quad~\E_{\Samviewone,\Samviewtwo}[\Renyi{4/3}(\P(\resp|\Samviewtwo)||\P(\resp|\Samviewone))]
\notag\\
&\leq   \E_{\Sam,\Samviewone}[\Renyi{2}(\P(\resp|\Samviewtwo)||\P(\resp|\Sam))]
+
\E_{\Sam,\Samviewtwo}[\Renyi{2}(\P(\resp|\Sam)||\P(\resp|\Samviewone))]
)\leq \dsterrcls.
\label{eq:general_downstream_pf3_cls}
\end{align}
Combining Eq.~\eqref{eq:general_downstream_pf0_cls}---\eqref{eq:general_downstream_pf3_cls} yields  Theorem~\ref{thm:general_downstream_bound_cls}. 



\begin{align*}
\end{align*}

\end{subequations}

\subsection{Proof of Theorem~\ref{thm:main_chisq_generalization}}\label{sec:pf_thm:main_chisq_generalization}
Let $\fdivfun(x)=(x-1)^2/2$.
The proof largely follows the same arguments as the proof of Theorem~\ref{thm:main_simclr_generalization}. Thus we only provide a sketch of the proof here. 
First, it can be readily verified that the set of minimizers of $\vform_{\fdivfun}(\simscore)$ is 
\begin{align*}
\mathcal{M}_{\simscore}\defn \Big\{\simscore:\simscore=\truesimscore+\const \text{  ~for some } \const\in\R,  ~~~\truesimscore(\Samviewone,\Samviewtwo)\defn  \frac{\P( \Samviewone, \Samviewtwo) }{ \P(\Samviewone) \cdot \P(\Samviewtwo)} \Big\}.
\end{align*}
Moreover, basic algebra shows that
$\emprisk_{\chisq,\batchsize}(\simscorep{\fun})$ is an unbiased estimate of $\vform_\fdivfun(\simscorep{\fun})$. Thus, by the VFS in Definition~\ref{def:general_app_suff}, we have the decomposition
\begin{align*}
\suffmeasurechi(\estfun)
   &\leq
\vform_\fdivfun(\simscorep{\fun})-\vform_\fdivfun({\truesimscorecsq})
   \leq
   \underbrace{\Big[ \vform_\fdivfun(\simscorep{\estfun}) -\inf_{\fun\in\funspace}\vform_\fdivfun(\simscorep{\fun})\Big]}_{\generr}
+
 \underbrace{\Big[\inf_{\fun\in\funspace}\vform_\fdivfun(\simscorep{\fun})- \vform_\fdivfun(\truesimscorecsq) \Big]}_{\apperr}.
\end{align*}
Therefore, it remains to show
    \begin{itemize}
        \item[(1).] With probability at least $1-\delta$, the excess risk
        \begin{align}
          \vform_\fdivfun(\simscorep{\estfun}) -\inf_{\fun\in\funspace}\vform_\fdivfun\leq
    \frac{c\boundscoreconstcsq^2}{\sqrt\samsize}\Bigg[\sqrt{{\log(1/\delta)}}
+
\linkfunlip^2
\int_{0}^{2{(\boundscoreconstcsq+\linkfunlip)}}\sqrt{\log\Covernum(u,\vecnorm{\cdot}{2,\infty},\funspace)}]du
    \Bigg]
   \label{eq:pf_thm_chisq_claim1}
        \end{align}
        for some absolute constant $c>0$.
 \end{itemize}
 \myunder{Proof of Eq.~\eqref{eq:pf_thm_chisq_claim1}.}
Recall the definition of $\emprisk_{\chisq,\batchsize}$ in Eq.~\eqref{eq:chisq-empirical-risk-minimization} and adopt the shorthand $\emprisk_{\batchsize}$ for $\emprisk_{\chisq,\batchsize}$. Let $\boundfun\defn\sqrt{\linkfunlip(\boundscoreconstcsq+\linkfunlip)},~\bound \defn c(\boundscoreconstcsq+1)\boundfun\linkfunlip $ for some absolute constant $c>0$. It can be verified using Assumption~\ref{ass:link_fun} that $\funspace$ must satisfy $\vecnorm{\fun}{2,\infty}\leq \boundfun$ for all $\fun\in\funspace$  to ensure Assumption~\ref{ass:bounded_score} holds.
Define the zero-mean random process
   $\process_\fun \defn \emprisk_{\batchsize}(\simscorep{\fun})-\E[\emprisk_{\batchsize}(\simscorep{\fun})],~~\fun\in\funspace.
$
We will prove that for some absolute constant $c>0$
\begin{subequations}
   \begin{align}
   \P\Big(
\big|\sup_{\fun\in\funspace}|\process_{\fun}|
-
\E[\sup_{\fun\in\funspace}|\process_{\fun}|]\big|\geq t
\Big)
&\leq 
2\exp\Big(-\frac{c\samsize t^2}{\boundscoreconstcsq^4}\Big),~~\text{for all } t\geq0.
   \label{eq:concen_pf_claim_2_csq}
   \\
   \label{eq:concen_pf_claim_1_csq}
   \E[\sup_{\fun\in\funspace}|\process_{\fun}|]
   \leq  \E[|\process_{\fun_0}|]+ \E[\sup_{\fun,\afun\in\funspace}|\process_{\fun}-\process_{\afun}|]
   &\leq 
   c\frac{\boundscoreconstcsq^2}{\sqrt{\samsize}}
   +
   32 \frac{\bound}{\sqrt{\samsize}}\cdot \int_{0}^{2\boundfun}\sqrt{\log\Covernum(u,\vecnorm{\cdot}{2,\infty},\funspace)}
   du.
   \end{align}
\end{subequations}
Combining the two bounds and noting 
\begin{align}
      \orisk_{  \batchsize}(\simscorep{\estfun}) -\inf_{\fun\in\funspace}\orisk_{  \batchsize}(\simscorep{\fun})\leq 2\sup_{\fun\in\funspace}|\emprisk_\batchsize(\simscorep{\fun})-\orisk_\batchsize(\simscorep{\fun})|=
      2\sup_{\fun\in\funspace}|\emprisk_\batchsize(\simscorep{\fun})-\E[\emprisk_\batchsize(\simscorep{\fun})]|=2\sup_{\fun\in\funspace}\process_\fun \notag,
\end{align}
yields claim~(1).
\\

\paragraph{Proof of Eq.~\eqref{eq:concen_pf_claim_2_csq}.}
Similar to the proof of Eq.~\eqref{eq:concen_pf_claim_2}, we establish the bound using concentration properties for functions with bounded differences. Following the notations in the proof of Theorem~\ref{thm:main_simclr_generalization}, we let  $\Samview_i=(\Samviewone_i,\Samviewtwo_i)$.  For any $i\in[\numbatch],j\in[\batchsize]$, suppose $\Samview_{(i-1)\batchsize+j}$ is replaced by  $\Samviewtil_{(i-1)\batchsize+j}=(\Samviewonetil_{(i-1)\batchsize+j},\Samviewtwotil_{(i-1)\batchsize+j})$ in the calculation of $\emprisk_{\batchsize}(\simscorep{\fun})$. It can be verified using Assumption~\ref{ass:bounded_score} that
\begin{align}
&\quad|\process_\fun(\Samview_1,\ldots,\Samview_{(i-1)\batchsize+j},\ldots,\Samview_\samsize)
-\process_\fun(\Samview_1,\ldots,\Samviewtil_{(i-1)\batchsize+j},\ldots,\Samview_\samsize)|\notag\\
&=
 | \emprisk_{\batchsize}(\simscorep{\fun})(\Samview_1,\ldots,\Samview_{(i-1)\batchsize+j},\ldots,\Samview_\samsize) 
 -
 \emprisk_{\batchsize}(\simscorep{\fun})(\Samview_1,\ldots,\Samviewtil_{(i-1)\batchsize+j},\ldots,\Samview_\samsize) 
 |
 \leq
 \frac{c\boundscoreconstcsq^2}{\samsize}
 \label{eq:bound_term_22_generalization_csq}
\end{align}
for some absolute constant $c>0.$
As a result, Eq.~\eqref{eq:concen_pf_claim_2} follows immediately from Corollary~2.21 in~\cite{wainwright_2019} for functions with bounded differences.
\\

\paragraph{Proof of Eq.~\eqref{eq:concen_pf_claim_1_csq}.}
Similar to the proof of Eq.~\eqref{eq:concen_pf_claim_1},  $\E[|\process_{\fun_0}|]\leq c\boundscoreconstcsq^2/\sqrt{\samsize}$ by the properties of zero-mean sub-Gaussian variable $\process_{\fun_0}$, and therefore, to establish Eq.~\eqref{eq:concen_pf_claim_1_csq},
 it remains to show $\{\process_{\fun},\fun\in\funspace\}$ is a zero-mean sub-Gaussian process with respect to the metric $\metric_{\process}(\fun,\afun)\defn \bound\vecnorm{\fun-\afun}{2,\infty}/\sqrt{\samsize}.
$

Let $\vecnorm{\Sam}{\psi} \defn \inf \{ t > 0 : \mathbb{E} [ \psi(\Sam / t) ] \leq 1 \}
$ denote the Orlicz norm for random variables and let $\psi_2(u)=\exp(u^2)-1$.
Note that  for any $\Samviewone,\Samviewtwo,\Samviewtwo{'}\in\Samspace,\fun,\afun\in\funspace$, we have from Eq.~\eqref{eq:bound_term_1.5_generalization} that
\begin{subequations}
\begin{align}
&\quad    |\simscorep{\fun}(\Samviewone,\Samviewtwo)-
    \simscorep{\afun}(\Samviewone,\Samviewtwo)|
{ \leq}
    2\boundfun\linkfunlip\vecnorm{\fun-\afun}{2,\infty},\label{eq:bound_term_1.5_generalization_csq}
    \end{align}
and
\begin{align}
 &\quad |(\simscorep{\fun}(\Samviewone,\Samviewtwo)-\simscorep{\fun}(\Samviewone,\Samviewtwo{'}))^2
-
  (\simscorep{\afun}(\Samviewone,\Samviewtwo)-\simscorep{\afun}(\Samviewone,\Samviewtwo{'}))^2|
\notag\\
&\overset{(i)}{\leq} 
4\boundscoreconstcsq
(|\simscorep{\fun}(\Samviewone,\Samviewtwo)
-\simscorep{\afun}(\Samviewone,\Samviewtwo)|
+
|\simscorep{\fun}(\Samviewone,\Samviewtwo{'})
-\simscorep{\afun}(\Samviewone,\Samviewtwo{'})|
)\notag\\
&\leq 16\boundscoreconstcsq\boundfun\linkfunlip\vecnorm{\fun-\afun}{2,\infty},\label{eq:bound_term_1.5_generalization_csq2}
\end{align}
where step~(i) uses Assumption~\ref{ass:bounded_score}.
\end{subequations}
Then, following the proof of Eq.~\eqref{eq:concen_pf_claim_1},
it can be verified that
\begin{align}
   \vecnorm{ \process_{\fun}-\process_{\afun}}{\psi_2}
=\vecnorm{\emprisk_{\batchsize}(\simscorep{\fun})-\emprisk_{\batchsize}(\simscorep{\afun})-\E[\emprisk_{\batchsize}(\simscorep{\fun})-\emprisk_{\batchsize}(\simscorep{\afun})]}{\psi_2}
\leq
\frac{c(\boundscoreconstcsq+1)\boundfun\linkfunlip}{\sqrt{\samsize}}\vecnorm{\fun-\afun}{2,\infty}
\notag.
\end{align}

\section{Proofs in Section~\ref{sec:example}}


\subsection{Proof of Theorem~\ref{thm:linear_downstream}}\label{sec:pf_thm:linear_downstream}
Recall that $\bound=\boundsam\boundPar$.
For linear regression with misspecified model, 
by Theorem 11.3 in~\cite{gyorfi2006distribution} (see also e.g., Theorem 1.1 in~\cite{audibert2010linear}), we have
\begin{align*}
     \E[\risk_\lin(\dstfuntil_{\estlinparam})]-\lindststd^2\leq 
     8(\inf_{\linparam\in\R^\dimfun}\risk_\lin(\dstfun_{\linparam})-\lindststd^2)
     +c (\bound^2+\lindststd^2)\frac{\dimfun\log\dstsam}{\dstsam}
\end{align*} 
for some absolute constant $c>0.$

Thus it suffices to show
\begin{align}
\inf_{\linparam\in\R^\dimfun}\risk_{\lin}(\dstfun_{\linparam})-\lindststd^2\leq c(\bound^2c_2\sqrt{\suffmeasure_\fdivfun(\fun)}+\dsterr)\label{eq:linear_example_pf_claim1}
\end{align}
for some absolute constant $c>0.$ Equivalently, we only need to find some $\linparam\in\R^{\dimfun}$ such that
$\risk_\lin(\dstfun_{\linparam})$ satisfies the bound in Eq.~\eqref{eq:linear_example_pf_claim1}.
On the other hand, from the proof of Theorem~\ref{thm:general_downstream_bound}, we see that if we choose $\truehypo(\Sam)=\inprod{\Sam}{\TruePar}$ and $\hypo(\bu)\defn\E[\truehypo(\Samviewmain)|\fun(\Samviewmain)=\bu]=\inprod{\TruePar}{\E[\Samviewmain|\fun(\Samviewmain)=\bu]}$, then the excess risk 
\begin{align*}
    \risk_{\lin}(\hypo)-\lindststd^2 
    \leq c(\bound^2c_2\sqrt{\suffmeasure_\fdivfun(\fun)}+\dsterr)
\end{align*} for some absolute constant $c>0$ by Theorem~\ref{thm:general_downstream_bound} and Proposition~\ref{prop:f_div_downstream}. Therefore, it remains to show $\hypo$ is linear in $\fun(\Samviewmain)$. 
Note that $\fun(\Samviewmain)=\linmap\Samviewmain$. Let $\linmap^\dagger=\linmap^\top(\linmap\linmap^\top)^{-1}\in\R^{\dimlin\times\dimfun}$ be the generalized inverse of $\linmap$ and $\linparamtil=\linmap^{\dagger\top}\TruePar\in\R^{\dimfun}$. 
In fact, choosing $\linparamtil=\linmap^{\dagger\top}\TruePar\in\R^{\dimfun}$, we have
\begin{align*}
    \hypo(\bu)=\inprod{\TruePar}{\E[\Samviewmain|\fun(\Samviewmain)=\bu]}
    =
    \inprod{\TruePar}{\E[\linmap^\dagger\bu +(\idmat_\dimlin-\linmap^\dagger\linmap)\Samviewmain|\fun(\Samviewmain)=\bu]}
    =
     \inprod{\TruePar}{\linmap^\dagger\bu }= \inprod{\linparamtil}{\bu },
\end{align*}
where the third equality uses the assumption that $\E[(\idmat_\dimlin-\linmap^\dagger\linmap)\Samviewmain|\linmap\Samviewmain]=0$ almost surely.\\

\subsection{Proof of Corollary~\ref{cor:linear_encoder_learn}}\label{sec:pf_cor:linear_encoder_learn}
It suffices to apply Theorem~\ref{thm:main_simclr_generalization} to the setup in Corollary~\ref{cor:linear_encoder_learn}.

By the boundedness of $\Samviewone,\Samviewtwo$ and the property that $\E_{\Samviewone,\Samviewtwo\sim\Distview\times\Distview}[  \frac{ \P(\Samviewone,\Samviewtwo)}{\P(\Samviewone)\P(\Samviewtwo)}]=1$, we have
\begin{align*}
    \sup_{\Samviewone,\Samviewtwo} \frac{ \P(\Samviewone,\Samviewtwo)}{\P(\Samviewone)\P(\Samviewtwo)}\leq \frac{ \sup_{\Samviewone,\Samviewtwo} \frac{ \P(\Samviewone,\Samviewtwo)}{\P(\Samviewone)\P(\Samviewtwo)}}{ \inf_{\Samviewone,\Samviewtwo} \frac{ \P(\Samviewone,\Samviewtwo)}{\P(\Samviewone)\P(\Samviewtwo)}}
    \leq\exp(2\kappa).
\end{align*}Similarly we have $  \inf_{\Samviewone,\Samviewtwo} \frac{ \P(\Samviewone,\Samviewtwo)}{\P(\Samviewone)\P(\Samviewtwo)}\geq \exp(-2\kappa) $.

By properties of the von Mises-Fisher distribution (see e.g.,~\cite{mardia2009directional}), it can be verified that
\begin{align*}
  \frac{ \P(\Samviewone,\Samviewtwo)}{\P(\Samviewone)\P(\Samviewtwo)}
&=
\cE_{\dimfun}(\kappa)\cdot\exp
\big(\kappa
\< \Samviewone,
    \linlow_1\linlow_1^\top 
\Samviewtwo\>\big)
\cdot\indicator_{\{\Samviewone,\Samviewtwo\in
\sphere(\linlow_1)\oplus\sphere(\linlow_2)
\}}, ~~~~~~~\kappa\defn\frac{\dimfun}{(1+\lintranstd^2)^2-1},
\end{align*}
where 
\begin{align}
    \cE_{\dimfun}(\kappa)
    &\defn
    \frac{\Gamma(p/2)I_{p/2-1}(\kappa)}{(\frac{\kappa}{2})^{p/2-1}} 
   =\Gamma(p/2) \cdot
\sum_{m=0}^\infty \frac{1}{m!\Gamma(m+p/2)}\big(\frac{\kappa}{2}\big)^{2m}
=
\sum_{m=0}^\infty \frac{(p-2)!!}{(2m)!!(2m+p-2)!!}\kappa^{2m}\notag \\
&
<
\sum_{m=0}^\infty \frac{1}{(2m)!}\kappa^{2m}
<e^{\kappa},~\text{ ~~and } 
  \cE_{\dimfun}(\kappa)
  >
  \frac{\Gamma(\dimfun/2)}{0!\Gamma(\dimfun/2)}\cdot\big(\frac{\kappa}{2}\big)^{0}=1
\label{eq:def_normalizing_const_vmf}.
\end{align}
Thus,  when $\linkfun(x)=\kappa x$,
Assumption~\ref{ass:bounded_score}~and~\ref{ass:link_fun}
are satisfied with $\boundscoreconst=\exp(2\kappa),\linkfunlip=2\kappa$ (note that the condition $\kappa^{-1}\leq\linkfunlip$ is unnecessary, as from  the proof of Theorem~\ref{thm:main_simclr_generalization}, we only need  $|\linkfun(\inprod{\fun(\Samviewone)}{\Samviewtwo})|\leq \log\boundscoreconst$, which follows from the boundedness of $\funspace$).\\

\paragraph{Approximation error.} The approximation error 
$\inf_{\fun\in\funspace}\orisk_{\simclr,\batchsize}(\simscorep{\fun})-\orisk_{\simclr,\batchsize}(\truesimscore)=0$ since $\truesimscore+c_1$ is realized by $\truefun$ and the link function $\linkfun(x)=\kappa x$ for some normalizing constant $c_1$ and $\orisk_{\simclr,\batchsize}(\truesimscore)
=
\orisk_{\simclr,\batchsize}(\truesimscore+c_1).
$
\\

\paragraph{Generalization error.} 
Let $\linmapspace\defn\{\linmap\in\R^{\dimfun\times\dimlin},\opnorm{\linmap}\leq\boundlinmap\}$. 
First, for $\fun_i(\Samviewmain)=\linmap_i\Samviewmain~(i=1,2)$, since $\vecnorm{\fun_1-\fun_2}{2,\infty}\leq \opnorm{\linmap_1-\linmap_2}\cdot\vecnorm{\Samviewmain}{2}\leq
2\opnorm{\linmap_1-\linmap_2}$, it follows that
\begin{align*}
    \log\Covernum(u,\vecnorm{\cdot}{2,\infty},\funspace)
    &\leq
     \log\Covernum\Big(\frac{u}{2},\opnorm{\cdot},\linmapspace\Big)
      \leq 
     c \dimlin\dimfun\cdot\log\Big(1+\frac{4\boundlinmap}{u}\Big),
\end{align*}
where the last inequality follows from the upper bound of the covering number of a unit ball (see e.g., exercise 5.8 in~\cite{wainwright_2019}) and the assumption that $\dimfun\leq\dimlin$. Therefore,
\begin{align*}
\linkfunlip\int_{0}^{2(\log\boundscoreconst+\linkfunlip)}\sqrt{\log\Covernum(u,\vecnorm{\cdot}{2,\infty},\funspace)}du
\leq 
c\kappa \int_0^{c\kappa}\sqrt{\log\Covernum(u,\vecnorm{\cdot}{2,\infty},\funspace)}du
\leq 
c\sqrt{\dimlin\dimfun}\kappa^2\sqrt{\log\boundlinmap}.
\end{align*} 
Combining the result on the approximation error and the generalization error and applying Theorem~\ref{thm:main_simclr_generalization} yields the desired result.

\subsection{An end-to-end result on downstream linear regression
}\label{sec:linear_end_to_end}
Combining Theorem~\ref{thm:linear_downstream}~and Corollary~\ref{cor:linear_encoder_learn}, we arrive at the following result on the downstream performance of encoder learned by SimCLR.

\begin{theorem}[Linear regression using the SimCLR-trained encoder]\label{thm:linear_end_to_end}
Under the setup described in Section~\ref{sec:exm_linear_regression}, let 
 $\estfun$ be the empirical risk minimizer  obtained from Eq.~\eqref{eq:simclr-empirical-risk-minimization} in Corollary~\ref{cor:linear_encoder_learn} on a restricted function space $\funspace^{\sf o}\defn\{\fun(\Samviewmain)=\linmap\Samviewmain\in\funspace,~~\myspan( \linmap^\top) = (\myspan( \linmap^\top)\cap \myspan( \linlow_1))\oplus(\myspan( \linmap^\top)\cap \myspan( \linlow_2)) \}\subseteq \funspace$. 
In the downstream task, given $\dstsam$ i.i.d. samples  $\{(\Sam_i,\resp_i)\}_{i=1}^\dstsam$  from  $\resp=\proj_{[-\bound,\bound]}(\inprod{\Sam}{\TruePar})+\noise$, where $\Sam\sim\cN(\bzero,\idmat_\dimlin/\dimfun)$ follows the same distribution as in contrastive learning, and
$\noise\sim\cN(0,\lindststd^2)\indep\Sam$. 
    \begin{itemize}
        \item[(a).]
    Consider fitting a (random) linear model $\dstfun_{\linparam}(\Sam)=\<\estfun(\Samviewmain),\linparam\>$ by ordinary least squares
    \begin{align*}
       \estlinparam\defn \argmin_{\linparam\in\R^{\dimfun}}\Big\{\emprisk_{\lin}(\dstfun_{\linparam})\defn\frac{1}{\dstsam}\sum_{i=1}^\dstsam(\inprod{\estfun(\Samviewmain_i)}{\linparam}-\resp_i)^2\Big\},
    \end{align*}
    where $\Samviewmain=\tranfun(\Sam),\Samviewmain_i=\tranfun_i(\Sam_i)$, and $\tranfun,\{\tranfun\}_{i=1}^\dstsam$ are i.i.d. transformations from $\Disttran$ as specified in Section~\ref{sec:exm_linear_regression}. 
    Then 
    with probability at least $1-\delta$ over the SimCLR training,
    the expected risk of the truncated linear model $\dstfuntil_{\estlinparam}(\Sam)\defn\proj_{[-\bound,\bound]}(\dstfun_{\estlinparam}(\Sam))$ satisfies
    \begin{align*}
      &\quad
      \E [\risk_{\lin}(\dstfuntil_{\estlinparam})]\defn\E\big[\E_{\Sam,\resp,\tranfun}[(\resp-\dstfuntil_{\estlinparam}(\Sam))^2]\big]\\
      &
        \leq \underbrace{\lindststd^2}_{\textup{irreducible risk}}+ 
        \underbrace{c\Big(\bound^2\Big(1 +\frac{\polyshort}{{\batchsize}}\Big)
\cdot \frac{\dimlin^{1/4}\dimfun^{1/4}\log^{1/4}\boundlinmap+{{\log^{1/4}(1/\delta)}}}{{\samsize}^{1/4}}+\dsterr\Big)}_{\textup{Error from SimCLR training}}+\underbrace{c(\lindststd^2+\bound^2)\frac{\dimfun\log\dstsam}{\dstsam}}_{\textup{Error from downstream task}},
    \end{align*}
    where the outer expectation is over  ${\{(\Sam_i,\resp_i,\tranfun_i)\}_{i=1}^\samsize}$, $c>0$ is some absolute constant, $\polyshort>0$ is some constant depending polynomially on $\exp(\kappa)$,
    and $\dsterr\leq   \E[\inprod{\Sam-\Samviewmain}{\TruePar}^2].$\\
\item[(b).]
In contrast, suppose in addition $\lindststd^2\geq1,\vecnorm{\TruePar}{2}\leq\boundPar$ and $\dstsam\geq c\dimlin,
\bound \geq c(\lindststd^2+\boundPar^2){\log\dstsam}/\dimfun$ for some absolute constant $c>0$, then the truncated ordinary least squares estimator 
$\dstfuntil_\ols(\Sam)=\proj_{[-\bound,\bound]}(\inprod{\Sam}{\EstPar_\ols})$  obtained from $\{(\Sam_i,\resp_i)\}_{i=1}^\dstsam$  satisfies
 \begin{align*}
      &\quad
      \E [\risk_{\lin}(\dstfuntil_\ols)]-{\lindststd^2}\defn\E\big[\E_{\Sam,\resp}[(\resp-\dstfuntil_\ols(\Sam))^2]\big]-{\lindststd^2}
       \asymp
{\lindststd^2\frac{\dimlin}{\dstsam}},
    \end{align*} where $\asymp$ denotes matching upper and lower bounds up to absolute constant factors, and the outer expectation is over  ${\{(\Sam_i,\resp_i)\}_{i=1}^\samsize}$.
\end{itemize}
\end{theorem}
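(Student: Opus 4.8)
The plan is to treat the two parts separately: part~(a) is a composition of Corollary~\ref{cor:linear_encoder_learn} with Theorem~\ref{thm:linear_downstream}, while part~(b) is a self‑contained Gaussian random‑design least‑squares computation.

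\textit{Part (a).} First I would check that Corollary~\ref{cor:linear_encoder_learn} applies verbatim with $\funspace$ replaced by the smaller class $\funspace^{\sf o}\subseteq\funspace$: the approximation error is still zero because $\truefun$, whose coefficient matrix has column space $\myspan(\linlow_1)$ (which decomposes trivially), lies in $\funspace^{\sf o}$, and the $\vecnorm{\cdot}{2,\infty}$‑covering number of $\funspace^{\sf o}$ is at most that of $\funspace$. Hence, with probability at least $1-\delta$ over the SimCLR data, $\suffmeasure_{\kl}(\estfun)\le(1+\polyshort/\batchsize)\sqrt{(\dimlin\dimfun\log\boundlinmap+\log(1/\delta))/\samsize}$. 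Conditionally on this event I would run the argument of Theorem~\ref{thm:linear_downstream} with the fixed encoder $\fun=\estfun$ and $\truehypo(\Sam)=\inprod{\Sam}{\TruePar}$. Two points need care. First, the hypothesis $\E[(\idmat_\dimlin-\linmap^\dagger\linmap)\Samviewmain\mid\linmap\Samviewmain]=0$ is exactly what the restriction to $\funspace^{\sf o}$ buys: $\Samviewmain$ is uniform on $\sphere(\linlow_1)\oplus\sphere(\linlow_2)$, and because $\myspan(\linmap^\top)$ splits along $\myspan(\linlow_1)$ and $\myspan(\linlow_2)$, the claim reduces to the rotational invariance of the uniform law on each $\sphere(\linlow_i)$. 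Second, the response clipping is handled by noting $\E[\resp\mid\Sam]=\proj_{[-\bound,\bound]}(\inprod{\Sam}{\TruePar})$ and that $\proj_{[-\bound,\bound]}$ is $1$‑Lipschitz: the risk of the truncated estimators is never increased by clipping the response, $\dsterr=\E[(\E[\resp\mid\Samviewmain]-\E[\resp\mid\Sam])^2]\le\E[\inprod{\Sam-\Samviewmain}{\TruePar}^2]$, and $|\E[\truehypo(\Sam)\mid\tranfun(\Sam)]|\le c\bound$ since $\tranfun(\Sam)$ is supported on the compact set $\sphere(\linlow_1)\oplus\sphere(\linlow_2)$ and the Gaussian posterior mean $\E[\Sam\mid\tranfun(\Sam)]$ has bounded norm. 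With these in place, the KL–Pinsker choice $c_2=1/\sqrt2$ together with Theorem~\ref{thm:general_downstream_bound}/Proposition~\ref{prop:f_div_downstream} applied to $\estfun$, followed by Theorem~11.3 of~\cite{gyorfi2006distribution} for the downstream ordinary least squares, yield $\E[\risk_{\lin}(\dstfuntil_{\estlinparam})]-\lindststd^2\le c(\bound^2\sqrt{\suffmeasure_{\kl}(\estfun)}+\dsterr)+c(\lindststd^2+\bound^2)\dimfun\log\dstsam/\dstsam$. Substituting the Corollary bound and using $(1+\polyshort/\batchsize)^{1/2}\le1+\polyshort/\batchsize$ and $(a+b)^{1/4}\le a^{1/4}+b^{1/4}$ gives the stated three‑term inequality.

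\textit{Part (b).} Here the true regression function is $\mu(\Sam)=\proj_{[-\bound,\bound]}(\inprod{\Sam}{\TruePar})$. Since $\inprod{\Sam}{\TruePar}\sim\cN(0,\|\TruePar\|^2/\dimfun)$ has standard deviation at most $\boundPar/\sqrt\dimfun$, the assumption $\bound\ge c(\lindststd^2+\boundPar^2)\log\dstsam/\dimfun$ forces the clipping to be inactive except on an event of probability $\ll\dimlin/\dstsam$, so $\|\mu-\inprod{\cdot}{\TruePar}\|_{L^2}^2$ is negligible and the population best linear predictor $\theta^\ast$ lies within that error of $\TruePar$. For the upper bound I would use the exact Gaussian‑design identity: writing $\widehat\Sigma=\frac1\dstsam\sum_i\Sam_i\Sam_i^\top$ and $\Sigma=\idmat_\dimlin/\dimfun$, the untruncated least‑squares estimator satisfies $\E[\|\EstPar_\ols-\theta^\ast\|_\Sigma^2]$ equal to the misspecification residual plus $\lindststd^2\,\E[\tr(\widehat\Sigma^{-1}\Sigma)]/\dstsam=\lindststd^2\dimlin/(\dstsam-\dimlin-1)\asymp\lindststd^2\dimlin/\dstsam$ via the inverse‑Wishart mean, valid once $\dstsam\ge c\dimlin$; on the event $E=\{\widehat\Sigma\succeq\tfrac12\Sigma\}$ (probability $\ge1-e^{-c\dstsam}$) the truncation at $\bound$ is inactive, so $\dstfuntil_\ols=\dstfun_\ols$ there, while on $E^c$ the truncation and the finiteness of all moments of $\EstPar_\ols$ (Cauchy–Schwarz) bound the contribution by at most a polynomial times $e^{-c\dstsam/2}$, which is $\ll\lindststd^2\dimlin/\dstsam$ for $c$ a large enough absolute constant; adding the negligible clipping bias gives $\E[\risk_{\lin}(\dstfuntil_\ols)]-\lindststd^2\lesssim\lindststd^2\dimlin/\dstsam$. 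The matching lower bound comes from the same identity: on $E$ one has $\dstfuntil_\ols=\dstfun_\ols$ and, by orthogonality of $\mu-\inprod{\cdot}{\theta^\ast}$ to the linear span, $\E[(\mu(\Sam)-\dstfun_\ols(\Sam))^2]\ge\E[\|\EstPar_\ols-\theta^\ast\|_\Sigma^2]\gtrsim\lindststd^2\dimlin/\dstsam$, and the $E^c$ truncation again contributes only an exponentially small perturbation.

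\textit{Main obstacle.} The delicate part is in~(a): verifying simultaneously that the $\funspace^{\sf o}$‑restriction does not inflate the approximation error and that it produces the conditional‑mean orthogonality required by Theorem~\ref{thm:linear_downstream}, plus the mild but non‑trivial bound $\|\E[\Sam\mid\tranfun(\Sam)]\|\lesssim1$ on the compact support of $\tranfun(\Sam)$, and carefully threading the response clipping through both nested least‑squares bounds so that it only ever helps. In~(b) the only subtlety is confirming that both the clipping bias and the rare ill‑conditioned‑design term are genuinely of lower order than $\lindststd^2\dimlin/\dstsam$ — with no logarithmic factor — which is precisely where the explicit lower bound on $\bound$ and a sufficiently large constant in $\dstsam\ge c\dimlin$ are used.
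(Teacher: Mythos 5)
Your overall route coincides with the paper's: part (a) is Corollary~\ref{cor:linear_encoder_learn} (valid on $\funspace^{\sf o}$ because $\truefun\in\funspace^{\sf o}$ and the bound is a sup over a smaller class) composed with the argument of Theorem~\ref{thm:linear_downstream}, where the restriction to $\funspace^{\sf o}$ together with uniformity of $\Samviewmain$ on $\sphere(\linlow_1)\oplus\sphere(\linlow_2)$ delivers the orthogonality condition $\E[(\idmat_\dimlin-\estlinmap^\dagger\estlinmap)\Samviewmain\mid\estlinmap\Samviewmain]=0$; part (b) is an exact inverse-Wishart computation plus the observation that all clipping/truncation effects are exponentially small under the stated lower bound on $\bound$. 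One small remark on (a): the paper sidesteps your side condition $\|\E[\Sam\mid\tranfun(\Sam)]\|\lesssim 1$ entirely by taking the target to be the actual regression function $\E[\resp\mid\Sam]=\proj_{[-\bound,\bound]}(\inprod{\Sam}{\TruePar})$, so that $|\E[\resp\mid\estfun(\Samviewmain)]|\le\bound$ is immediate; with your choice $\truehypo(\Sam)=\inprod{\Sam}{\TruePar}$ you would additionally have to relate $\boundPar$ to $\bound$, which part (a) does not assume.

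The genuine gap is in your part (b) claim that ``on the event $E=\{\widehat{\Sigma}\succeq\tfrac12\Sigma\}$ the truncation at $\bound$ is inactive, so $\dstfuntil_\ols=\dstfun_\ols$ there.'' The event $E$ concerns only the training design, while $\dstfuntil_\ols(\Sam)=\proj_{[-\bound,\bound]}(\inprod{\Sam}{\EstPar_\ols})$ is evaluated at a fresh Gaussian test point $\Sam$, which is unbounded; conditional on any $\EstPar_\ols\neq 0$ the truncation fires with positive probability, so $\dstfuntil_\ols\neq\dstfun_\ols$ as functions even on $E$. For the upper bound this is harmless (drop both truncations by $1$-Lipschitzness of $\proj_{[-\bound,\bound]}$, as the paper does), but for the matching lower bound it matters: truncation can only shrink the prediction error, so you must show quantitatively that the shrinkage is negligible, i.e.\ that $\E[(\proj_{[-\bound,\bound]}(\inprod{\Sam}{\EstPar_\ols})-\inprod{\Sam}{\EstPar_\ols})^2]$ is of lower order than $\lindststd^2\dimlin/\dstsam$. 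This is exactly the paper's $\Term_4,\Term_5$ step: it integrates the Gaussian tail of $\inprod{\Sam}{\TruePar+(\Sammat^\top\Sammat)^{-1}\Sammat^\top\noisevec}$ beyond $\bound$, using the choice of $\bound$ and the sub-exponential concentration of $\|(\Sammat^\top\Sammat)^{-1}\Sammat^\top\noisevec\|_2^2$ (Hsu--Kakade--Zhang), to show the contribution is $O(\lindststd^2/\dstsam^2)$. Your proposal only accounts for the bad-design event $E^c$ and omits this control on the good event, so as written the lower half of the $\asymp$ statement is unproven; the fix is the tail estimate above rather than any change of strategy.
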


We remark that the truncation in the data generation (i.e., $\resp=\proj_{[-\bound,\bound]}(\inprod{\Sam}{\TruePar})+\noise$) is due to technical difficulties, however, we can choose the threshold $\bound$  sufficiently large, for example, $\bound=\bigO(\log\dstsam)$, so that the truncation rarely happens in the generated data. 
The restriction of the empirical risk minimization to $\funspace^{\sf o}$ ensures that the condition $\E[(\idmat_\dimlin-\linmap^\dagger\linmap)\Samviewmain|\linmap\Samviewmain]=0$ in Theorem~\ref{thm:linear_downstream} holds for any $\fun(\Samviewmain)=\linmap\Samviewmain\in\funspace^{\sf o}$. Without this restriction, when $\Suff(\estfun)$ is sufficiently small, the ERM $\estfun(\Samviewmain)=\estlinmap\Samviewmain$ only satisfies $\E[(\idmat_\dimlin-\estlinmap^\dagger\estlinmap)\Samviewmain|\estlinmap\Samviewmain]\approx0$, and the downstream error bound would contain an additional term depending on $\Suff(\estfun)$.

For the two-step estimator in (a), the first term in the SimCLR training error converges to zero as the pretraining sample size $\samsize $ increases, and the second term $\dsterr$ is negligible when either the ground truth $\E[\resp|\Sam]$ does not vary significantly (i.e., $\vecnorm{\TruePar}{2}$ is small) or the data augmentation introduces negligible error (i.e.,  $\vecnorm{\Sam-\Samviewmain}{2}$ is small). Thus, compared with the OLS estimator which has a risk of order $\bigO(\dimlin/\dstsam)$, the two-step estimator achieves a small risk of order $\bigO(\dimfun/\dstsam)$ when the error from SimCLR training is of higher order. \\

\begin{proof}[Proof of Theorem~\ref{thm:linear_end_to_end}.]
First, we have from Corollary~\ref{cor:linear_encoder_learn}  that, with probability at least $1-\delta$,  the learned encoder  satisfies
 \begin{align*}
\suffmeasure(\estfun)
\leq 
\Big(1 +\frac{\polyshort}{\batchsize}\Big)
\cdot \frac{\sqrt{\dimlin\dimfun\log\boundlinmap}+\sqrt{{\log(1/\delta)}}}{\sqrt{\samsize}},
\end{align*}
    for some constant $\polyshort>0$ depending polynomially on $\exp(\kappa)$. Note that the bound can be directly applied even though we consider the ERM on $\funspace^{\sf o}\in\funspace$ since $\truefun\in\funspace^{\sf o}$ and the proof of Corollary~\ref{cor:linear_encoder_learn} follows from an upper bound on the supremum of an empirical process, which remains valid when restricting to a smaller function space $\funspace^{\sf o}\subseteq \funspace$.

Consider the problem of fitting a linear regression using data $\{(\estfun(\Samviewmain_i),\resp_i)\}_{i=1}^\dstsam$. We have
\begin{align*}
   |\E[\resp|\estfun(\Samviewmain)]|\leq 
     \E[|\E[\resp|\Samviewmain]||\fun(\Samviewmain)]
     =
     \E[|\E[\proj_{[-\bound,\bound]}(\inprod{\Sam}{\TruePar})|\Samviewmain]||\fun(\Samviewmain)]
   \leq\bound. 
\end{align*} Thus the conditions required by Theorem 1.1 in~\cite{audibert2010linear} are satisfied and we have
\begin{align*}
     \E[\risk_\lin(\dstfuntil_{\estlinparam})]-\lindststd^2\leq 
     8(\inf_{\linparam\in\R^\dimfun}\risk_\lin(\dstfun_{\linparam})-\lindststd^2)
     +c (\bound^2+\lindststd^2)\frac{\dimfun\log\dstsam}{\dstsam}.
\end{align*} 
Following the proof of Theorem~\ref{thm:linear_downstream}, it remains to verify the condition
$\E[(\idmat_\dimlin-\estlinmap^\dagger\estlinmap)\Samviewmain|\estlinmap\Samviewmain]=0$, where $\estlinmap$ is the linear map in $\estfun$( i.e., $\estfun(\Samviewmain)=\estlinmap\Samviewmain$). This follows immediately as $\Samviewmain$ follows the uniform distribution on \ $\sphere(\linlow_1)\oplus\sphere(\linlow_2)$ and the assumption that $\estfun\in\funspace^{\sf o}$.\\

\myunder{Ordinary least squares estimator.}
Adopt the shorthand $\projshort$ for $\proj_{[-\bound,\bound]}$. When applying to a vector, we apply it coordinate-wise. Let $\Sigma=\E[\Sam\Sam^\top]=\idmat_\dimlin/\dimfun$  be the covariance matrix.
For the ordinary least squares (OLS) estimator, let $\Sammat=\begin{pmatrix}
    \Sam_1&\ldots&\Sam_\dstsam
\end{pmatrix}^\top\in\R^{\dstsam\times\dimlin}$ denote the sample matrix, $\respvec=\begin{pmatrix}
    \resp_1&\ldots&\resp_\dstsam
\end{pmatrix}^\top\in\R^{\dstsam}$ denote the response vector, and $\noisevec=\begin{pmatrix}
    \noise_1&\ldots&\noise_\dstsam
\end{pmatrix}^\top\in\R^{\dstsam}$ denote the noise vector.
By the definition of OLS, we have
$\EstPar=(\Sammat^\top\Sammat)^{-1}\Sammat^\top\respvec$ and
\begin{align*}
       \E [\risk_{\lin}(\dstfuntil_\ols)]-{\lindststd^2}
       &=
       \E[(\projshort(\inprod{\Sam}{(\Sammat^\top\Sammat)^{-1}\Sammat^\top\respvec})
       -
       \projshort(\inprod{\Sam}{\TruePar}))^2].   
\end{align*}
We claim two results used later. The proof of them can be found at the end of this section.
\begin{align}
    \E[\tr((\Sammat^\top\Sammat)^{-1}\Sigma)]
    &=\frac{\dimlin}{\dstsam-\dimlin-1},~~ \E[\tr((\Sammat^\top\Sammat)^{-1}\Sigma)^2]
    &=\frac{(\dstsam-1)\dimlin}{(\dstsam-\dimlin)(\dstsam-\dimlin-1)(\dstsam-\dimlin-3)},\label{eq:ols_claim_1}
    \\
      \E[\vecnorm{[\projshort(\Sammat\TruePar)-\Sammat\TruePar]}{2}^4]
    &\leq c\frac{{\dstsam}^2\boundPar^4}{\dimfun^2}\cdot \exp(-\frac{\bound^2}{c\boundPar^2/\dimfun})\label{eq:ols_claim_2}
\end{align}
for some absolute constant $c>0$.

Choose $\bound\geq c(\lindststd^2+\boundPar^2){\log\dstsam}/\dimfun$ for some sufficiently large absolute constant $c>0$. We then have 
$ \E[\vecnorm{[\projshort(\Sammat\TruePar)-\Sammat\TruePar]}{2}^4]\leq \dstsam^{-4}$.
On one hand, to establish the upper bound, we have
\begin{align*}
       \E [\risk_{\lin}(\dstfuntil_\ols)]-{\lindststd^2}
       &\leq
       \E[(\inprod{\Sam}{(\Sammat^\top\Sammat)^{-1}\Sammat^\top\respvec}
       -
      \inprod{\Sam}{\TruePar})^2]\\
      &\revdef
      \Term_1+\Term_2,
      \end{align*}
      where
      \begin{align*}
      \Term_1
      &\defn
       \E[(\inprod{\Sam}{(\Sammat^\top\Sammat)^{-1}\Sammat^\top\projshort(\Sammat\TruePar)}
       -
      \inprod{\Sam}{\TruePar})^2]
      \\
         &=\E[\inprod{\Sam}{(\Sammat^\top\Sammat)^{-1}\Sammat^\top[\projshort(\Sammat\TruePar)-\Sammat\TruePar]}^2]\\
         &
         \leq
\E[\opnorm{\Sammat(\Sammat^\top\Sammat)^{-1}\Sigma(\Sammat^\top\Sammat)^{-1}\Sammat^\top}\cdot\vecnorm{[\projshort(\Sammat\TruePar)-\Sammat\TruePar]}{2}^2]\\
&\overset{(i)}{\leq}
\sqrt{\E[\tr((\Sammat^\top\Sammat)^{-1}\Sigma(\Sammat^\top\Sammat)^{-1}\Sigma)]}
\cdot 
\sqrt{\E[\vecnorm{[\projshort(\Sammat\TruePar)-\Sammat\TruePar]}{2}^4]}\overset{(ii)}{\leq}  \frac{1}{\dstsam^2}\leq   \frac{\lindststd^2}{\dstsam^2}
      \end{align*} 
      and
      \begin{align*}
      \Term_2
      &\defn
      \E[(\inprod{\Sam}{(\Sammat^\top\Sammat)^{-1}\Sammat^\top\noisevec)}^2]\\ 
     &=
\lindststd^2\E[\tr((\Sammat^\top\Sammat)^{-1}\Sigma)]
\overset{(iii)}{=}
\lindststd^2\frac{\dimlin}{\dstsam-\dimlin-1}.   
\end{align*}Here, step~(i) uses Cauchy-Schwarz inequality,
step~(ii)~and~(iii) follow from claim~\eqref{eq:ols_claim_1}~and~\eqref{eq:ols_claim_2} and the choice of $\bound$. Combining the bounds on $\Term_1,\Term_2$ yields the upper bound $ \E [\risk_{\lin}(\dstfuntil_\ols)]-{\lindststd^2}
       \leq  c\lindststd^2\frac{\dimlin}{\dstsam-\dimlin-1} $.
       
To establish the lower bound, since $\E[a^2]\geq\E[b^2]+\E[(a-b)^2]-2\sqrt{\E[(a-b)^2]}\cdot \sqrt{\E[b^2]}$, it follows that
\begin{align*}
       \E [\risk_{\lin}(\dstfuntil_\ols)]-{\lindststd^2}
       &=
       \E[(\projshort(\inprod{\Sam}{(\Sammat^\top\Sammat)^{-1}\Sammat^\top\respvec})
       -
      \projshort(\inprod{\Sam}{\TruePar}))^2]\\
       &=
       \E[(\projshort(\inprod{\Sam}{\TruePar+(\Sammat^\top\Sammat)^{-1}\Sammat^\top\noisevec})
       -
      \projshort(\inprod{\Sam}{\TruePar}))^2]\\
      &\geq\Term_3-(\Term_4+\Term_5),
      \end{align*}where
      \begin{align*}
      \Term_3 
      &=  \E[(\inprod{\Sam}{\TruePar+(\Sammat^\top\Sammat)^{-1}\Sammat^\top\noisevec}-\inprod{\Sam}{\TruePar}))^2]=\E[\tr((\Sammat^\top\Sammat)^{-1}\Sigma)]=\lindststd^2\frac{\dimlin}{\dstsam-\dimlin-1}.  
      \\
      \Term_4
      &= 
     2 \sqrt{\Term_3}\sqrt{\Term_5},\\
     \Term_5
     &\defn
    \E[[(\projshort(\inprod{\Sam}{\TruePar+(\Sammat^\top\Sammat)^{-1}\Sammat^\top\noisevec})
   -\inprod{\Sam}{\TruePar+(\Sammat^\top\Sammat)^{-1}\Sammat^\top\noisevec})
       -
      (\projshort(\inprod{\Sam}{\TruePar})-\inprod{\Sam}{\TruePar})]^2]\\
      &\leq
      \E[[\projshort(\inprod{\Sam}{\TruePar+(\Sammat^\top\Sammat)^{-1}\Sammat^\top\noisevec})
   -\inprod{\Sam}{\TruePar+(\Sammat^\top\Sammat)^{-1}\Sammat^\top\noisevec}
     ]^2]+\lindststd^2\frac{1}{\dstsam^2},
      \end{align*}
     where the inequality uses claim~\eqref{eq:ols_claim_2}. To find a further upper bound of $\Term_4,\Term_5$, we first note that $({\TruePar+(\Sammat^\top\Sammat)^{-1}\Sammat^\top\noisevec})$ is independent of $\Sam$, and
     \begin{align*}
         \vecnorm{{\TruePar+(\Sammat^\top\Sammat)^{-1}\Sammat^\top\noisevec}}{2}^2\leq 
2\boundPar^{2}+2\vecnorm{(\Sammat^\top\Sammat)^{-1}\Sammat^\top\noisevec}{2}^2\leq c\lindststd^2\frac{\dimlin}{\dstsam} +v,
     \end{align*}where $v$ is some zero-mean $c\lindststd^2$-sub-Exponential variable by Theorem~1 in~\cite{hsu2011analysis}. Under our choice of $\bound$, following the proof of claim~\eqref{eq:ols_claim_2} and integrating over the sub-Exponential variable $v$, it can be verified that (when choosing the absolute constant in $\bound$ sufficiently large) $\Term_5\leq 2\lindststd^2/{\dstsam^2}$. Putting the bounds on $\Term_3,\Term_5$ (and hence $\Term_4$) together, we conclude that $ \E [\risk_{\lin}(\dstfuntil_\ols)]-{\lindststd^2}
       \geq  c\lindststd^2\frac{\dimlin}{\dstsam-\dimlin-1} $ for some absolute constant $c>0.$

\paragraph{Proof of claim~\eqref{eq:ols_claim_1}~and~\eqref{eq:ols_claim_2}.}
Claim~\eqref{eq:ols_claim_1} follows directly from properties of the inverse Wishart distribution~\cite{von1988moments}. 
For Claim~\eqref{eq:ols_claim_2}, since each coordinate of $\Sammat\TruePar$ are i.i.d. $\cN(0,\vecnorm{\TruePar}{2}^2)$, w.l.o.g., it suffice to show 
\begin{align*}
    \E[|\projshort(z)-z|^4]\leq c\exp(-\bound^2/c).
\end{align*} for $z\sim\cN(0,1)$.
Note that this follows immediately since 
\begin{align*}
    \E[|\projshort(z)-z|^4]\leq c\int_{\bound}^\infty
    s^4\exp(-s^2/2)ds\leq cs^3\exp(-s^2/2)\leq c \exp(-s^2/c).
\end{align*}

\end{proof}

\subsection{Proof of Theorem~\ref{thm:classification_main}}\label{sec:pf_thm:classification_main}

We prove Eq.~\eqref{eq:thm_img_suff_bound}~and~\eqref{eq:thm_img_dst_bound} in Appendix~\ref{sec:thm:classification_main_suff}~and~\ref{sec:thm:classification_main_dst}, respectively.

\subsubsection{Proof of Eq.~\eqref{eq:thm_img_suff_bound}}\label{sec:thm:classification_main_suff}
It suffices to apply Theorem~\ref{thm:main_chisq_generalization} to the setup in Theorem~\ref{thm:classification_main}.
With a slight abuse of notation, we use both  one-hot vectors in $\cup_{i=1}^\imgstate\{e_i\}$ and integers in $[\imgstate]$ to represent the augmented views $\Samviewmain$ and do not distinguish them in the proof. We also occasionally omit the subscripts in $\Distcls,\Distimgone$ when the meaning is clear from the context.

We claim that
\begin{align}
    \frac{\P(\Samviewone,\Samviewtwo)}{\P(\Samviewone)\P(\Samviewtwo)} = \frac{1}{2}\cdot\sum_{y=1}^{\clsnum} \frac{\Distimgone(y|\Samviewone)\cdot \Distimgone(y|\Samviewtwo)}{\Distcls(y)} + \frac{\imgstate}{2}\cdot\indicator_{\{\Samviewone=\Samviewtwo\}}\label{eq:img_suff_bd_pf_1}.
\end{align}
We will prove this claim momentarily. With this claim at hand, we have 
\paragraph{Approximation error.} 
Let \begin{align*}
\truefun(\Samviewmain)\defn\frac{1}{\sqrt2}\Big(\frac{\Distimgone(\resp=1|\imgc{1}=\Samviewmain)}{\sqrt{\Distcls(\resp=1)}},\ldots,\frac{\Distimgone(\resp=\clsnum|\imgc{1}=\Samviewmain)}{\sqrt{\Distcls(\resp=\clsnum)}},{\sqrt{\imgstate}}\Samviewmain^\top\Big)^\top.
\end{align*}
It can be verified that the parameter $(\imgmap_\star,\imgwei_\star)$ corresponding to $\truefun$ lies in $\imgparamsp.$
Therefore, the approximation error 
$\inf_{\fun\in\funspace}\vform_{\csquare}(\simscorep{\fun})-\vform_{\csquare}(\truesimscorecsq)=0$ since $\truesimscore$ is realized by $\truefun$ and the link function $\linkfun(x)= x$. 
\\

\paragraph{Generalization error.} 
Let $\imgmapspace\defn\{\imgmap\in\R^{\clsnum\times\imgstate},\imgwei\in\R,\vecnorm{\imgmap}{2,\infty}\vee|\imgwei/\sqrt{\imgstate}|\leq\boundimgmap\}$ and define the metric
$\nrmp{(\imgmap_1,\imgwei_1)-(\imgmap_2,\imgwei_2)}\defn\vecnorm{\imgmap_1-\imgmap_2}{2,\infty}\vee|(\imgwei_1-\imgwei_2)/\sqrt{\imgstate}|$ on $\imgmapspace.$ 

First,
for $\fun_i(\Samviewmain)=((\imgmap_i\Samviewmain)^\top,\imgwei_i\cdot\Samviewmain^\top)^\top~(i=1,2)$, simple calculation shows
 $\vecnorm{\fun_1-\fun_2}{2,\infty}\leq 
2(|\imgwei_1-\imgwei_1|\vee
\opnorm{\imgmap_1-\imgmap_2})
$, and therefore
\begin{align*}
    \log\Covernum(u,\vecnorm{\cdot}{2,\infty},\funspace)
    &\leq
     \log\Covernum\Big(\frac{u}{2},\nrmp{\cdot},{\imgmapspace}\Big)
     \leq
      \log\Covernum\Big(\frac{u}{2},\vecnorm{\cdot}{2,\infty}, {\imgmapspace_1}\Big)
 +
       \log\Covernum\Big(\frac{u\sqrt{\imgstate}}{2},|\cdot|,{\imgmapspace_2}\Big)
              \\
             &\leq \imgstate\clsnum\cdot\log\Big(1+\frac{4\boundimgmap}{u}\Big)
             +
            \log\Big(1+\frac{4 \boundimgmap}{u}\Big),
\end{align*}
where $\imgmapspace_1\defn\{\imgmap\in\R^{\clsnum\times\imgstate},\vecnorm{\imgmap}{2,\infty}\leq\boundimgmap\},\imgmapspace_2\defn\{\imgwei\in\R,|\imgwei|\leq\sqrt{\imgstate}\boundimgmap\}$ and 
 the last inequality follows from the upper bound of the covering number of the unit ball (see e.g., Example 5.8 in~\cite{wainwright_2019}) and the assumption that $\clsnum\leq\imgstate$.  In addition, it is readily verified that
 $\simscorep{\fun}(\Samviewone,\Samviewtwo)\in[-\boundscoreconstcsq,\boundscoreconstcsq]$ with $\boundscoreconstcsq = 4 \boundimgmap^2\imgstate=4\clsnum^2\imgstate$ for all $\Samviewone,\Samviewtwo.$
Consequently,
\begin{align*}
&\quad\linkfunlip\int_{0}^{\boundimgmap}\sqrt{\log\Covernum(u,\vecnorm{\cdot}{2,\infty},\funspace)}du\\
&\leq 
c\Bigg(\int_{0}^{\boundimgmap}\sqrt{\imgstate\clsnum\cdot\log\Big(1+\frac{4\boundimgmap}{u}\Big)}du
+
\int_{0}^{\boundimgmap}\sqrt{ \log\Big(1+\frac{4 \boundimgmap}{u}\Big)}du\Bigg)\\
&\leq 
c\sqrt{\imgstate\clsnum}\boundimgmap
=c\sqrt{\imgstate\clsnum^3}
.
\end{align*} 
Combining the result on the approximation error and the generalization error and applying Theorem~\ref{thm:main_chisq_generalization} yields the desired result.\\

\myunder{Proof of claim~\eqref{eq:img_suff_bd_pf_1}.}~
For $\Samviewone\neq\Samviewtwo$, 
by Bayes' formula, we have
\begin{align}
     \frac{\P(\Samviewone,\Samviewtwo)}{\P(\Samviewone)\P(\Samviewtwo)}
     &=
     \sum_{\img}  \frac{\P(\Samviewtwo|\Sam)\cdot\P(\Sam|\Samviewone)}{\P(\Samviewtwo)}=
      \sum_{\img}  \frac{\P(\Sam|\Samviewtwo)\cdot\P(\Sam|\Samviewone)}{\P(\img)}\notag\\
      &\overset{(i)}{=}
      2 
     \frac{\P(\Sam=(\Samviewone,\Samviewtwo)|\Samviewtwo)\cdot\P(\Sam=(\Samviewone,\Samviewtwo)|\Samviewone)}{\P(\img=(\Samviewone,\Samviewtwo))},\label{eq:img_suff_bd_pf_1_2}
\end{align}
where step~(i) follows from symmetry between $\Samviewone,\Samviewtwo.$
Moreover,
\begin{subequations}
\begin{align}
   \P(\img=(\Samviewone,\Samviewtwo)|\Samviewone)
   &=  \frac{1}{2} \P(\imgc{2}=\Samviewtwo|\imgc{1}=\Samviewone)
   =\frac{1}{2}{\sum_{y=1}^{\clsnum}{\Distimgone(\imgc{2}=\Samviewtwo|y)\cdot\Distimgone(y|\imgc{1}=\Samviewone)}}{}\notag\\
   &=
   \frac{1}{2}\sum_{y=1}^{\clsnum}\frac{{\Distimgone(y|\imgc{2}=\Samviewtwo)\cdot\Distimgone(y|\imgc{1}=\Samviewone)}}{\Distcls(y)}\cdot \Distimgone(\imgc{2}=\Samviewtwo)\notag\\
     &=
   \frac{1}{2}
\sum_{y=1}^{\clsnum}
\frac{{\Distimgone(y|\Samviewtwo)\cdot\Distimgone(y|\Samviewone)}}{\Distcls(y)}\cdot \Distimgone(\Samviewtwo),\label{eq:img_suff_bd_pf_1_1}\\
\Distimgone(\Samviewmain)
&\overset{(ii)}{=}\P(\Samviewmain),~~\text{and}\label{eq:img_suff_bd_pf_1_4}
\\
\P(\Samviewone,\Samviewtwo)
&\overset{(iii)}{=}
2\P((\Samviewone,\Samviewtwo),\img=(\Samviewone,\Samviewtwo))
=\frac{1}{2}\P(\img=(\Samviewone,\Samviewtwo))\label{eq:img_suff_bd_pf_1_5},
\end{align}
\end{subequations}
where step~(ii) follows from the generation process of the augmented views $(\Samviewone,\Samviewtwo)$, and step~(iii) follows from symmetry between $\Samviewone,\Samviewtwo.$
Substituting Eq.~\eqref{eq:img_suff_bd_pf_1_1} into Eq.~\eqref{eq:img_suff_bd_pf_1_2}, 
we find
\begin{align}
   \frac{\P(\Samviewone,\Samviewtwo)}{\P(\Samviewone)\P(\Samviewtwo)} 
   &=
   \frac{1}{2}\Big(\sum_{y=1}^{\clsnum}
\frac{{\Distimgone(y|\Samviewtwo)\cdot\Distimgone(y|\Samviewone)}}{\Distcls(y)}\Big)^2\cdot \frac{ \Distimgone(\Samviewone)\Distimgone(\Samviewtwo)}{\P(\img=(\Samviewone,\Samviewtwo))}\notag
\\
 &{=}
   \frac{1}{4}\Big(\sum_{y=1}^{\clsnum}
\frac{{\Distimgone(y|\Samviewtwo)\cdot\Distimgone(y|\Samviewone)}}{\Distcls(y)}\Big)^2\cdot \frac{ \P(\Samviewone)\P(\Samviewtwo)}{\P(\Samviewone,\Samviewtwo)}\label{eq:img_suff_bd_pf_1_3},
\end{align}
where the second equality uses Eq.~\eqref{eq:img_suff_bd_pf_1_4}~and~\eqref{eq:img_suff_bd_pf_1_5}.
 Reorganizing Eq.~\eqref{eq:img_suff_bd_pf_1_3}, we obtain
\begin{align}
     \frac{\P(\Samviewone,\Samviewtwo)}{\P(\Samviewone)\P(\Samviewtwo)} 
     =
       \frac{1}{2}\Big(\sum_{y=1}^{\clsnum}
\frac{{\Distimgone(y|\Samviewtwo)\cdot\Distimgone(y|\Samviewone)}}{\Distcls(y)}\Big) = \frac{1}{2}   \frac{\Distimgone(\Samviewone,\Samviewtwo)}{\Distimgone(\Samviewone)\Distimgone(\Samviewtwo)} ,\label{eq:img_suff_bd_pf_1_6}
\end{align}
where we recall $\Distimgone(\cdot)$ is the marginal distribution of $\imgc{1}$ (or $\imgc{2}$) and the second equality follows from Bayes' formula and the fact that $\imgc{1}\indep\imgc{2}|\clsins$.

For $\Samviewone=\Samviewtwo=z$,  using Eq.~\eqref{eq:img_suff_bd_pf_1_4} and properties of conditional distribution, we have
\begin{align*}
  \sum_{z'\in[\imgstate]} \frac{\Distimgone(\Samviewone=z,\Samviewtwo=z')}{\Distimgone(\Samviewone=z)\Distimgone(\Samviewtwo=z')} 
   &=
   \frac{1}{\Distimgone(\Samviewtwo=z)}
   =
    \frac{1}{\P(\Samviewtwo=z)}=
   \sum_{z'\in[\imgstate]} \frac{\P(\Samviewone=z,\Samviewtwo=z')}{\P(\Samviewone=z)\P(\Samviewtwo=z')} 
 .
\end{align*}
Combining this with Eq.~\eqref{eq:img_suff_bd_pf_1_6} for all $\Samviewtwo\neq\Samviewone$ and noting that the marginal $\Distimgone(\cdot)$ is the uniform distribution on $[\imgstate]$, we obtain
\begin{align*}
\frac{\P(\Samviewone=z,\Samviewtwo=z)}{\P(\Samviewone=z)\P(\Samviewtwo=z)} 
&=
\frac{1}{2}\cdot\frac{\Distimgone(\imgc{1}=z,\imgc{2}=z)}{\Distimgone(\imgc{1}=z)\Distimgone(\imgc{2}=z)} 
+\frac{1}{2}\sum_{z'\in[\imgstate]}\frac{\Distimgone(\imgc{1}=z,\imgc{2}=z')}{\Distimgone(\imgc{1}=z)\Distimgone(\imgc{2}=z')} \\
&=
\frac{1}{2}\cdot\frac{\Distimgone(\imgc{1}=z,\imgc{2}=z)}{\Distimgone(\imgc{1}=z)\Distimgone(\imgc{2}=z)} +\frac{\imgstate}{2}
\\
&=
\frac{1}{2}\cdot\sum_{y=1}^{\clsnum}
\frac{{\Distimgone(y|z)\cdot\Distimgone(y|z)}}{\Distcls(y)}+\frac{\imgstate}{2}.
\end{align*}

\subsubsection{Proof of Eq.~\eqref{eq:thm_img_dst_bound}}\label{sec:thm:classification_main_dst}
Write $\Samviewmain=\tranfun(\Sam)$. By a standard risk decomposition, we have
\begin{align*}
    \risk_\cls(\dstfun_{\estimgparam})
    &=\E[\emprisk_\cls(\dstfun_{\estimgparam})]-\E[\emprisk_\cls(\P_{\resp|\Sam}(\cdot|\Sam))]\\
    &=\E[\emprisk_\cls(\dstfun_{\estimgparam})]-\inf_{\dstfun}\E[\emprisk_\cls(\dstfun)]
    \\
    &=
    \underbrace{\inf_{\imgparam:\opnorm{\imgparamw}\vee\vecnorm{\imgparamb}{2}\leq\boundimgp}   \E_{\Sam,\tranfun}[
     \KL(\P_{\resp|\Sam}(\cdot|\Sam)||
   \dstfunbar_{\imgparam}(\fun(\Samviewmain) ) )]}_{\apperr} \\
 &\qquad\qquad\qquad  +
   \underbrace{\E[\emprisk_\cls(\dstfun_{\estimgparam})]-\inf_{\imgparam:\opnorm{\imgparamw}\vee\vecnorm{\imgparamb}{2}\leq\boundimgp}\E[\emprisk_\cls(\dstfun_\imgparam)]}_{\generr}. 
\end{align*}
We will prove that for some absolute constant $c>0$,
\begin{subequations}
    \begin{itemize}
        \item [1.] 
        \begin{align} \label{eq:img_cls_pf_apprx_claim}
        \apperr\leq c\Big(\dsterrcls+\frac{\imgstate\exp(\bound)}{\mineigrep^2}\cdot(\vform_{\fdivfun}(\simscore_{\estfunaug})-\vform_{\fdivfun}(\truesimscore))\Big),
        \end{align} and
        \item[2.]with probability at least $1-\delta$,
          \begin{align}\label{eq:img_cls_pf_gen_claim}
        \generr\leq 
        \frac{c\tempbound}{\sqrt{\dstsam}}\Big[\sqrt{\log(1/\delta)}
        +
        {\clsnum(\sqrt{\log \boundimgp}+\sqrt{\tempbound}) }\Big].
        \end{align}
    \end{itemize}
\end{subequations}

\paragraph{Approximation error.}
Let $\trueimgrep\in\R^{\clsnum\times\imgstate}$ be the representation where 
\begin{align*}\trueimgrep_{,\cdot j}=\frac{1}{\sqrt2}\Big(\frac{\Distimgone(\resp=1|\imgc{1}=j)}{\sqrt{\Distcls(\resp=1)}},\ldots,\frac{\Distimgone(\resp=\clsnum|\imgc{1}=j)}{\sqrt{\Distcls(\resp=\clsnum)}}\Big)^\top
\end{align*}
for $j\in[\imgstate]$ and 
let $\trueimgrep(\Samviewmain)$ denote the $\Samviewmain$-th column of $\trueimgrep.$
Let $\estimgrep\defn\begin{pmatrix}
    \estfun(1)&\cdots & \estfun(\imgstate)
\end{pmatrix}\in\R^{\clsnum\times\imgstate}$.

Given a representation $\estfun(\Samviewmain)$, consider the classifier
\begin{align}
    \dstfunbar_{\imgparam}(\estfun(\Samviewmain) )
    &= \softmax(\log\trun(\imgparamw\estfun(\Samviewmain)+\imgparamb)), ~~\text{where}\notag\\
    \imgparamw
    &\defn
\sqrt{2}\Pdiagcls^{1/2}(\trueimgrep\trueimgrep^\top)^{-1}\trueimgrep(\idmat_\imgstate-\projection_{\onevec_\imgstate})\estimgrep^\top,~~~
    \imgparamb\defn\frac{1}{\sqrt{2}}\Pdiagcls^{1/2}(\trueimgrep\trueimgrep^\top)^{-1}\trueimgrep\onevec_{\imgstate},\label{eq:pf_thm_img_dst_bound_claim2_def}
\end{align}
and $\Pdiagcls\defn\diag\{{\Distcls(\resp=1)},\ldots,{\Distcls(\resp=\clsnum)}\}$. 
It can be verified that 
$\opnorm{\imgparamw}\leq 2\sqrt{\imgstate}\clsnum/\mineigrep\leq\boundimgp$ and $\vecnorm{\imgparamb}{2}\leq \sqrt{\imgstate}/\mineigrep\leq\boundimgp$.
Moreover, we have by Lemma~\ref{lm:triangle_like_ineq} that
\begin{align*}
     \E_{\Sam,\tranfun}[
     \KL(\P_{\resp|\Sam}(\resp|\Sam)||
   \dstfunbar_{\imgparam}(\estfun(\Samviewmain) ) )]
   &\leq 
   2
    \E_{\Sam,\tranfun}[
     \KL(\P_{\resp|\Sam}(\cdot|\Sam)||
   \P_{\resp|\Samviewmain}(\cdot|\Samviewmain)]
   +
    \E_{\Sam,\tranfun}[
     \Renyi{2}(\P_{\resp|\Samviewmain}(\cdot|\Samviewmain)||
   \dstfunbar_{\imgparam}(\estfun(\Samviewmain) ) )]
   \\
   &\leq 2\dsterrcls
   +
   \E_{\Sam,\tranfun}[
     \Renyi{2}(\P_{\resp|\Samviewmain}(\cdot|\Samviewmain)||
   \dstfunbar_{\imgparam}(\estfun(\Samviewmain) ) )].
\end{align*}
Therefore, it remains to prove
\begin{align}
    \E_{\Sam,\tranfun}[
     \Renyi{2}(\P_{\resp|\Samviewmain}(\cdot|\Samviewmain)||
   \dstfunbar_{\imgparam}(\estfun(\Samviewmain) ) )]  
   \leq \frac{c\imgstate\exp(\bound)}{\mineigrep^2}\cdot(\vform_{\fdivfun}(\simscore_{\estfunaug})-\vform_{\fdivfun}(\truesimscore))
   \label{eq:pf_thm_img_dst_bound_claim1}.
\end{align}
Since
\begin{align}
 \E_{\Sam,\tranfun}[
     \Renyi{2}(\P_{\resp|\Samviewmain}(\cdot|\Samviewmain)||
   \dstfunbar_{\imgparam}(\estfun(\Samviewmain) ) )]
   &\leq
\E_{\Sam,\tranfun}\Big[\E_{y\sim\P_{y|\Samviewmain}(\cdot|\Samviewmain)}\frac{\P_{\resp|\Samviewmain}(y|\Samviewmain)- \dstfunbar_{\imgparam}(\estfun(\Samviewmain) )_y}{ \dstfunbar_{\imgparam}(\estfun(\Samviewmain) )_y}
   \Big]\notag\\
   &=
\E_{\Sam,\tranfun}\Big[\sum_{y\in[\clsnum]}\frac{(\P_{\resp|\Samviewmain}(y|\Samviewmain)- \dstfunbar_{\imgparam}(\estfun(\Samviewmain) )_y)^2}{ \dstfunbar_{\imgparam}(\estfun(\Samviewmain) )_y}
   \Big]\notag\\
   &\leq {\exp(\bound)}\cdot
   \E_{\Sam,\tranfun}
   \Big[\sum_{y\in[\clsnum]}{(\P_{\resp|\Samviewmain}(y|\Samviewmain)- \dstfunbar_{\imgparam}(\estfun(\Samviewmain) )_y)^2}
   \Big]\notag\\
   &=
 {\exp(\bound)}\cdot
   \E_{\Sam,\tranfun}
   \Big[\sum_{y\in[\clsnum]}{(\Distimgone(y|\imgc{1}=\Samviewmain)- \dstfunbar_{\imgparam}(\estfun(\Samviewmain) )_y)^2}
   \Big],\label{eq:pf_thm_img_dst_bound_claim2}
\end{align}
where the third line uses the definition of $\trun$ and claim~\eqref{eq:lm_improve_claim} in the proof of Lemma~\ref{lm:eq:pf_thm_img_dst_bound_claim1}, and the last line uses the fact that $\Distimgone(\resp=y|\imgc{1}=j)=\P_{\resp|\Samviewmain}(\resp=y|\Samviewmain=j)$ for all $y\in[\clsnum],j\in[\imgstate]$.  Eq.~\eqref{eq:pf_thm_img_dst_bound_claim1} follows immediately from Lemma~\ref{lm:eq:pf_thm_img_dst_bound_claim1} which gives an upper bound on the term in Eq.~\eqref{eq:pf_thm_img_dst_bound_claim2}.
\paragraph{Generalization error.}
The proof follows from a standard analysis of empirical process similar to the proof of Eq.~\eqref{eq:pf_thm_chisq_claim1} in the proof of Theorem~\ref{thm:main_chisq_generalization}. 
Thus, we only provide a sketch of the proof here. 

Let ${\imgparamsp}\defn\{\imgparam : \opnorm{\imgparamw} \vee \vecnorm{\imgparamb}{2} \leq \boundimgp\}$ and define the norm $\nrmp{\imgparam-\imgparamtil}\defn\opnorm{\imgparamw-\imgparamwtil}\vee  \vecnorm{\imgparamb-\imgparambtil}{2}$. 
First, by a triangle inequality, the fact that $\vecnorm{\log\dstfun_{\imgparam}}{\infty}\leq 2\tempbound$ (which follows from the definition of $\trun$),  
and Corollary 2.21 in~\cite{wainwright_2019} for functions with bounded differences, we have
\begin{align*}
\generr \leq 2 \E\Big[ \sup_{\imgparam\in\imgparamsp} | \emprisk_\cls(\dstfun_{\imgparam}) - \E[\emprisk_\cls(\dstfun_{\imgparam})] |\Big]+2\tempbound\frac{\sqrt{\log(1/\delta)}}{\sqrt{\dstsam}}
\end{align*} with probability at least $1-\delta.$
Let $\process_\imgparam\defn \emprisk_\cls(\dstfun_{\imgparam}) - \E[\emprisk_\cls(\dstfun_{\imgparam})] . $
Then we have
\begin{align*}\E\Big[ \sup_{\imgparam\in\imgparamsp} | \emprisk_\cls(\dstfun_{\imgparam}) - \E[\emprisk_\cls(\dstfun_{\imgparam})] |\Big]
\leq \E[|\process_{\imgparam_0}|]+\E[\sup_{\imgparam,\imgparamtil\in\imgparamsp}|\process_{\imgparam}-\process_{\imgparamtil}|]\leq
\frac{2\tempbound}{\sqrt{\dstsam}}+
\E[\sup_{\imgparam,\imgparamtil\in\imgparamsp}|\process_{\imgparam}-\process_{\imgparamtil}|].
\end{align*}
Moreover, the process $\{\process_{\imgparam}\}_{\imgparam\in\imgparamsp}$ is
  a zero-mean sub-Gaussian process with respect to the metric $\metric_{\process}(\imgparam,\imgparamtil)\defn
  2\vecnorm{\log\dstfunbar_{\imgparam}-\log\dstfunbar_{\imgparamtil}}{\infty}/\sqrt{\dstsam}
$
since $\process_{\imgparam}$ is the average of i.i.d. random variables bounded by
\begin{align*}
&\quad2\sup_{i\in[\dstsam]}|\inprod{e_{\clsins_i}}{\log \dstfunbar_{\imgparam}(\estfun(\Samviewmain_i) )}
-\inprod{e_{\clsins_i}}{\log \dstfunbar_{\imgparamtil}(\estfun(\Samviewmain_i) )}|\\
&
\leq
2\vecnorm{\log \dstfunbar_{\imgparam}(\estfun(\Samviewmain_i) )-
\log \dstfunbar_{\imgparamtil}(\estfun(\Samviewmain_i) )}{\infty}\leq\metric_{\process}(\imgparam,\imgparamtil)\cdot \sqrt{\dstsam},~~\text{and moreover}
\\
\metric_{\process}(\imgparam,\imgparamtil)
&\overset{(i)}{\leq}
c\vecnorm{\log\trun(\imgparamw\estfun(\Samviewmain)+\imgparamb)
-
\log\trun(\imgparamwtil\estfun(\Samviewmain)+\imgparambtil)}{\infty}/\sqrt{\dstsam},\\
&\overset{(ii)}{\leq} c\exp(\bound)\cdot \nrmp{\imgparam-\imgparamtil}/\sqrt{\dstsam}\revdef\widebar\bound\nrmp{\imgparam-\imgparamtil}/\sqrt{\dstsam},
\end{align*}
where step~(i) uses $\vecnorm{\log\softmax(\bu)-\log\softmax(\bv)}{\infty}\leq 2 \vecnorm{\bu-\bv}{\infty}$
and step~(ii) follows from Taylor expansion of $s(x)=\log x$, the assumption that $\vecnorm{\estfun(\Samviewmain)}{2}\leq\boundimgmap={\clsnum}$.
Therefore, we have by Dudley's integral bound (see e.g., Theorem~5.22 in~\cite{wainwright_2019}) that 
\begin{align*}
&\quad\E[\sup_{\imgparam,\imgparamtil\in\imgparamsp}|\process_{\imgparam}-\process_{\imgparamtil}|]
   \leq
   c\int_{0}^{c\tempbound/\sqrt{\dstsam}}
   \sqrt{\log\Covernum(u,\metric_{\process},\{\process_{\imgparam},\imgparam\in\imgparamsp\})} du
\leq
c\int_{0}^{c\tempbound/\sqrt{\dstsam}}
\sqrt{\log\Covernum\Big(u,\frac{\widebar\bound}{\sqrt{\dstsam}}\nrmp{\cdot},\imgparamsp\Big)} du\\
&\leq
c\int_{0}^{c\tempbound/\sqrt{\dstsam}}
\sqrt{\log\Covernum\Big(\frac{\sqrt{\dstsam}\cdot u}{{\widebar\bound}},\nrmp{\cdot},\imgparamsp\Big)} du
  \\
  &\leq
c\int^{c\tempbound/\sqrt{\dstsam}}_{0}
\Bigg(\sqrt{\log\Covernum\Big(\frac{\sqrt{\dstsam}\cdot u}{{\widebar\bound}},\opnorm{\cdot},\imgparamsp_w\Big)}
    +
\sqrt{\log\Covernum\Big(\frac{\sqrt{\dstsam}\cdot u}{{\widebar\bound}},\vecnorm{\cdot}{2},\imgparamsp_b\Big)}
    \Bigg) du
   \\
   &\leq
   c\int^{c\tempbound/\sqrt{\dstsam}}_{0}
\sqrt{\clsnum^2\cdot\log\Big(1+4\frac{\boundimgp\widebar\bound}{\sqrt{\dstsam}u}\Big)}
    du\leq c\frac{\tempbound\clsnum\log^{1/2} (\boundimgp\widebar\bound) }{\sqrt{\dstsam}}
    \leq c\frac{\tempbound\clsnum(\log^{1/2} \boundimgp+\sqrt{\tempbound}) }{\sqrt{\dstsam}}
,
\end{align*}
where $\imgparamsp_w\defn\{\imgparamw\in\R^{\clsnum\times\clsnum}:\opnorm{\imgparamw}\leq\boundimgp\}$ and $\imgparamsp_b\defn\{\imgparamb\in\R^{\clsnum}:\vecnorm{\imgparamb}{2}\leq\boundimgp\}$, and the last line uses the covering number bound of unit balls. Putting pieces together yields the desired bound.

\subsection{An auxiliary lemma}
\begin{lemma}[Upper bound on the term in Eq.~\eqref{eq:pf_thm_img_dst_bound_claim2}]\label{lm:eq:pf_thm_img_dst_bound_claim1} Let the assumptions in Theorem~\ref{thm:general_downstream_bound_cls} and the notations in its proof in  Appendix~\ref{sec:thm:classification_main_dst} hold. Assume $\vform_{\fdivfun}(\simscore_{\estfunaug})-\vform_{\fdivfun}(\truesimscore)\leq c\mineigrep^2/(\imgstate^2\clsnum)$ for some absolute constant $c>0$, then
\begin{align*}
  \E_{\Sam,\tranfun}
   \Big[\sum_{y\in[\clsnum]}{(\Distimgone(y|\imgc{1}=\Samviewmain)- \dstfunbar_{\imgparam}(\estfun(\Samviewmain) )_y)^2}
   \Big]\leq \frac{c'\imgstate}{\mineigrep^2}\cdot(\vform_{\fdivfun}(\simscore_{\estfunaug})-\vform_{\fdivfun}(\truesimscore))
\end{align*}    for some absolute constant $c'>0.$
\end{lemma}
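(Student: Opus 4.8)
The plan is to reduce the claimed inequality to a Frobenius‑norm estimate for two Gram matrices and then exploit the rank budget created by Assumption~(c) ($\imgstate\ge4\clsnum$). Throughout, let $\estimgrep\in\R^{\clsnum\times\imgstate}$ be the matrix with columns $\estfun(1),\dots,\estfun(\imgstate)$, set $\hat A:=\estimgrep^\top\estimgrep+\estimgwei^2\idmat_\imgstate$ (so $\hat A_{\Samviewmain,\Samviewmain'}=\simscorep{\estfunaug}(\Samviewmain,\Samviewmain')$), $A:=\trueimgrep^\top\trueimgrep+\tfrac\imgstate2\idmat_\imgstate$, $\Delta:=\estimgrep^\top\estimgrep-\trueimgrep^\top\trueimgrep$ (so $\rank\Delta\le2\clsnum$), and $P:=\idmat_\imgstate-\projection_{\onevec_\imgstate}$; recall $\Suff_\csquare(\simscorep{\estfunaug})=\vform_\csquare(\simscorep{\estfunaug})-\vform_\csquare(\truesimscore)$. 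By Assumption~(a) the marginal of $\Samviewmain$ is uniform on $[\imgstate]$, so $\E_{\Sam,\tranfun}[\,\cdot\,]=\tfrac1\imgstate\sum_{s\in[\imgstate]}(\,\cdot\,)$, and $\Distimgone(\cdot\mid s)$ is a probability vector with entries in $[\exp(-\bound),1]$. First I would strip off the softmax and the truncation $\trun$: since $\trun$ is $1$‑Lipschitz and fixes $\Distimgone(\cdot\mid s)$, and $\dstfunbar_\imgparam(\estfun(s))_y=\trun(\imgparamw\estfun(s)+\imgparamb)_y\big/\sum_{y'}\trun(\imgparamw\estfun(s)+\imgparamb)_{y'}$, an elementary computation (using $\sum_yq_y^2\le(\sum_yq_y)^2$ and Cauchy--Schwarz, $q=\trun(\imgparamw\estfun(s)+\imgparamb)$) gives $\sum_y\bigl(\dstfunbar_\imgparam(\estfun(s))_y-\Distimgone(y\mid s)\bigr)^2\le4\clsnum\,\|\imgparamw\estfun(s)+\imgparamb-\Distimgone(\cdot\mid s)\|_2^2$. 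Along the way I would establish the pointwise lower bound invoked in the proof of Eq.~\eqref{eq:thm_img_dst_bound},
\begin{align}\label{eq:lm_improve_claim}
\dstfunbar_\imgparam(\estfun(\Samviewmain))_y\ \ge\ \exp(-\bound)\qquad\text{for all }y\in[\clsnum],\ \Samviewmain\in[\imgstate],
\end{align}
which follows from $\trun(\cdot)\ge\exp(-\bound)$ once the softmax normalizer is shown to be at most $1+O\!\bigl(\imgstate\sqrt{\Suff_\csquare(\simscorep{\estfunaug})}/\mineigrep\bigr)$, hence at most a fixed constant under the standing smallness hypothesis.

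The second step is an exact identity. By Eq.~\eqref{eq:img_suff_bd_pf_1} and $\Distcls\equiv1/\clsnum$, the density ratio $\truesimscore$, viewed as an $\imgstate\times\imgstate$ matrix, equals $A$; the $\chi^2$ specialization of the score‑sufficiency formula (the $\chi^2$ Example of Appendix~\ref{sec:def_properties}) then gives $\Suff_\csquare(\simscorep{\estfunaug})=\tfrac1{2\imgstate^2}\|(\hat A-A)P\|_F^2$. Next I would check that the specific $(\imgparamw,\imgparamb)$ of Eq.~\eqref{eq:pf_thm_img_dst_bound_claim2_def} reconstructs $\Distimgone(\cdot\mid s)$ \emph{exactly} when $\estimgrep^\top\estimgrep=\trueimgrep^\top\trueimgrep$: from $\trueimgrep\onevec_\imgstate=\tfrac{\imgstate}{\sqrt2\sqrt\clsnum}\onevec_\clsnum$ and $\onevec_\clsnum^\top\trueimgrep=\tfrac{\sqrt\clsnum}{\sqrt2}\onevec_\imgstate^\top$ (immediate from $\sum_s\Distimgone(y\mid s)=\imgstate/\clsnum$ and $\sum_y\Distimgone(y\mid s)=1$) one obtains $\sqrt2\,\Pdiagcls^{1/2}(\trueimgrep\trueimgrep^\top)^{-1}\trueimgrep\big[P\,\trueimgrep^\top\trueimgrep+\tfrac\imgstate2\projection_{\onevec_\imgstate}\big]=\sqrt2\,\Pdiagcls^{1/2}\trueimgrep$, whose $(y,s)$ entry is $\Distimgone(y\mid s)$. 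Hence, for a general $\estimgrep$, $\imgparamw\estimgrep+\imgparamb\onevec_\imgstate^\top-\bigl(\Distimgone(y\mid s)\bigr)_{y,s}=\sqrt2\,\Pdiagcls^{1/2}(\trueimgrep\trueimgrep^\top)^{-1}\trueimgrep\,P\,\Delta$, and since $\opnorm{\Pdiagcls^{1/2}}=1/\sqrt\clsnum$, $\opnorm{(\trueimgrep\trueimgrep^\top)^{-1}\trueimgrep}=\sigma_{\min}(\trueimgrep)^{-1}\le1/\mineigrep$ (Assumption~(b)), and $\Delta,P$ symmetric, $\tfrac1\imgstate\sum_s\|\imgparamw\estfun(s)+\imgparamb-\Distimgone(\cdot\mid s)\|_2^2\le\tfrac{2}{\clsnum\mineigrep^2}\cdot\tfrac1\imgstate\|\Delta P\|_F^2$.

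The crux — and what I expect to be the main obstacle — is the third step: bounding $\|\Delta P\|_F\lesssim\imgstate\sqrt{\Suff_\csquare(\simscorep{\estfunaug})}$. Writing $(\hat A-A)P=\Delta P+(\estimgwei^2-\tfrac\imgstate2)P$, the matrix $(\estimgwei^2-\tfrac\imgstate2)P$ has $\imgstate-1$ singular values all equal to $|\estimgwei^2-\tfrac\imgstate2|$, while $\rank(\Delta P)\le2\clsnum$; by Weyl's inequality for singular values at least $\imgstate-1-2\clsnum\ge\imgstate/4$ of the singular values of $(\hat A-A)P$ are $\ge|\estimgwei^2-\tfrac\imgstate2|$ (this is exactly where $\imgstate\ge4\clsnum$ enters). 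Therefore $|\estimgwei^2-\tfrac\imgstate2|^2\le\tfrac4\imgstate\|(\hat A-A)P\|_F^2=8\imgstate\,\Suff_\csquare(\simscorep{\estfunaug})$, whence $\|\Delta P\|_F\le\|(\hat A-A)P\|_F+|\estimgwei^2-\tfrac\imgstate2|\,\|P\|_F\lesssim\imgstate\sqrt{\Suff_\csquare(\simscorep{\estfunaug})}$. Chaining the three steps yields the Lemma with an absolute constant $c'$.

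The delicate point is precisely this spectral disentanglement: the naïve estimate $|\estimgwei^2-\tfrac\imgstate2|\le\opnorm{(\hat A-A)P}\le\|(\hat A-A)P\|_F$ only gives $\imgstate\sqrt{\Suff_\csquare}$ here, losing a factor $\sqrt\imgstate$ and overshooting the stated $\imgstate/\mineigrep^2$ rate; it is the multiplicity count ($\gtrsim\imgstate$ large singular values) forced by the dimension gap $\imgstate\ge4\clsnum$ that restores the sharp scaling. Everything else — the peeling of softmax/truncation, the clean algebraic form of $(\imgparamw,\imgparamb)$, and the norm bookkeeping with Assumption~(b) — is routine once that step is in place, and the smallness hypothesis on $\Suff_\csquare(\simscorep{\estfunaug})$ is needed only to secure~\eqref{eq:lm_improve_claim}.
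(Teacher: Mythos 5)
Your proposal is correct and follows essentially the same route as the paper's own proof: peel off the softmax and truncation, use the explicit $(\imgparamw,\imgparamb)$ together with $\sigma_{\min}(\trueimgrep)\ge\mineigrep$ to reduce everything to the $(\idmat_\imgstate-\projection_{\onevec_\imgstate})$-centered Frobenius norm of the Gram difference, absorb the $\estimgwei$-identity component via the rank-$2\clsnum$ versus $\imgstate\ge 4\clsnum$ multiplicity argument, and identify the centered score difference exactly with the (quadratic) $\csquare$ excess risk. The only divergences are bookkeeping: you keep a factor $\clsnum$ in the softmax peeling and win it back through $\opnorm{\Pdiagcls^{1/2}}=\clsnum^{-1/2}$ (arguably more careful than the paper, which asserts a constant-factor peeling and discards that factor), and you phrase the spectral step via Weyl's singular-value interlacing where the paper uses the $\fronorm{\cdot}^2-\opnorm{\cdot}^2$ trick; both give the stated $c'\imgstate/\mineigrep^2$ rate.
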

\begin{proof}[Proof of Lemma~\ref{lm:eq:pf_thm_img_dst_bound_claim1}.]
The proof consists of two steps. First, we plug  the definition of $\dstfunbar_{\imgparam}$ into  Eq.~\eqref{lm:eq:pf_thm_img_dst_bound_claim1} and simplify the expression. Then, we demonstrate that the simplified expression can be further bounded using the excess risk $\vform_{\fdivfun}(\simscore_{\estfunaug})-\vform_{\fdivfun}(\truesimscore)$ of the learned encoder $\estfunaug.$\\

\myunder{Step 1: simplify the notation.}
Since \begin{align*}
   \opnorm{ \nabla_\bu \softmax(\log \bu)}= \opnorm{\frac{1}{\vecnorm{\bu}{1}}\idmat_{\clsnum}-\frac{\bu}{\vecnorm{\bu}{1}}\onevec^\top_{\clsnum}
   }\leq  \frac{1}{\vecnorm{\bu}{1}}+1 
\end{align*} for any $\bu\in\R^{\clsnum}_{>0}$,
we have
 \begin{align*}
      \E_{\Sam,\tranfun}
   \Big[\sum_{y\in[\clsnum]}{(\Distimgone(y|\imgc{1}=\Samviewmain)- \dstfunbar_{\imgparam}(\estfun(\Samviewmain) )_y)^2}
   \Big]
   &\leq 
c
    \E_{\Sam,\tranfun}
   \Big[\sum_{y\in[\clsnum]}{(\Distimgone(y|\imgc{1}=\Samviewmain)- \trun(\imgparamw\estfun(\Samviewmain)+\imgparamb)_y)^2}
   \Big]\\
   &\leq
  c
    \E_{\Sam,\tranfun}
   \Big[\sum_{y\in[\clsnum]}{(\Distimgone(y|\imgc{1}=\Samviewmain)- (\imgparamw\estfun(\Samviewmain)+\imgparamb)_y)^2}
   \Big], 
 \end{align*}
 where in the first inequality we use the claim that 
 \begin{align}\label{eq:lm_improve_claim}
|1- \vecnorm{\trun(\imgparamw\estfun(\Samviewmain)+\imgparamb)}{1}|\leq 1/2.
 \end{align}The proof of this claim is deferred to the end of the proof of the lemma.
The second inequality follows from a Taylor expansion of $s(x)=\log x$, the boundedness assumption that $\Distimgone(y|\imgc{1}=\Samviewmain)\in[\exp(-\bound),1]$, and noting the truncation $\trun(\cdot)$ reduces the $\ell_2$ error.
 Moreover, for
 any $\Samviewmain\in[\imgstate]$, by the definition of $(\imgparamw,\imgparamb)$ in Eq.~\eqref{eq:pf_thm_img_dst_bound_claim2_def}
\begin{align*}
&\quad\imgparamw\estfun(\Samviewmain)+\imgparamb
\\
&=
\sqrt{2}\Pdiagcls^{1/2}(\trueimgrep\trueimgrep^\top)^{-1}\trueimgrep[(\idmat_\imgstate-\projection_{\onevec_\imgstate})\estimgrep^\top\estfun(\Samviewmain)+ \onevec_\imgstate/2]\\
&=
\sqrt{2}\Pdiagcls^{1/2}(\trueimgrep\trueimgrep^\top)^{-1}\trueimgrep\trueimgrep^\top\trueimgrep(\Samviewmain)\\
&\quad
+
\sqrt{2}\Pdiagcls^{1/2}
(\trueimgrep\trueimgrep^\top)^{-1}\trueimgrep
[(\idmat_\imgstate-\projection_{\onevec_\imgstate})\estimgrep^\top\estfun(\Samviewmain)+ \onevec_\imgstate/2
-\trueimgrep^\top\trueimgrep(\Samviewmain)
]\\
&=
\sqrt{2}\Pdiagcls^{1/2}\trueimgrep(\Samviewmain)
+
\sqrt{2}\Pdiagcls^{1/2}(\trueimgrep\trueimgrep^\top)^{-1}\trueimgrep
[(\idmat_\imgstate-\projection_{\onevec_\imgstate})\estimgrep^\top\estfun(\Samviewmain)+ \onevec_\imgstate/2
-\trueimgrep^\top\trueimgrep(\Samviewmain)
].
\end{align*}   
Since $\sqrt{2}\Pdiagcls^{1/2}\trueimgrep(\Samviewmain)=(\Distimgone(y|\imgc{1}=\Samviewmain))_{y\in[\clsnum]}$
and $\Samviewmain\overset{d}{=}\imgc{1}$ follows the uniform distribution on $[\imgstate]$ by assumption,
it follows that
\begin{align}
      &\quad\E_{\Sam,\tranfun}
   \Big[\sum_{y\in[\clsnum]}{(\Distimgone(y|\imgc{1}=\Samviewmain)- (\imgparamw\estfun(\Samviewmain)+\imgparamb)_y)^2}
   \Big]\notag\\
   &\leq 
   2\E_{\Samviewmain}[
   \vecnorm{(\trueimgrep\trueimgrep^\top)^{-1}\trueimgrep
[(\idmat_\imgstate-\projection_{\onevec_\imgstate})\estimgrep^\top\estfun(\Samviewmain)+ \onevec_\imgstate/2
-\trueimgrep^\top\trueimgrep(\Samviewmain)]}{2}^2
]\notag\\
&\leq
\frac{2}{\mineigrep^2}
\E_{\Samviewmain}[
   \vecnorm{
[(\idmat_\imgstate-\projection_{\onevec_\imgstate})\estimgrep^\top\estfun(\Samviewmain)+ \onevec_\imgstate/2
-\trueimgrep^\top\trueimgrep(\Samviewmain)]}{2}^2
]\notag\\
&\leq
\frac{2}{\imgstate\mineigrep^2}
   \fronorm{
(\idmat_\imgstate-\projection_{\onevec_\imgstate})\estimgrep^\top\estimgrep+ \onevec_\imgstate\onevec_\imgstate^\top/2
-\trueimgrep^\top\trueimgrep}^2
\notag\\
&=
\frac{2}{\imgstate\mineigrep^2}
   \fronorm{
(\idmat_\imgstate-\projection_{\onevec_\imgstate})\estimgrep^\top\estimgrep
-(\idmat_\imgstate-\projection_{\onevec_\imgstate})\trueimgrep^\top\trueimgrep}^2\label{eq:pf_img_exm_lm_c1}
,
\end{align}
where the last equality follows since $\trueimgrep^\top(\Samviewone)\trueimgrep(\Samviewtwo)=\frac{\Distimgone(\Samviewone,\Samviewtwo)}{2\Distimgone(\Samviewone)\Distimgone(\Samviewtwo)}$
 for any $(\Samviewone,\Samviewtwo)\in[\imgstate]$, and $\frac{1}{\imgstate}\sum_{\Samviewtwo\in[\imgstate]}\frac{\Distimgone(\Samviewone,\Samviewtwo)}{\Distimgone(\Samviewone)\Distimgone(\Samviewtwo)}=1$ for all $\Samviewone\in[\imgstate]$.\\

 \myunder{Step 2: bound the expression by excess risk.}
We claim that for some absolute constant $c>0,$
\begin{subequations}
\begin{align}
  &\quad \fronorm{
(\idmat_\imgstate-\projection_{\onevec_\imgstate})\estimgrep^\top\estimgrep
-(\idmat_\imgstate-\projection_{\onevec_\imgstate})\trueimgrep^\top\trueimgrep}^2\notag\\
&\leq
c\fronorm{
(\idmat_\imgstate-\projection_{\onevec_\imgstate})(\estimgrep^\top\estimgrep+\estimgwei\idmat_\imgstate)
-(\idmat_\imgstate-\projection_{\onevec_\imgstate})(\trueimgrep^\top\trueimgrep+{\imgstate}\cdot\idmat_\imgstate/2)}^2,~~\text{and}
\label{eq:pf_img_exm_lm_c2}\\
&\quad\fronorm{
(\idmat_\imgstate-\projection_{\onevec_\imgstate})(\estimgrep^\top\estimgrep+\estimgwei\idmat_\imgstate)
-(\idmat_\imgstate-\projection_{\onevec_\imgstate})(\trueimgrep^\top\trueimgrep+{\imgstate}\cdot\idmat_\imgstate/2)}^2\notag
\\
&\leq
\imgstate^2\cdot(\vform_{\fdivfun}(\simscore_{\estfunaug})-\vform_{\fdivfun}(\truesimscore))
.
\label{eq:pf_img_exm_lm_c3}
\end{align}
\end{subequations}
Combining claim~\eqref{eq:pf_img_exm_lm_c2}~and~\eqref{eq:pf_img_exm_lm_c3} and bound~\eqref{eq:pf_img_exm_lm_c1} yields the desired bound. Now, it remains to prove these two claims.
\paragraph{Proof of claim~\eqref{eq:pf_img_exm_lm_c2}.}
Adopt the shorthand notation $\imgshort=(\estimgrep^\top\estimgrep+\estimgwei\idmat_\imgstate)
-(\trueimgrep^\top\trueimgrep+{\imgstate}\cdot\idmat_\imgstate/2)$. 
First, by the triangle inequality, it suffices to show
\begin{align*}
     &\quad \fronorm{
(\idmat_\imgstate-\projection_{\onevec_\imgstate})(\estimgwei-\imgstate/2)}^2\leq
c\fronorm{
(\idmat_\imgstate-\projection_{\onevec_\imgstate})\imgshort}^2
\end{align*}  for some absolute constant $c>0$.
Note that ${\sf rank}(\estimgrep^\top\estimgrep-\trueimgrep^\top\trueimgrep)\leq 2\clsnum$, therefore, there are at least $\imgstate/2$ singular values of  $\imgshort$ which equal $|\estimgwei-\imgstate|/2.$ As a result, we have
\begin{align*}
   \fronorm{
(\idmat_\imgstate-\projection_{\onevec_\imgstate})\imgshort}^2
&=\tr(\imgshort(\idmat_\imgstate-\projection_{\onevec_\imgstate})\imgshort)
=\fronorm{\imgshort}^2-\frac{1}{\imgstate}\onevec^\top_\imgstate\imgshort^2\onevec_\imgstate\\
&\geq
\fronorm{\imgshort}^2-\opnorm{\imgshort
}^2\geq\frac{1}{4}\fronorm{(\estimgwei-\imgstate/2)\idmat_\imgstate}^2\geq \frac{1}{4}\fronorm{
(\idmat_\imgstate-\projection_{\onevec_\imgstate})(\estimgwei-\imgstate/2)}^2.
\end{align*}

\paragraph{Proof of claim~\eqref{eq:pf_img_exm_lm_c3}.}
Adpot the shorthands $\simscore^{\sf m}_{\estfunaug}\defn \Big(\simscore_{\estfunaug}(\Samviewone,\Samviewtwo)\Big)_{\Samviewone,\Samviewtwo\in[\imgstate]}\in\R^{\imgstate\times\imgstate}$ and 
 $\truesimscore^{\sf m}\defn \Big(\truesimscore(\Samviewone,\Samviewtwo)\Big)_{\Samviewone,\Samviewtwo\in[\imgstate]}\in\R^{\imgstate\times\imgstate},$ where $\truesimscore(\Samviewone,\Samviewtwo)=\frac{\P(\Samviewone,\Samviewtwo)}{\Distview(\Samviewone)\Distview(\Samviewtwo)}$.
Since we assume $\Samviewmain\overset{d}{=}\imgc{1}$ follows the uniform distribution on $[\imgstate]$, by the definition of $\estfunaug$ and claim~\eqref{eq:img_suff_bd_pf_1} in the proof of Eq.~\eqref{eq:thm_img_suff_bound}
\begin{align*}
 &\quad \fronorm{(\idmat_\imgstate-\projection_{\onevec_\imgstate})(\estimgrep^\top\estimgrep+\estimgwei\idmat_\imgstate)
-(\idmat_\imgstate-\projection_{\onevec_\imgstate})(\trueimgrep^\top\trueimgrep+{\imgstate}\cdot\idmat_\imgstate/2)}^2
\\
&=
 \fronorm{(\idmat_\imgstate-\projection_{\onevec_\imgstate})(\simscore^{\sf m}_{\estfunaug}-\truesimscore^{\sf m})}^2\\
 &=\imgstate^2\cdot
 \Term_1,
\end{align*}
where 
\begin{align*} \Term_1\defn
\E_{\Samviewone,\Samviewtwo\sim\Distview\times\Distview}[((\simscore_{\estfunaug}-\truesimscore)(\Samviewone,\Samviewtwo)-\E_{\Samviewtwo\sim\Distview}[(\simscore_{\estfunaug}-\truesimscore)(\Samviewone,\Samviewtwo)])^2].
\end{align*}
Finally, by a second-order Taylor expansion of $\vform_{\fdivfun}(\simscore)$ at $\truesimscore$, we 
have
\begin{align*}
\vform_{\fdivfun}(\simscore_{\estfunaug})-\vform_{\fdivfun}(\truesimscore)=
\Term_1.
\end{align*}
Combining the two equalities yields the claim.

\paragraph{Proof of claim~\eqref{eq:lm_improve_claim}.}
Note that for any $\Samviewmain\in[\imgstate]$, 
\begin{align*}
 |1- \vecnorm{\trun(\imgparamw\estfun(\Samviewmain)+\imgparamb)}{1}|
 &\leq 
\sum_{y\in[\clsnum]}|\Distimgone(y|\imgc{1}=\Samviewmain)- \trun(\imgparamw\estfun(\Samviewmain)+\imgparamb)_y|\\
&\leq
\sum_{y\in[\clsnum]}|\Distimgone(y|\imgc{1}=\Samviewmain)- (\imgparamw\estfun(\Samviewmain)+\imgparamb)_y|\\
&\leq
\sqrt{\clsnum\imgstate}\cdot\sqrt{\E_{\Sam,\tranfun}\Big[\sum_{y\in[\clsnum]}(\Distimgone(y|\imgc{1}=\Samviewmain)- (\imgparamw\estfun(\Samviewmain)+\imgparamb)_y)^2\Big]},
\end{align*}
where the last line follows from the assumption that $\imgc{1}$ (and hence $\Samviewmain$) follows the uniform distribution on $[\imgstate].$ Thus, combining Eq.~\eqref{eq:pf_img_exm_lm_c1},~claim~\eqref{eq:pf_img_exm_lm_c2}~and~\eqref{eq:pf_img_exm_lm_c3} yields
\begin{align*}
   |1- \vecnorm{\trun(\imgparamw\estfun(\Samviewmain)+\imgparamb)}{1}|\leq  c\frac{\imgstate\sqrt{\clsnum}}{\mineigrep}\cdot\sqrt{\vform_\fdivfun(\simscorep{\estfunaug})
   -
   \vform_\fdivfun(\truesimscore)} \leq \frac{1}{2}.
\end{align*}

\end{proof}

\section{Additional experiments}\label{sec:clip_training}

We also conducted a small-scale experiment in the CLIP setting (language-image pretraining,~\cite{radford2021learning}) to compare the contrastive learning losses. Namely, we use the CLIP model (RN50-quickgelu, which consists of a ResNet-50 image encoder and 12-layer Transformer text encoder) on a 100K subsample of the \texttt{cc3m-wds} dataset~\cite{sharma2018conceptual} using both KL (i.e., InfoNCE) and $\chi^2$-contrastive losses (Eq.~\ref{eq:simclr-empirical-risk-minimization}~and~\ref{eq:chisq-empirical-risk-minimization}). The original dataset contains about 3.3M image-text pairs, but due to limited compute, we trained on the subsample for 32 epochs.

We evaluated the models based on their zero-shot classification performance on the ImageNet-1k validation set (1000 classes, 500 images per class). For KL and $\chi^2$-contrastive losses, we set the link functions $\tau(x)$ to $x/t$ and $e^{x/t}$, respectively, with trainable temperature $t$ initialized to 1. We used a batch size of 128 and the AdamW optimizer with weight decay 0.02, and selected the best learning rate via grid search from $\{3\mathrm{e}{-5}, 1\mathrm{e}{-4}, 3\mathrm{e}{-4}, 1\mathrm{e}{-3}\}$. The optimal learning rate for both losses is $3 \times 10^{-4}$.

We repeated the experiments three times and report the top-5 accuracy on the ImageNet-1k validation set. From Table~\ref{tab:clip_results}, we observe that in this small-scale experiment, the model trained with $\chi^2$-contrastive loss achieves zero-shot performance comparable to that of InfoNCE. We do not claim that the $\chi^2$-contrastive loss is superior, as both methods could benefit from further hyperparameter tuning (e.g., initial temperature) or larger datasets. However,  we note that $\chi^2$-contrastive loss is able to learn representations that are useful for downstream tasks, which is consistent with our theoretical findings. We leave more extensive experiments in the CLIP setting to future work.
\begin{table}[ht]
    \centering
    \caption{Top-5 zero-shot classification accuracy on ImageNet-1k.}
    \label{tab:clip_results}
    \vspace{0.5em} 
    \begin{tabular}{lc}
    \toprule
    Method     & Accuracy ($\%$) \\
    \midrule
    InfoNCE    & $7.5 \pm 0.3$ \\
    Chi-squared & $9.4 \pm 0.1$ \\
    \bottomrule
    \end{tabular}
    \end{table}

\end{document}